\documentclass[twoside,11pt]{article}

%

\usepackage{jmlr2e}
\usepackage{times}
\usepackage{bm}
\usepackage{graphicx,xspace} 
\usepackage{float,subfigure,enumitem}
\usepackage{url}
\usepackage{epstopdf}
\usepackage{algorithm,algorithmic}
\usepackage{amsmath,mathrsfs,amssymb,array,amsfonts,mathtools}
\usepackage[dvipsnames]{xcolor}
\usepackage{booktabs,multirow,rotating}
\usepackage[normalem]{ulem}
\usepackage[flushleft]{threeparttable}
\usepackage{hyperref}
\hypersetup{backref,colorlinks=true,linkcolor=red,citecolor=green,urlcolor=blue}

\graphicspath{{./}{./figures/}}
\DeclareGraphicsExtensions{.pdf,.jpeg,.png,.jpg}

\makeatletter
\DeclareRobustCommand\onedot{\futurelet\@let@token\@onedot}
\def\@onedot{\ifx\@let@token.\else.\null\fi\xspace}

\def\eg{\emph{e.g}\onedot} 
\def\ie{\emph{i.e}\onedot}

\def\etal{\emph{et al}\onedot}
\makeatother

\def\Vec#1{{\boldsymbol{#1}}}
\def\Mat#1{{\boldsymbol{#1}}}

\newcommand{\tr}{\mathop{\rm  Tr}\nolimits}



\hyphenation{op-tical net-works semi-conduc-tor}





\ShortHeadings{Analyzing Linear Dynamical Systems: From Modeling to Coding and Learning}{Huang \etal}
\firstpageno{1}

\useunder{\uline}{\ul}{}

\begin{document}
\title{Analyzing Linear Dynamical Systems:\\ From Modeling to Coding and Learning}

\author{\name Wenbing Huang \email huangwb12@mails.tsinghua.edu.cn \\
       \addr Department of Computer Science and Technology\\
       Tsinghua University\\
       Beijing, China
       \AND
       \name Fuchun Sun \email fcsun@mail.tsinghua.edu.cn \\
       \addr Department of Computer Science and Technology\\
       Tsinghua University\\
       Beijing, China
       \AND
       \name Lele Cao \email caoll12@mails.tsinghua.edu.cn \\
       \addr Department of Computer Science and Technology\\
       Tsinghua University\\
       Beijing, China
       \AND
       \name Mehrtash Harandi \email mehrtash.harandi@data61.csiro.au \\
       \addr College of Engineering and Computer Science\\
       Australian National University \& Data61, CSIRO\\
       Canberra, Australia
       }


\maketitle

\begin{abstract}
Encoding time-series with Linear Dynamical Systems (LDSs) leads to rich models with applications ranging from
dynamical texture recognition to video segmentation to name a few.
In this paper, we propose to represent LDSs with infinite-dimensional subspaces and derive an analytic solution
to obtain stable LDSs. We then devise efficient algorithms to perform sparse coding and dictionary learning on the space of infinite-dimensional subspaces. In particular, two solutions are developed to sparsely  encode an LDS. In the first method, we map the subspaces into a Reproducing Kernel Hilbert Space (RKHS) and achieve our goal through kernel sparse coding. As for the second solution, we propose to embed the infinite-dimensional subspaces into the space of symmetric matrices and formulate the sparse coding accordingly in the induced space. For dictionary learning, we encode time-series by introducing a novel concept, namely the two-fold LDSs.
We then make use of the two-fold LDSs to derive an analytical form for updating atoms of an LDS dictionary, \ie, each atom is an LDS itself. Compared to several baselines and state-of-the-art methods, the proposed methods yield higher accuracies in various classification tasks including video classification and tactile recognition.
\end{abstract}

\begin{keywords}
Linear Dynamical System (LDS), sparse coding, dictionary learning, infinite-dimensional subspace, time-series, two-fold LDS
\end{keywords}

\newpage
\section{Introduction}
\label{sec:intro}

This paper introduces techniques to encode and learn from Linear Dynamical Systems (LDSs).
Analyzing, classifying and prediction from time-series is an active and multi-disciplinary research area.
Examples include financial time-series forecasting~\cite{kim2003financial}, the analysis of video data~\cite{afsari2012group} and biomedical data~\cite{brunet2011spatiotemporal}.

Inference from time-series is \textbf{not}, by any measure, an easy task~\cite{afsari2014distances,ravichandran2013categorizing}.
A reasonable and advantageous strategy, from both theoretical and computational points of view,
is to simplify the problem by assuming that time-series  are generated by models from a specific parametric class.
Modeling time-series by LDSs is of one such attempt, especially when facing high-dimensional time-series (\eg videos).
The benefits of modeling with LDSs are twofold:
\textbf{I.} the LDS model enables a rich representation, meaning LDSs can approximate a large class of stochastic
processes~\cite{afsari2014distances},
\textbf{II.} compared to vectorial ARMA models~\cite{johansen1995likelihood}, LDSs are less prone to the curse of dimensionality. This is an attractive property for vision applications where data is usually high-dimensional.

In the past decade, sparse coding has been successfully exploited in various vision tasks such as image
restoration~\cite{mairal2008sparse}, face recognition~\cite{wright2009robust}, and texture classification~\cite{mairal2009supervised} to name a few. In sparse coding,  natural signals such as images are represented by a combination of a few basis elements (or atoms of a dictionary).
While being extensively studied, little is known on combining sparse coding techniques with LDS modeling to yield robust techniques.
In this paper, we generalize sparse coding from Euclidean spaces to the space of LDSs. In particular, we show how an LDS can be reconstructed by a superposition of LDS atoms, while the coefficients of the superposition are enforced to be sparse.
We also show how a dictionary of LDS atoms can be learned from data. Sparse coding with the learned LDS dictionary can then be seamlessly used to categorize spatio-temporal data. The importance of our work lies in the fact that to achieve our goals, we need to develop techniques that work with the non-Euclidean space of LDSs~\cite{afsari2012group,ravichandran2013categorizing}.

\subsection*{Contributions.}
\begin{enumerate}[leftmargin = *]
\item Unlike previous attempts that model LDSs with finite-dimensional subspaces, we propose to describe LDSs by infinite-dimensional subspaces. It will be shown that infinite-dimensional modeling not only encodes the full evolution of the sequences but also reduces the computational cost.

\item We propose a simple, yet effective way to stabilize the transition matrix of an LDS.
We show that while the stabilization is done in closed-form, the transition matrices maintain sufficient discriminative information to accommodate classification.

\item To perform sparse coding, we propose two techniques that work with infinite-dimensional subspaces. As for the first technique,
we map the infinite-dimensional subspaces into a Reproducing Kernel Hilbert Space (RKHS) and formulate the coding problem as a kernel sparse coding problem. In the second approach, we make use of a diffeomorphic mapping to embed the infinite-dimensional subspaces into the space of symmetric matrices and formulate the sparse coding in the induced space.

\item For dictionary learning, we propose to encode the time-series with a novel concept, namely the two-fold LDS. A two-fold LDS can be understood as an structured LDS and enables us to derive an analytical form for updating the dictionary atoms.


\end{enumerate}

Before concluding this part, we would like to highlight that the proposed algorithms outperform state-of-the-art methods on various
recognition tasks including video classification and tactile recognition; Figure~\ref{Fig:LDS-modeling} demonstrates a conceptual diagram
of the methods developed in this paper.

\begin{figure*}[!ht]
\begin{center}
\subfigure{
\includegraphics[width=14cm]{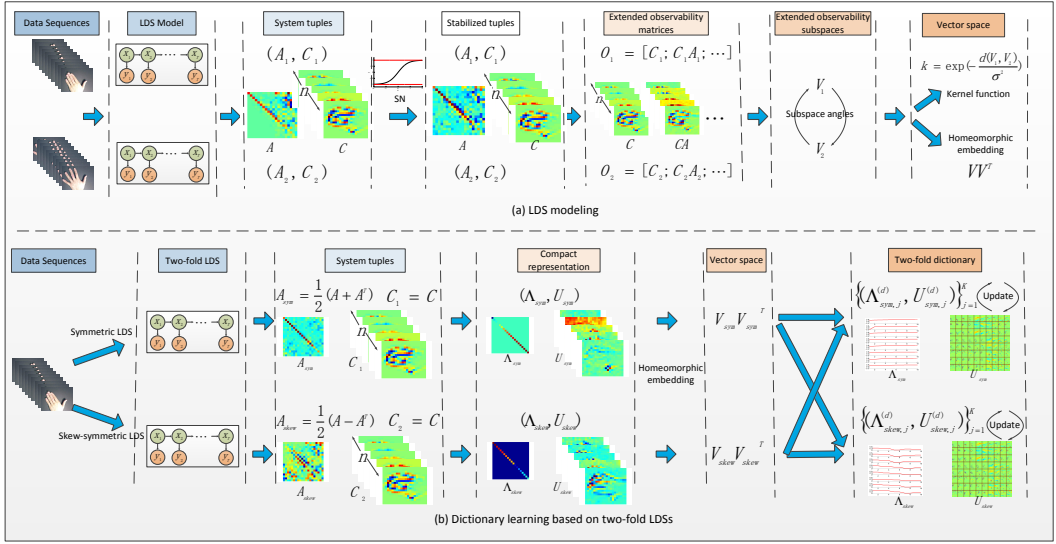}
}
\caption{The flowcharts of our LDS modeling formulation and dictionary learning algorithm.}
\label{Fig:LDS-modeling}
\end{center}
\end{figure*}

\section{Related Work}
\label{sec:related_work}

To analyze LDSs, one should start with a proper geometry.
As a result of an invariance property (will be discussed in \textsection~\ref{Sec:O}), the space of LDSs is not
Euclidean~\cite{afsari2012group,ravichandran2013categorizing}.
Worse is the fact that the proper geometry, \ie, a structure capturing the invariances imposed by LDSs,
is still not fully developed~\cite{ravichandran2013categorizing}.
Nevertheless, various metrics such as
Kullback-Leibler divergence~\cite{chan2005probabilistic}, Chernoff distance~\cite{woolfe2006shift}, Binet-Cauchy kernel~\cite{vishwanathan2007binet} and group distance~\cite{afsari2012group}
have been proposed to measure the distances between LDSs.
An alternative solution is to make use of the \emph{extended observability subspace} to represent an
LDS~\cite{saisan2001dynamic,chan2007classifying,ravichandran2013categorizing,turaga2011statistical}. Comparing LDSs is then achieved by measuring the \emph{subspace angles} as applied for example in the Martin distance~\cite{martin2000metric,de2002subspace}.

Recent studies attempt to approximate the \emph{extended observability} matrix of an LDS by a finite-dimensional subspace~\cite{turaga2011statistical,harandi2013dictionary, harandi2014extrinsic}. With such modeling, the geometry of finite-dimensional Grassmannians can be used to analyze LDSs. For example, the geometry of finite-dimensional Grassmannian is used to perform sparse coding and dictionary learning in~\cite{harandi2014extrinsic} and clustering in~\cite{turaga2011statistical}.

One obvious drawback of the aforementioned school of thought and what we avoid in this paper is modeling with finite-dimensional Grassmannians.
For example and in the context of dictionary learning, with the finite approximation, only
a dictionary of finite observability matrices can be learned.
In general, an LDS is identified by its \emph{measurement} and \emph{transition} matrices, both are necessary for further analysis
(\eg, video registration~\cite{ravichandran2011video}). To the best of our knowledge, while a finite approximation to the observability matrix can be obtained from the measurement and transition matrices, the reverse action  (\ie, obtaining the measurement and transition matrices from the finite observability matrix) is not possible. On the contrary and as we will see later, infinite-dimensional modeling enables us to learn the system parameters of the dictionary explicitly and reduces the computational cost significantly.
We draw the reader's attention to similar observations made in the context of classification (see for example
~\cite{saisan2001dynamic, chan2007classifying, ravichandran2013categorizing}), hinting that coding and dictionary learning with infinite LDSs can be fruitful.



In our preliminary study~\cite{wenbing2016sparse}, to learn an LDS dictionary, we assumed that the transition matrices of LDS models are symmetric. Encoding sequences with symmetric LDSs limits the generalization power to some extent.
In this work, we extend the methods developed in~\cite{wenbing2016sparse} and model sequences via two-fold LDSs.
A two-fold LDS enriches the symmetric models~\cite{wenbing2016sparse} by having a skew-symmetric part along with the symmetric one. We show that the system parameters of a two-fold LDS can be obtained similar to a conventional LDS, hence richer models can be obtained
without incurring heavy computations. Furthermore, by two-fold LDS modeling, we are able to derive efficient algorithms to update an LDS dictionary with two-fold LDS atoms.

\section{Notation}
Throughout the paper, bold capital letters denote matrices (\eg, $\Mat{X}$) and bold lower-case letters denote column vectors (\eg, $\Vec{x}$).
$\mathbf{I}_n$ is the $n \times n$ identity matrix,
$\|\cdot\|_1$,~$\|\cdot\|_2$ and $\|\cdot\|_F$ denote $\ell_1$, $\ell_2$ and Frobenius norm, respectively.
$\Mat{X}^{\mathrm{T}}$ computes the matrix transposition of $\Mat{X}$.
The Hermitian  transpose of a matrix is shown $^{\ast}$, \ie, $\Mat{X}^{\ast}$;
and $\tr(\Mat{X})$ is the trace operator. $[\Mat{X}]_{k}$ returns the $k$-th column of a general matrix $\Mat{X}$, and returns \emph{only} the $k$-th diagonal element if $\Mat{X}$ is diagonal; $x_{i,j}$ shows the element at $i$-th row and $j$-th column of  $\Mat{X}$;
$x_i$ returns the $i$-th element of the vector $\Vec{x}$. The symbol $1i$ is the imaginary unit.

\section{LDS Modeling}
\label{Sec: Fundamental_Concepts}
\subsection{LDSs} \label{Sec:LDS}
An LDS describes a time series through the following model:
\begin{eqnarray}\label{Eq:ARMA}
\left\{
\begin{aligned}
&\Mat{x}(t+1) &= \Mat{A}\Mat{x}(t) + \Mat{B}\Mat{v}(t), \\
&\Mat{y}(t)   &= \Mat{C}\Mat{x}(t) + \Mat{w}(t) + \overline{\Mat{y}},
\end{aligned}
\right.
\end{eqnarray}
where $\Mat{X}=[\Mat{x}(1), \cdots, \Mat{x}(\tau)] \in \mathbb{R}^{n\times \tau}$ is a sequence of $n$-dimensional hidden state vectors, and $\Mat{Y}=[\Mat{y}(1), \cdots, \Mat{y}(\tau)]\in \mathbb{R}^{m\times \tau}$ is a sequence of $m$-dimensional observed variables. The model is parameterized by $\Theta=\{\Mat{A}, \Mat{B}, \Mat{C}, \Mat{R}, \overline{\Mat{y}}\}$, where $\Mat{A}\in \mathbb{R}^{n\times n}$ is the transition matrix; $\Mat{C}\in \mathbb{R}^{m\times n}$ is the measurement matrix; $\Mat{B}\in\mathbb{R}^{n\times n_v}(n_v \leq n)$ is the noise transformation matrix; $\Mat{v}(t)\sim \mathcal{N}(0,\Mat{I}_{n_v})$ and $\Mat{w}(t)\sim \mathcal{N}(0,\Mat{R})$ denote the process and measurement noise components, respectively; $\overline{\Mat{y}}\in \mathbb{R}^m$ represents the mean of $\Mat{Y}$.

Given the observed sequence, several methods~\cite{van1994n4sid, shumway1982approach} have been proposed to learn the optimal system parameters, while the method in \cite{doretto2003dynamic} is widely used. This approach first estimates the state sequence by performing PCA on the observations, and then learns the dynamics in the state space via the least square method. We denote the centered observation matrix as $\Mat{Y}'=[\Mat{y}(1)-\overline{\Mat{y}}, \cdots, \Mat{y}(\tau)-\overline{\Mat{y}}]$. Factorizing $\Mat{Y}'$ by Singular Value Decomposition (SVD)
yields $\Mat{Y}' =\Mat{U}_Y\Mat{S}_Y\Mat{V}_Y^{\mathrm{T}}$ where $\Mat{U}_Y\in\mathbb{R}^{m\times n}$, $\Mat{U}_Y^{\mathrm{T}}\Mat{U}_Y=\Mat{I}_n$, $\Mat{S}_Y\in\mathbb{R}^{n\times n}$, $\Mat{V}_Y\in\mathbb{R}^{n \times \tau}$, and $\Mat{V}_Y^{\mathrm{T}}\Mat{V}_Y=\Mat{I}_{\tau}$.
The measurement matrix $\Mat{C}$ and the hidden states $\Mat{X}$ are then estimated as $\Mat{U}_Y$ and $\Mat{S}_Y\Mat{V}_Y^{\mathrm{T}}$, respectively. The transition matrix $\Mat{A}$ is learned by minimizing the state reconstruction error $ J^2(\Mat{A}) = \|\Mat{X}(1)-\Mat{A}\Mat{X}(0)\|_{F}^2 $, where $\Mat{X}(0)=[\Mat{x}(1),\cdots,\Mat{x}(\tau-1)]$, $\Mat{X}(1)=[\Mat{x}(2),\cdots,\Mat{x}(\tau)]$. The optimal $\Mat{A}$ is given by $\Mat{A}=\Mat{X}(1)\Mat{X}(0)^\dag$ with $\dag$ denoting the pseudo-inverse of a matrix. Other parameters of LDSs like $\Mat{B}$ and $\Mat{R}$  can be estimated when $\Mat{A}$ and $\Mat{C}$ are obtained (see for example~\cite{doretto2003dynamic} for details).

\subsection{Learning stable LDSs}
\label{Sec:stable_LDS}

An LDS is  stable if and only if the eigenvalues of its transition matrix are smaller than $1$~\cite{siddiqi2008constraint}.
The stability is an important property, because an unstable LDS can cause significant distortion to an input sequence \cite{wenbing2016learning}. Also, we will show later that having stable LDSs is required when it comes to computing the extended observability subspaces. Since the transition matrix $\Mat{A}$ leaned by the method discussed in \cite{doretto2003dynamic}
is not naturally stable, various methods have been proposed to enforce stability on LDSs \cite{lacy2003subspace1,lacy2003subspace,siddiqi2008constraint,wenbing2016learning}. All these methods iteratively update the transition matrix by minimizing the state reconstruction error while satisfying the stability constraint.
In this paper, however, we devise an analytic and light-weight method to obtain stable LDSs. To be specific, given the transition matrix $\Mat{A}$ computed by the method in \cite{doretto2003dynamic},
we factorize $\Mat{A}$ using SVD as $\Mat{A} = \Mat{U}_{\Mat{A}}\Mat{S}_{\Mat{A}}\Mat{V}_{\Mat{A}}^{\mathrm{T}}$. Then, we smooth out
the diagonal elements of $\Mat{S}_{\Mat{A}}$ to be within $(-1,1)$
using
\begin{eqnarray}\label{Eq:SN}
\Mat{S}_{\Mat{A}}^{\prime} &=& 2\mathrm{Sig}(a\Mat{S}_{\Mat{A}})-1.
\end{eqnarray}
Here $\mathrm{Sig}(\cdot)$ is the Sigmoid function and $a>0$ is a scale factor. The new transition matrix is $\Mat{A}^\prime = \Mat{U}_{\Mat{A}}\Mat{S}_{\Mat{A}}^\prime\Mat{V}_{\Mat{A}}^{\mathrm{T}}$ and is obviously \emph{strictly stable}\footnote{Here, \emph{strictly stable} means that the magnitude of the eigenvalues is strictly less than 1.}. We call this method as Soft-Normalization (SN). Compared to previous methods, SN involves no optimization process, making it scalable to large-scale problems. Besides, due to the saturation property of the Sigmoid function, SN penalizes the singular values  that are near or outside the stable bound, without heavily sacrificing the information encoded in the original transition matrix. The effectiveness of SN will be further demonstrated by our experiments in \textsection~\ref{Sec:exp_SC}.

\begin{figure}[!h]
\begin{center}
\subfigure{
\includegraphics[width=\columnwidth]{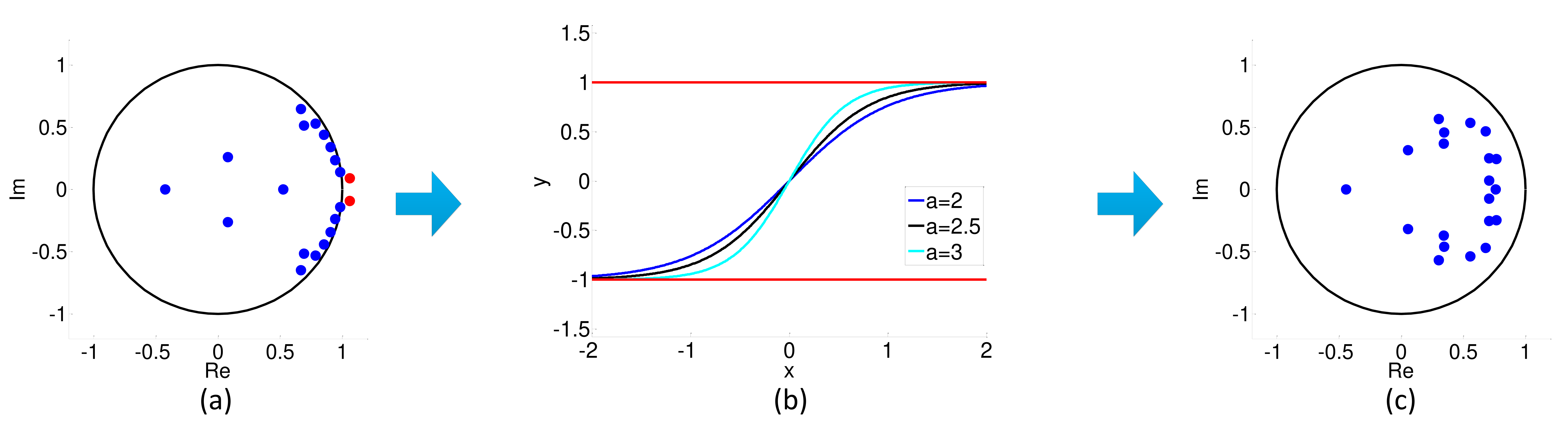}
}
\caption{Illustration of SN on a sequence from the \emph{Cambridge} dataset~\cite{kim2009canonical}. (a) The complex plane displaying the original  eigenvalues;
(b) The SN function with various scaling values $a$; (c) The distribution of the transition eigenvalues after SN stabilization with $a=2.5$.}
\label{Fig:SN}
\end{center}
\end{figure}

\subsection{LDS Descriptors}
\label{Sec:O}

In an LDS, $\Mat{C}$ describes the spatial appearance ($\Mat{C}$ needs to be orthogonal) and $\Mat{A}$ represents the dynamics ($\Mat{A}$ needs to be stable). Therefore, the tuple $(\Mat{A},\Mat{C})$ can be  used describe an LDS. A difficulty in analyzing LDSs stems from the fact that the tuple $(\Mat{A}$,$\Mat{C})$ does not lie in a vector space~\cite{turaga2011statistical}.
In particular, for any orthogonal matrix $\Mat{P}\in \mathbb{R}^{n\times n}$, $(\Mat{A}, \Mat{C})$ is equivalent to $(\Mat{P}^{\mathrm{T}}\Mat{A}\Mat{P}, \Mat{CP})$ as they represent the very same system. To circumvent this difficulty, a family of approaches opts for the extended observability subspace to represent an LDS \cite{saisan2001dynamic,chan2007classifying,ravichandran2013categorizing,turaga2011statistical}. However and to our best of knowledge, the exact form of the extended observability subspace has never been used before. Below, we will derive the exact form of the extended observability subspace in a systematic way.

\noindent\textbf{The Extended Observability Matrix.} \\
Starting from the initial state $\Mat{x}(1)$, the expected observation sequence is obtained as
\begin{eqnarray}\label{Eq:O}
[\mathrm{E}[\Mat{y}(1)];\mathrm{E}[\Mat{y}(2)];\mathrm{E}[\Mat{y}(3)];\cdots] = [\Mat{C};\Mat{C}\Mat{A};\Mat{C}\Mat{A}^{2};\cdots]\Mat{x}(1),
\end{eqnarray}
where the extended observability matrix is given by $\Mat{O}=[\Mat{C};\Mat{C}\Mat{A};\Mat{C}\Mat{A}^{2};\cdots]\in \mathbb{R}^{\infty\times n}$. It soon becomes clear that the extended observability matrix can encode the (expected) temporal evolution of the LDS till the infinity. Besides, the column space of $\Mat{O}$, \emph{i.e.}  the extended observability subspace, is invariant to the choice of the basis of the state space. Such two properties make the extended observability subspaces suitable  for describing LDSs. From here onwards, we show the set of extended observability matrices with $n$-dimensional hidden states by $\mathcal{O}(n, \infty)$.

\noindent\textbf{Inner-product between Observability Matrices.}\\
The inner-product between two extended observability matrices $\Mat{O}_1$ and $\Mat{O}_2$, \ie, $\Mat{O}_{12}=\Mat{O}_1^{\mathrm{T}}\Mat{O}_2=\sum_{t=0}^{\infty}(\Mat{A}_1^t)^{\mathrm{T}}\Mat{C}_1^{\mathrm{T}} \Mat{C}_2\Mat{A}_2^t$,  can be computed by solving the following Lyapunov Equation
\begin{equation}\label{Eq:Lyapunov1}
\Mat{A}_1^{\mathrm{T}}\Mat{O}_{12}\Mat{A}_2-\Mat{O}_{12}=-\Mat{C}_1^{\mathrm{T}}\Mat{C}_2,
\end{equation}
whose solution exists and is unique if both $\Mat{A}_1$ and $\Mat{A}_2$ are strictly stable \cite{de2002subspace}.

\noindent\textbf{Extended Observability Matrix with Orthonormal Columns.}\\
To derive the extended observability subspace, we need to orthonormalize $\Mat{O}$.
In our preliminary study, orthonormalization was done using the Cholesky decomposition~\cite{wenbing2016sparse}.
Different from~\cite{wenbing2016sparse}, we propose to perform orthonormalization using SVD as we observed that
SVD is more flexible than the Cholesky decomposition even when the system is unstable. To this end, we use SVD to
factorize $\Mat{O}^{\mathrm{T}}\Mat{O}=\Mat{U}_o\Mat{S}_o\Mat{U}_o^{\mathrm{T}}$. The columns of $\Mat{V}=\Mat{O}\Mat{L}^{-\mathrm{T}}$ are orthogonal and span the same subspace as the columns of $\Mat{O}$, where the factor matrix $\Mat{L}=\Mat{U}_o\Mat{S}_o^{1/2}$.
Thus, $\Mat{V}$ is the orthonormal extended observability matrix. We denote the set of the orthogonal extended observability matrices
by $\mathcal{V}(n,\infty)$.

\noindent\textbf{Extended Observability Subspaces.}\\
Let $\mathcal{S}(n,\infty)$ be the set of extended observability subspaces. $\mathcal{S}(n,\infty)$ is the quotient space of $\mathcal{V}(n,\infty)$ with the equivalence relation $\sim$ being: for any $\Mat{V}_1,\Mat{V}_2\in \mathcal{V}(n,\infty)$, $\Mat{V}_1\sim\Mat{V}_2$ if and only if $\mathrm{Span}(\Mat{V}_1) = \mathrm{Span}(\Mat{V}_2)$, where $\mathrm{Span}(\Mat{V})$ denotes the subspace spanned by the columns of $\Mat{V}$.
From the definition of $\mathcal{S}(n,\infty)$, one can conclude immediately that $\mathcal{S}(n,\infty)$ is a special
case of infinite Grassmannian $\mathcal{G}(n,\infty)$~\cite{ye2014distance} with an extra intrinsic structure due to the stability and orthonormality constraints on $\Mat{A}$ and $\Mat{C}$, respectively.

A valid distance between subspaces must be a function of their principle angles~\cite{ye2014distance}. The definition of principle angles for extended observability subspaces have been provided in \cite{de2002subspace}, where the $n$ principal angles $0\leq\alpha_1\leq\alpha_2\leq\cdots\leq\alpha_n\leq\pi/2$ between two extended observability subspaces $\mathrm{Span}(\Mat{V}_1)$ and $\mathrm{Span}(\Mat{V}_2)$ are defined recursively by
\begin{eqnarray}\label{Eq:subspace_angles}
\begin{aligned}
 & \cos\alpha_k  & = & \max_{\Mat{u}_k\in\mathrm{Span}(\Mat{V}_1)}\max_{\Mat{v}_k\in\mathrm{Span}(\Mat{V}_2)}\Mat{u}_k^{\mathrm{T}}\Mat{v}_k\\
 & ~~~\mathrm{s.t. } & &
 \Mat{u}_k^{\mathrm{T}}\Mat{u}_k=\Mat{v}_k^{\mathrm{T}}\Mat{v}_k = 1 \\
 &&& \Mat{u}_k^{\mathrm{T}}\Mat{u}_j=\Mat{v}_k^{\mathrm{T}}\Mat{v}_j = 0, j=1,2,\cdots,k-1
\end{aligned}
\end{eqnarray}
Principal angles can be calculated more efficiently using $ \cos\alpha_k = s_k(\Mat{V}_1^{\mathrm{T}}\Mat{V}_2)$,
with $s_k(\cdot)$ denoting the $k$-th singular values and $\Mat{V}_1^{\mathrm{T}}\Mat{V}_2=\Mat{L}_1^{-1}\Mat{O}_1^{\mathrm{T}}\Mat{O}_2\Mat{L}_2^{\mathrm{-T}}$. Having principal angles at our disposal, various distances between LDSs can be defined.
A widely used one is the Martin distance~\cite{martin2000metric} defined as:
\begin{eqnarray}\label{Eq:Martin}
d(\Mat{V}_1,\Mat{V}_2)=\sqrt{-\log\prod_{k=1}^n\cos^2\alpha_k}.
\end{eqnarray}

\section{Sparse Coding with LDSs}
\label{Sec:SC}

In this section, we will show how a given LDS can be sparsely coded if an LDS dictionary (\ie, each atom is an LDS)
is at our disposal. We recall that the extended observability subspaces are points on an infinite-dimensional Grassmannian. As such,
conventional sparse coding techniques designed for vectorial data cannot be used  to solve the coding problem.
Here, we propose two strategies to perform sparse coding on LDSs.
First, we propose a kernel solution by making use of an implicit mapping from the infinite-dimensional Grassmannian to RKHS.
In doing so, we design a subspace kernel function which enables us to perform kernel sparse coding~\cite{gao2010kernel} on LDSs. One drawback of the kernel solution is that, the feature map is implicit. This makes learning an LDS dictionary intractable, when the goal is to have explicit atoms. We thus develop another method for sparse coding by embedding the infinite Grassmannian into the space of symmetric matrices
by a diffemorphic mapping. We will then show how sparse coding and dictionary learning can be performed in the space defined by the diffemorphic mapping.

\subsection{Kernel Sparse Coding}\label{Sec:Kernel_SC}
The idea of sparse coding is to approximate a given sample $\Mat{x} \in \mathbb{R}^{n}$ with atoms $\{\Mat{d}_j\in \mathbb{R}^{n}\}_{j=1}^J$ of an over-complete dictionary.  Moreover, we want the solution to be sparse, meaning that only a few elements of the solution are non-zero.
In the Euclidean space, sparse codes can be obtained by solving the following optimization problem
\begin{eqnarray}\label{Eq:Euler_SC}
\min_{\Mat{z}} l(\Mat{z},\Mat{d}) = \| \Mat{x}-\sum_{j=1}^{J}z_j\Mat{d}_j \|_2^2 + \lambda\| \Mat{z}\|_1,
\end{eqnarray}
where $\lambda$ is the sparsity penalty factor.

Directly plugging the extended observability matrices into~\eqref{Eq:Euler_SC} is obviously not possible. To get around the difficulty
of working with points in $\mathcal{S}(n,\infty)$,  we propose to implicitly map the subspaces in $\mathcal{S}(n,\infty)$ to an RKHS
$\mathcal{H}$. Let us denote the implicit mapping and its associated kernel by $\phi: \mathcal{S}(n,\infty)\rightarrow\mathcal{H}$
and $k(\cdot,\cdot):\mathcal{S}(n,\infty) \times \mathcal{S}(n,\infty) \to \mathbb{R}$ with the property
$k(\Mat{X}_1,\Mat{X}_2) = \phi(\Mat{X}_1)^T \phi(\Mat{X}_2)$, respectively. This enables us to formulate sparse coding in $\mathcal{H}$ as
\begin{eqnarray}\label{Eq:Kernel_SC1}
\min_{\Mat{z}} l(\Mat{z},\Mat{D}) = \| \phi(\Mat{X})-\sum_{j=1}^{J}z_j\phi(\Mat{D}_j) \|_2^2 + \lambda
\| \Mat{z} \|_1,
\end{eqnarray}
where $\Mat{X}\in\mathcal{S}(n,\infty)$ and $\{\Mat{D}_j\in\mathcal{S}(n,\infty)\}_{j=1}^J$ is the LDS dictionary.
The problem in~(\ref{Eq:Kernel_SC1}) can be simplified into~\cite{gao2010kernel}
\begin{eqnarray}\label{Eq:Kernel_SC2}
\min_{\Mat{z}} l(\Mat{z},\Mat{D}) = \Mat{z}^{\mathrm{T}}\Mat{K}_{\Mat{D}}\Mat{z}-2\Mat{z}^{\mathrm{T}}\Mat{k}_{\Mat{XD}} + \lambda\parallel\Mat{z}\parallel_1,
\end{eqnarray}
with $\Mat{K}_{\Mat{D}}\in\mathbb{R}^{J \times J}, \Mat{K}_{\Mat{D}}(i,j)=k(\Mat{D}_i, \Mat{D}_j)$ and
$\Mat{k}_{\Mat{XD}}\in\mathbb{R}^J, \Mat{k}_{\Mat{XD}}(j)=k(\Mat{X},\Mat{D}_j)$. In our experiments, we use the Radial-Basis-Function (RBF) kernel based on Martin distance as
\begin{eqnarray}\label{Eq:Gaussian_Kernel}
k(\Mat{X}_1,\Mat{X}_2) = \exp(-\frac{d^2(\Mat{X}_1,\Mat{X}_2)}{\sigma^2}),
\end{eqnarray}
where $d(\Mat{X}_1,\Mat{X}_2)$ is the Martin distance defined in Eq.~\eqref{Eq:Martin}. Once the kernel values are computed,  methods like homotopy-LARS algorithm~\cite{donoho2008fast} can be used to solve \eqref{Eq:Kernel_SC2}. The RBF kernel was also used by Chan \etal~\cite{chan2007classifying}; however, the positive definiteness of this kernel was neither proven nor
discussed. Fortunately, this kernel are always positive definite for the experiments in the paper.

\subsection{Sparse Coding by Diffemorphic Embedding}\label{Sec:diffemorphic_embedding}

Different form the kernel-based method mentioned in the previous section, we construct an explicit and diffemorphic mapping to facilitate sparse coding. Inspired by the method proposed in~\cite{harandi2014extrinsic}, we embed $\mathcal{S}(n,\infty)$ into the space of symmetric matrices via the mapping $\Pi:\mathcal{S}(n,\infty)\rightarrow \text{Sym}(\infty),\Pi(\Mat{V})=\Mat{V}\Mat{V}^{\mathrm{T}}$. The metric on Sym($\infty$) is naturally induced by the Frobenius norm: $\| \Mat{W} \|_{F}^2=\text{Tr}(\Mat{W}^{\mathrm{T}}\Mat{W})$, $\Mat{W}\in\text{Sym}(\infty)$. We note that in general the Frobenius norm of a point on $\mathrm{Sym}(\infty)$ is infinite as a result of infinite dimensionality. However, the Frobenius norm of an embedded point $\Pi(\mathcal{S}(n,\infty))$ is guaranteed to be finite. To prove this, we make use of the following theorem.

\begin{theorem}\label{Th:linear_combination}
Suppose $\Mat{V}_1,\Mat{V}_2,\cdots,\Mat{V}_M\in \mathcal{S}(n,\infty)$, and  $y_1,y_2,\cdots,y_M\in \mathbb{R}$. Then,
\begin{eqnarray}
\nonumber \|\sum_{i=1}^{M}y_i\Pi(\Mat{V}_i)\|_{F}^2 &=& \sum_{i,j=1}^{M}y_iy_j\|\Mat{V}_i^{\mathrm{T}}\Mat{V}_j\|_{F}^2,
\end{eqnarray}
\end{theorem}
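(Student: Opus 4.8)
The plan is to expand the squared Frobenius norm on the left-hand side directly via the trace and then exploit bilinearity together with the cyclic property of the trace. Writing $\Pi(\Mat{V}_i) = \Mat{V}_i\Mat{V}_i^{\mathrm{T}}$, each summand is symmetric, so I would start from
\[
\Big\|\sum_{i=1}^{M}y_i\Pi(\Mat{V}_i)\Big\|_{F}^2 = \tr\Big[\Big(\sum_{i=1}^{M}y_i\Mat{V}_i\Mat{V}_i^{\mathrm{T}}\Big)\Big(\sum_{j=1}^{M}y_j\Mat{V}_j\Mat{V}_j^{\mathrm{T}}\Big)\Big],
\]
and distribute the product so that the expression becomes $\sum_{i,j=1}^{M}y_iy_j\,\tr\!\big[\Mat{V}_i\Mat{V}_i^{\mathrm{T}}\Mat{V}_j\Mat{V}_j^{\mathrm{T}}\big]$. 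This is a finite double sum, so interchanging the sums with the trace is unproblematic once each individual trace is shown to be finite.

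The key step is to simplify the term $\tr[\Mat{V}_i\Mat{V}_i^{\mathrm{T}}\Mat{V}_j\Mat{V}_j^{\mathrm{T}}]$. Applying the cyclic property of the trace, I would rewrite it as $\tr[\Mat{V}_j^{\mathrm{T}}\Mat{V}_i\Mat{V}_i^{\mathrm{T}}\Mat{V}_j] = \tr[(\Mat{V}_i^{\mathrm{T}}\Mat{V}_j)^{\mathrm{T}}(\Mat{V}_i^{\mathrm{T}}\Mat{V}_j)] = \|\Mat{V}_i^{\mathrm{T}}\Mat{V}_j\|_F^2$. Substituting this identity back yields exactly the right-hand side $\sum_{i,j=1}^{M}y_iy_j\|\Mat{V}_i^{\mathrm{T}}\Mat{V}_j\|_F^2$, which completes the argument.

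The only genuine subtlety, and what I expect to be the main obstacle, is that the $\Mat{V}_i$ are \emph{infinite-dimensional} ($\Mat{V}_i\in\mathbb{R}^{\infty\times n}$), so one must justify that the trace manipulations and the cyclic reordering are legitimate and that each quantity is finite rather than a formal infinite sum. The resolution is the crucial observation that the problematic infinite index is always contracted away: each cross term $\Mat{V}_i^{\mathrm{T}}\Mat{V}_j$ is a \emph{finite} $n\times n$ matrix, and its entries are exactly the orthonormalized observability inner products $\Mat{L}_i^{-1}\Mat{O}_i^{\mathrm{T}}\Mat{O}_j\Mat{L}_j^{-\mathrm{T}}$, whose existence and finiteness are guaranteed by the strict stability of the transition matrices through the Lyapunov equation~\eqref{Eq:Lyapunov1}. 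Hence the cyclic step effectively reduces every $\infty\times\infty$ object to a finite $n\times n$ computation, and all the trace identities used are the ordinary finite-dimensional ones applied after this reduction. I would therefore phrase the proof so that the reordering into the form $\tr[(\Mat{V}_i^{\mathrm{T}}\Mat{V}_j)^{\mathrm{T}}(\Mat{V}_i^{\mathrm{T}}\Mat{V}_j)]$ is carried out first, making the finiteness manifest and the remaining steps purely algebraic.
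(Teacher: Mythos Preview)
Your proposal is correct and follows the same algebraic core as the paper's proof: expand the squared Frobenius norm as a trace, distribute by bilinearity, and use the cyclic property to collapse $\tr[\Mat{V}_i\Mat{V}_i^{\mathrm{T}}\Mat{V}_j\Mat{V}_j^{\mathrm{T}}]$ to $\|\Mat{V}_i^{\mathrm{T}}\Mat{V}_j\|_F^2$. The only noteworthy difference is in how the infinite-dimensional subtlety is handled: the paper works with the finite truncations $\Mat{V}_i(t)=\Mat{O}_i(t)\Mat{L}_i^{-\mathrm{T}}$, carries out exactly your trace manipulations in finite dimension for each $t$, and then passes to the limit $t\to\infty$, invoking the Lyapunov equation to ensure $\Mat{O}_i(t)^{\mathrm{T}}\Mat{O}_j(t)\to\Mat{O}_{ij}$. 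Your argument instead works directly with the infinite objects and appeals to the fact that $\Mat{V}_i^{\mathrm{T}}\Mat{V}_j\in\mathbb{R}^{n\times n}$ is finite; this is morally the same, but the paper's truncate-then-limit device is what actually licenses the cyclic reordering and the interchange of trace with the infinite contraction, which you leave as an informal remark.
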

\begin{proof}
The proof is provided in the appendix.
\end{proof}
\noindent Based on this Theorem, the following corollary can be drawn:
\begin{corollary}\label{metric}
For any $\Mat{V}_1,\Mat{V}_2\in\mathcal{S}(n,\infty)$, we have
\begin{eqnarray}
\nonumber \|\Pi(\Mat{V}_1)-\Pi(\Mat{V}_2)\|_{F}^2 &=& 2\left(n -\|\Mat{V}_1^{\mathrm{T}}\Mat{V}_2\|_F^2\right).
\end{eqnarray}
Furthermore, $\| \Pi(\Mat{V}_1)-\Pi(\Mat{V}_2)\|_{F}^2=2\sum_{k=1}^{n}\sin^2\alpha_k$, where $\{\alpha_k\}_{k=1}^n$ are subspace angles between $\Mat{V}_1$ and $\Mat{V}_2$. This also indicates that $0 \leq \| \Pi(\Mat{V}_1)-\Pi(\Mat{V}_2)\|_{F}^2 \leq 2n$.
\end{corollary}

We note that the mapping $\Pi(\Mat{V})$ is diffeomorphism (a one-to-one, continuous, and differentiable mapping with a continuous and differentiable inverse), meaning that $\mathcal{S}(n,\infty)$ is topologically isomorphic to the embedding $\Pi(\mathcal{S}(n,\infty))$,
\ie, $\mathcal{S}(n,\infty)\cong\Pi(\mathcal{S}(n,\infty))$.

With $\Pi(\cdot)$ at our disposal, the sparse coding can be formulated as $\min\limits_{\Mat{z}}l(\Mat{z},\Mat{D})$ where,
\begin{eqnarray}\label{Eq:SC1}
l(\Mat{z},\Mat{D}) = \| \Mat{X}\Mat{X}^{\mathrm{T}}-\sum_{j=1}^{J}z_j\Mat{D}_j\Mat{D}_j^{\mathrm{T}}\|_{F}^2+\lambda\|\Mat{z}\|_1,
\end{eqnarray}
Here $\Mat{X}\in\mathcal{S}(n,\infty)$, $\{\Mat{D}_j\in\mathcal{S}(n,\infty)\}_{j=1}^J$ is the LDS dictionary,
and $\Mat{z}=[z_1;z_2;\cdots;z_J]$ is the vector of sparse codes.
We note that by making use of Thm.~\ref{Th:linear_combination}, we can rewrite $l(\Mat{z},\mathbb{D})$ as
\begin{eqnarray}\label{Eq:SC2}
l(\Mat{z},\mathbb{D}) = \Mat{z}^{\mathrm{T}}\Mat{K}_{\Mat{D}}\Mat{z}-2\Mat{z}^{\mathrm{T}}\Mat{k}_{\Mat{X}\Mat{D}}+\lambda\parallel\Mat{z}\parallel_1,
\end{eqnarray}
where $\Mat{K}_{\Mat{D}}(i,j)=\parallel\Mat{D}_i^{\mathrm{T}}\Mat{D}_j\parallel_F^2$ and $\Mat{k}_{\Mat{X}\Mat{D}}(j) =\parallel\Mat{X}^{\mathrm{T}}\Mat{D}_j\parallel_F^2$. This problem is convex as $\Mat{K}_{\Mat{D}}$ is positive semi-definite.

Interestingly, Eq.~\eqref{Eq:SC2} has a very similar form to the general kernel sparse coding presented in~\eqref{Eq:Kernel_SC2}.
The difference is that here we explicitly construct the feature mapping, \ie, $\Pi(\mathcal{S}(n,\infty))$.
The kernel function $k(\Mat{X}_1,\Mat{X}_2)=\|\Mat{X}_1^{\mathrm{T}}\Mat{X}_2\|_F^2$, known as projection kernel~\cite{harandi2014expanding}
is well-known for finite-dimensional Grassmannian. We emphasize that this mapping will enable us to devise an algorithm for dictionary learning in \textsection~\ref{Sec:DL}.

\subsection{Prediction with Labelled Dictionary}\label{Sec:reconstruction_error_approach}
Generally speaking, sparse codes obtained by any of the aforementioned methods can be fed into a generic classifier (resp. regressor) for
classification (resp. regression) purposes. However, if a labeled dictionary (\ie, a dictionary where each atom comes with a label)
is at our disposal, the reconstruction error can be utilized for classification. This is inspired by the Sparse Representation  Classifier (SRC)
introduced in~\cite{wright2009robust}. Below, we customize SRC for our case of interest, \ie, LDSs. Let us denote all the
dictionary atoms belonging to class $c$ by $\{\Mat{D}_k^{(c)}\}_{k=1}^{J_c}$, with $J_c$ showing the total number of atoms in class $c$.
The reconstruction error of a sample $\Mat{X}$ with respect to class $c$ is defined as
\begin{eqnarray}\label{Eq:Reconstruction_error}
e_c(\Mat{X}) &=&  \Big\| \Pi(\Mat{X}) - \sum_{j=1}^{J_c} \Mat{z}_{k}^{(c)}\Pi(\Mat{D}_k^{(c)}) \Big \|_{F}^2,
\end{eqnarray}
where $\Mat{z}_{k}^{(c)}$ is the coefficient associated with atom $\Mat{D}_k^{(c)}$. The label of $\Mat{X}$ is then determined by the class that has the minimum reconstruction error, \ie, $y = \arg\min_{c} e_c(\Mat{X})$.

\section{Dictionary learning with LDSs}
\label{Sec:DL}

Assume a set of LDS models $\{\Mat{X}_i\in\mathcal{S}(n,\infty)\}_{i=1}^{N}$ is given.
The problem of dictionary learning is to
identify a set $\{\Mat{D}_j\in\mathcal{S}(n,\infty)\}_{j=1}^{J}$ to best reconstruct the observations according to the cost defined
in~\eqref{Eq:SC1}.  Formally, this can be written as:
\begin{eqnarray}\label{Eq:DL-OBJ}
\min_{\Mat{Z},\Mat{D}} L(\Mat{Z},\Mat{D})=\sum_{i=1}^N l([\Mat{Z}]_{i}, \Mat{D}),
\end{eqnarray}
with $\Mat{Z}\in \mathbb{R}^{J\times N}$ denoting the coding matrix and $l([\Mat{Z}]_{i}, \Mat{D})$ as in~\eqref{Eq:SC2}.
In general, dictionary learning is an involved problem~\cite{aharon2006k,mairal2009online}.
The case here is of course as a result of non-Euclidean structure and infinite-dimensionality of the LDS space.

In the following, we first make use of the diffemorphic embedding
proposed in \textsection~\ref{Sec:diffemorphic_embedding} to derive a general formulation for the problem of dictionary learning.
As it turns out, the general form of dictionary learning is still complicated to work with so we impose further
structures to simplify the problem. This leads to the notion of two-fold LDSs and thus a neat and tractable optimization problem for obtaining the
LDS dictionary.

\subsection{LDS Dictionary: The General Form}
\label{Sec:DL_general}
A general practice in dictionary learning is to solve the problem alternatively, meaning learning by repeating the following two steps
\textbf{1)} \emph{optimizing the codes when the dictionary is fixed} and \textbf{2)} \emph{updating the dictionary atoms when the codes are given}.
The first is indeed the sparse coding problem in~\eqref{Eq:SC2} which we already know how to solve. As for the second step, we
break the minimization problem into $J$ sub-minimization problems by updating each atom independently. To update
the r-th atom,  $\Mat{D}_r$, rearranging~\eqref{Eq:DL-OBJ} and keeping the terms that are dependent on $\Mat{D}_r$ leads to
the sub-problem $\min\limits_{\Mat{D}_r}\Gamma_r$ with
\begin{equation}\label{Eq:gamma}
\Gamma_r = \sum_{i=1}^{N}z_{r,i}\sum_{\substack{j=1, j\neq r}}^{J}\left(z_{j,i}k(\Mat{D}_r,\Mat{D}_j)-k(\Mat{D}_r,\Mat{X}_i)\right).
\end{equation}

Based on the modeling done in  \textsection~\ref{Sec:O}, each atom $\Mat{D}_r$ is an infinite subspace and can be parameterized by its  transition and measurement matrices, \ie , the tuple $(\Mat{A}_{r}^{(d)},\Mat{C}_{r}^{(d)})$. Our goal is to determine this tuple to identify
$\Mat{D}_r$. Imposing stability and orthonormality constraints on $\Mat{A}_{r}^{(d)}$ and $\Mat{C}_{r}^{(d)}$, respectively, results in
\begin{eqnarray}\label{Eq:subproblem}
\min_{\Mat{A}_{r}^{(d)}, \Mat{C}_{r}^{(d)}}\Gamma_r,~~~~\mathrm{s.t. }~~(\Mat{C}_{r}^{(d)})^{\mathrm{T}}\Mat{C}_{r}^{(d)}=\mathbf{I}_n;~|\mu(\Mat{A}_{r}^{(d)})|<1.
\end{eqnarray}
to determine the r-th atom. Here, $\mu(\Mat{A}_{r}^{(d)})$ denotes the eigenvalue of $\Mat{A}_{r}^{(d)}$ with the largest magnitude.
Our goal in dictionary learning is to find the optimal tuples of the dictionary atoms.

There are mainly two difficulties in solving~\eqref{Eq:subproblem}:
\textbf{1.}
As discussed in \textsection~\ref{Sec:O}, $(\Mat{A}_{r}^{(d)},\Mat{C}_{r}^{(d)})$ do not lie in an Euclidean space.
This is because for any orthogonal $\Mat{P}$,
$(\Mat{P}^{\mathrm{T}}\Mat{A}_{r}^{(d)}\Mat{P},\Mat{C}_{r}^{(d)}\Mat{P})$ results in the same objective $\Gamma_r$ as
that of $(\Mat{A}_{r}^{(d)},\Mat{C}_{r}^{(d)})$. \textbf{2.} As discussed in~\cite{wenbing2016learning}, the stability constraint on $\Mat{A}_{r}^{(d)}$ makes the feasible region non-convex. Below, we show how the aforementioned difficulties can be addressed.

\subsection{Dictionary learning with Two-fold LDSs}
\label{Sec:DL_two-fold}

To facilitate the optimization in~\eqref{Eq:subproblem}, we propose to impose further, yet beneficial structures on the original problem.
In particular, we propose to  \textbf{1.} \emph{encoding data sequences with two-fold LDSs where the transition matrix is decomposed into symmetric and skew-symmetric parts}; and \textbf{2.} \emph{updating symmetric and skew-symmetric dictionaries separately}.
We will show that with these modifications, not only stable transition matrices can be obtained but also the measurement matrices can be updated in closed-form.

\subsubsection{\textbf{Encoding Sequences with Two-fold LDSs}}

The two-fold LDS models the input time series with two independent sub-LDSs
\begin{eqnarray}
\label{Eq:sym-LDS}
\left\{
\begin{aligned}
\Mat{x}_1(t+1) &= \Mat{A}_{sym}\Mat{x}_1(t) + \Mat{B}_{1}\Mat{v}_1(t), \\
\Mat{y}(t)   &= \Mat{C}_{1}\Mat{x}_1(t) + \Mat{w}(t) + \overline{\Mat{y}},
\end{aligned}
\right.
\end{eqnarray}
\begin{eqnarray}
\label{Eq:skew-LDS}
\left\{
\begin{aligned}
\Mat{x}_2(t+1) &= \Mat{A}_{skew}\Mat{x}_2(t) + \Mat{B}_{2}\Mat{v}_2(t), \\
\Mat{y}(t)     &= \Mat{C}_{2}\Mat{x}_2(t) + \Mat{w}(t) + \overline{\Mat{y}},
\end{aligned}
\right.
\end{eqnarray}
where $\Mat{A}_{sym}$ and $\Mat{A}_{skew}$, as the names imply, are symmetric and skew-symmetric matrices, respectively.
Obviously the two-fold LDS encodes a sequence with two decoupled processes, whose transition matrices are constrained to be symmetric and skew-symmetric, respectively.
%
%
To learn the system tuples for the two-fold LDS, we can apply the method introduced in \cite{doretto2003dynamic} by additionally enforcing the symmetric and skew-symmetric constraints on the transition matrices of the symmetric and skew-symmetric LDSs, respectively. However, this will involve heavy computations. In this paper, we simply choose 
$\Mat{C}_1=\Mat{C}_2=\Mat{C}$, $\Mat{A}_{sym} = \frac{1}{2}(\Mat{A}+\Mat{A}^{\mathrm{T}})$ and $\Mat{A}_{skew} = \frac{1}{2}(\Mat{A}-\Mat{A}^{\mathrm{T}})$ where $\Mat{C}$ and $\Mat{A}$ are given by the original LDS defined in Eq.~\eqref{Eq:ARMA}. Our experiments show that such solutions still obtain satisfied results. Furthermore, if all $\Mat{A}$'s singular-values have the magnitude less than 1 so do those of $\Mat{A}_{sym}$ and $\Mat{A}_{skew}$. The details are presented in the appendix.


\begin{table}[!ht]
\centering
\caption{The canonical tuples for data and dictionary systems.}
\label{Tab:canonical-tuples}
\begin{tabular}{|c|c|}
\hline
Denotation                                         & Definition                                                    \\ \hline
$(\Mat{\Lambda}_{1,i}^{(x)}, \Mat{U}_{1,i}^{(x)})$ & The symmetric canonical tuples of data $\Mat{X}_i$            \\ \hline
$(\Mat{\Lambda}_{2,i}^{(x)}, \Mat{U}_{2,i}^{(x)})$ & The skew-symmetric canonical tuples of data $\Mat{X}_i$       \\ \hline
$(\Mat{\Lambda}_{1,j}^{(d)}, \Mat{U}_{1,j}^{(d)})$ & The canonical tuples of symmetric dictionary $\Mat{D}_j$      \\ \hline
$(\Mat{\Lambda}_{2,j}^{(d)}, \Mat{U}_{2,j}^{(d)})$ & The canonical tuples of skew-symmetric dictionary $\Mat{D}_j$ \\ \hline
\end{tabular}
\end{table}

\begin{table}[ht!]
\centering
\caption{Intermediate variables for dictionary learning. $(\Mat{\Lambda}_{j}^{(d)}, \Mat{U}_{j}^{(d)})$ and $(\Mat{\Lambda}_{i}^{(x)}, \Mat{U}_{i}^{(x)})$ denote the canonical tuples of the $j$-th dictionary atoms and the $i$-th data, respectively. They are corresponded to the symmetric parts of two-fold LDSs when updating symmetric dictionary and skew-symmetric parts when updating skew-symmetric dictionary.}
\label{Tab:denotations}
\begin{tabular}{|c|c|}
\hline
Denotation                           & Definition                                                                                                                                                                                                                                                                                           \\ \hline
$\Mat{E}(\lambda,\Mat{\Lambda})$ & \begin{tabular}[c]{@{}c@{}}$\Mat{E}(\lambda,\Mat{\Lambda})=\mathrm{diag}( [\frac{(1-|\lambda|^2)(1-|\lambda_{1}|^2)}{|1-\lambda\lambda_{1}^{*}|^2},\cdots,$ \\ $\frac{(1-|\lambda|^2)(1-|\lambda_{n}|^2)}{|1-\lambda\lambda_{n}^{*}|^2}])$ and $\lambda_{k}=[\Mat{\Lambda}]_k$\end{tabular} \\ \hline
$\Mat{F}_{r,j,k}$                    & $\Mat{F}_{r,j,k}=\Mat{U}_{j}^{(d)}\Mat{E}([\Mat{\Lambda}_{r}^{(d)}]_k,\Mat{\Lambda}_{j}^{(d)})(\Mat{U}_{j}^{(d)})^{*}$                                                                                                                                                                                       \\ \hline
$\Mat{F'}_{r,i,k}$            & $\Mat{F'}_{r,i,k} = \Mat{U}_{i}^{(x)}\Mat{E}([\Mat{\Lambda}_{r}^{(d)}]_k,\Mat{\Lambda}_{i}^{(x)})(\Mat{U}_{i}^{(x)})^{*}$                                                                                                                                                                 \\ \hline
$k(\Mat{D}_r,\Mat{D}_j)$             & $k(\Mat{D}_r,\Mat{D}_j) =\sum_{k=1}^n[\Mat{U}_{r}^{(d)}]_k^{*}\Mat{F}_{r,j,k}[\Mat{U}_{r}^{(d)}]_k$                                                                                                                                                                                                  \\ \hline
$k(\Mat{D}_r,\Mat{X}_i)$             & $k(\Mat{D}_r,\Mat{X}_i) =\sum_{k=1}^n[\Mat{U}_{r}^{(d)}]_k^{*}\Mat{F'}_{r,i,k}[\Mat{U}_{r}^{(d)}]_k$                                                                                                                                                             \\ \hline
$\Mat{S}_{r,k}$                      &  $\Mat{S}_{r,k} = \sum_{i=1}^{N}z_{r,i}(\sum_{j=1,j\neq r}^{J}z_{j,i}\Mat{F}_{r,j,k} -\Mat{F'}_{r,i,k})$ \\ \hline
\end{tabular}
\end{table}

\subsubsection{\textbf{Learning a Two-fold Dictionary}}
%

Assuming that the dictionary atoms are also generated by two-fold LDSs, we define the reconstruction error over a set of data two-fold LDSs described by $\{\Mat{X}_i\}_{i=1}^{N}$ as
\begin{align}\label{Eq:RE2}
&L(\Mat{Z},\Mat{D}_{1,\cdot},\Mat{D}_{2,\cdot}) \coloneqq  \\
&\sum_{i=1}^N\Big\|\Pi(\Mat{X}_{1,i})-\sum_{j=1}^J z_{j,i}^{(1)}\Pi(\Mat{D}_{1,j})\Big\|_F^2 + \|\Pi(\Mat{X}_{2,i}) - \sum_{j=1}^J z_{j,i}^{(2)}\Pi(\Mat{D}_{2,j})\Big\|_F^2. \notag
\end{align}
Here, $\Mat{X}_{1,i}$ and $\Mat{X}_{2,i}$ are respectively the symmetric and skew-symmetric parts of data $\Mat{X}_i$;
$\Mat{D}_{1,j}$ and $\Mat{D}_{2,j}$ are respectively the symmetric and skew-symmetric parts of the dictionary atom $\Mat{D}_j$.
Note that the codes for $\Mat{D}_{1,j}$ and $\Mat{D}_{2,j}$ are not necessary equal to each other so as to improve the modeling flexibility.
For simplicity, we concatenate the codes for the symmetric and skew-symmetric dictionary atoms into one single code matrix as $\Mat{Z}=[\Vec{z}_{\cdot,1}^{(1)};\cdots;\Vec{z}_{\cdot,N}^{(1)};\Vec{z}_{\cdot,1}^{(2)};\cdots;\Vec{z}_{\cdot,N}^{(2)}]$.
As discussed in \textsection~\ref{Sec:DL_general}, Problem~\eqref{Eq:RE2} can be solved by alternatively updating the codes and dictionary atoms.
Given the codes, the dictionary atoms can be learned one by one, leading to the sub-problem given by \eqref{Eq:gamma} and \eqref{Eq:subproblem}. To address the sub-problem, we make use of the following lemma (see the appendix for the proof).

\begin{lemma}\label{canonical-representaion}
For a symmetric or skew-symmetric transition matrix $\Mat{A}$, the system tuple $(\Mat{A},\Mat{C})\in \mathbb{R}^{n \times n}\times\mathbb{R}^{m \times n}$ has the canonical form $(\Mat{\Lambda},\Mat{U})\in \mathbb{R}^{n}\times\mathbb{R}^{m \times n}$, where $\Mat{\Lambda}$ is diagonal and $\Mat{U}$ is unitary, \ie $\Mat{U}^\ast\Mat{U}= \mathbf{I}$. Moreover, both $\Mat{\Lambda}$ and $\Mat{U}$ are real if $\Mat{A}$ is symmetric and complex if $\Mat{A}$ is skew-symmetric. For the skew-symmetric $\Mat{A}$, $(\Mat{\Lambda},\Mat{U})$ is parameterized by a real matrix-pair $(\Mat{\Theta},\Mat{Q})\in \mathbb{R}^{n/2}\times\mathbb{R}^{m \times n}$ where $\Mat{\Theta}$ is diagonal, $\Mat{Q}$ is orthogonal, and $n$ is even\footnote{When $n$ is odd, our developments can be applied verbatim to this case as presented in the appendix.}.
\end{lemma}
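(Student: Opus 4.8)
The plan is to turn the equivalence relation of \textsection~\ref{Sec:O} into a diagonalization device. Since $(\Mat{A},\Mat{C})$ and $(\Mat{P}^{\ast}\Mat{A}\Mat{P},\Mat{C}\Mat{P})$ produce identical extended observability subspaces for any invertible $\Mat{P}$ — the observability matrix simply transforms as $\Mat{O}\mapsto\Mat{O}\Mat{P}$, whose column span is unchanged — we may choose $\Mat{P}$ to be a unitary matrix that diagonalizes $\Mat{A}$, then set $\Mat{\Lambda}=\Mat{P}^{\ast}\Mat{A}\Mat{P}$ and $\Mat{U}=\Mat{C}\Mat{P}$. Unitarity of $\Mat{P}$ is exactly what keeps the measurement part orthonormal: because $\Mat{C}$ is real with $\Mat{C}^{\ast}\Mat{C}=\Mat{C}^{\mathrm{T}}\Mat{C}=\mathbf{I}_n$, we get $\Mat{U}^{\ast}\Mat{U}=\Mat{P}^{\ast}\Mat{C}^{\ast}\Mat{C}\Mat{P}=\Mat{P}^{\ast}\Mat{P}=\mathbf{I}_n$. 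The whole lemma thus reduces to exhibiting a suitable $\Mat{P}$ in each case and reading off the announced real/complex structure.

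For symmetric $\Mat{A}$ this is immediate: the spectral theorem supplies a real orthogonal $\Mat{P}$ with $\Mat{P}^{\mathrm{T}}\Mat{A}\Mat{P}=\Mat{\Lambda}$ real and diagonal, whence $\Mat{U}=\Mat{C}\Mat{P}$ is real, completing that half. For skew-symmetric $\Mat{A}$ (first with $n$ even), I would invoke the real canonical form: there is a real orthogonal $\Mat{Q}_0$ for which $\Mat{Q}_0^{\mathrm{T}}\Mat{A}\Mat{Q}_0$ is block-diagonal with $2\times 2$ blocks $\left(\begin{smallmatrix}0 & \theta_k\\ -\theta_k & 0\end{smallmatrix}\right)$, the $\theta_k$ being the diagonal entries of $\Mat{\Theta}$. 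Each such block is diagonalized to $\mathrm{diag}(1i\theta_k,-1i\theta_k)$ by the single fixed unitary $\Mat{W}=\tfrac{1}{\sqrt{2}}\left(\begin{smallmatrix}1 & 1\\ 1i & -1i\end{smallmatrix}\right)$; assembling copies of $\Mat{W}$ into a block-diagonal unitary $\Mat{W}_0$ and setting $\Mat{P}=\Mat{Q}_0\Mat{W}_0$ gives a unitary $\Mat{P}$ with $\Mat{\Lambda}=\Mat{P}^{\ast}\Mat{A}\Mat{P}$ complex diagonal and purely imaginary. Writing $\Mat{Q}=\Mat{C}\Mat{Q}_0$ (real, with $\Mat{Q}^{\mathrm{T}}\Mat{Q}=\mathbf{I}_n$) then yields $\Mat{U}=\Mat{Q}\Mat{W}_0$, so the complex pair $(\Mat{\Lambda},\Mat{U})$ is recovered from the real pair $(\Mat{\Theta},\Mat{Q})$ through the fixed, data-independent $\Mat{W}_0$, which is exactly the parameterization claimed.

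I expect the skew-symmetric part to be the main obstacle, for two reasons. First, one has to justify enlarging the gauge group from real orthogonal to complex unitary matrices without disturbing the descriptor: this rests on the observation above that $\mathrm{Span}(\Mat{O}\Mat{P})$ (now taken over $\mathbb{C}$) is independent of invertible $\Mat{P}$, so the principal angles and the kernel values of Table~\ref{Tab:denotations} are unaffected and the canonical coordinates serve merely as a computational intermediary. Second, one must bookkeep the block-by-block correspondence between the real form $\left(\begin{smallmatrix}0 & \theta_k\\ -\theta_k & 0\end{smallmatrix}\right)$ and the conjugate eigenvalue pair $\pm 1i\theta_k$ carefully enough that a single $\Mat{W}$ effects the conversion in every block. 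Finally, the odd-$n$ case breaks the clean pairing because a residual zero eigenvalue of the skew-symmetric $\Mat{A}$ is left over; it needs a separate treatment, which the paper defers to the appendix.
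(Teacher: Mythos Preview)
Your proposal is correct and follows essentially the same approach as the paper: both arguments invoke the equivalence $(\Mat{A},\Mat{C})\sim(\Mat{P}^{\ast}\Mat{A}\Mat{P},\Mat{C}\Mat{P})$ with $\Mat{P}$ the unitary eigenvector matrix of $\Mat{A}$, and for the skew-symmetric case both obtain the real parameterization $(\Mat{\Theta},\Mat{Q})$ via the conjugate-pair structure of the eigensystem (the paper simply states the column formulas for $\Mat{U}$ in terms of $\Mat{Q}$, which are exactly what your block-diagonal $\Mat{W}_0$ produces). Your treatment is in fact slightly more careful than the paper's, since you explicitly flag and justify the extension of the gauge group from real orthogonal to complex unitary, a point the paper uses without comment.
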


For the sake of convenience, we denote the canonical tuples for data and dictionary as in Table~\ref{Tab:canonical-tuples}.
We denote the system tuple of $\Mat{D}_j$ by $(\Mat{\Lambda}_{j}^{(d)}, \Mat{U}_{j}^{(d)})$
if specifying the symmetric or skew-symmetric part is not required.

In the conventional LDS modeling, the exact form of the kernel functions in Eq.~\eqref{Eq:gamma} cannot be obtained due to the implicit calculation of the Lyapunov equation. In contrast, the extra structure imposed on the transition matrices enables us to compute the inner-products between observability matrices and derive the kernel values as required by Eq.~\eqref{Eq:gamma}.
In Table~\ref{Tab:denotations}, we provide the form of $k(\Mat{D}_r,\Mat{D}_j)$ and $k(\Mat{D}_r,\Mat{X}_i)$ along
$\Mat{E}(\lambda,\Mat{\Lambda}_{j})$, $\Mat{F}_{r,j,k}$, $\Mat{F}_{r,i,k}^{(1)}$ and $\Mat{F}_{r,i,k}^{(2)}$
which are required for computing the kernel functions (see the appendix for the details).

With the kernel functions at our disposal, we make use of the following theorem to recast Eq.~\eqref{Eq:subproblem} into a form
that suits our purpose better.
\begin{theorem}\label{Th:DL}
For the two-fold LDS modeling defined in Eq.~\eqref{Eq:sym-LDS} and Eq.~\eqref{Eq:skew-LDS}, the sup-problem in Eq.~\eqref{Eq:subproblem} is equivalent to
\begin{eqnarray}\label{Eq:DL}
\begin{aligned}
 & \min_{\Mat{U}_{r}^{(d)}, \Mat{\Lambda}_{r}^{(d)}}  &  &\sum_{k=1}^{n}[\Mat{U}_{r}^{(d)}]_k^*\Mat{S}_{r,k}[\Mat{U}_{r}^{(d)}]_k\\
 & ~~~\mathrm{s.t. }                       &  & (\Mat{U}_{r}^{(d)})^*\Mat{U}_{r}^{(d)}=\mathbf{I}_n;\\
 &&& -1<[\Mat{\Lambda}_{r}^{(d)}]_k<1,~1\leq k\leq n.
\end{aligned}
\end{eqnarray}
\vskip -0.1in \noindent
Here,  the matrix $\Mat{S}_{r,k}$ (see Table~\ref{Tab:denotations}) is not dependent on the measurement matrix $\Mat{U}_{r}^{(d)}$.
\end{theorem}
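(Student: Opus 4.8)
The plan is to prove the equivalence in two moves. First, reparameterize the feasible set of Eq.~\eqref{Eq:subproblem} through the canonical form of Lemma~\ref{canonical-representaion}; second, substitute the closed-form kernels of Table~\ref{Tab:denotations} into $\Gamma_r$ and regroup the sums so that the columns of the measurement matrix are isolated in a single quadratic form. I would run this argument once for the symmetric branch and once for the skew-symmetric branch, since $(\Mat{\Lambda}_r^{(d)},\Mat{U}_r^{(d)})$ denotes whichever part is being updated.

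For the reparameterization, Lemma~\ref{canonical-representaion} gives a correspondence between an admissible tuple $(\Mat{A}_r^{(d)},\Mat{C}_r^{(d)})$ -- symmetric (resp.\ skew-symmetric) transition matrix together with an orthonormal measurement matrix -- and its canonical tuple $(\Mat{\Lambda}_r^{(d)},\Mat{U}_r^{(d)})$, with $\Mat{\Lambda}_r^{(d)}$ diagonal and $\Mat{U}_r^{(d)}$ unitary. As $(\Mat{A}_r^{(d)},\Mat{C}_r^{(d)})$ ranges over all admissible tuples the canonical tuple ranges over all pairs (diagonal $\Mat{\Lambda}_r^{(d)}$, unitary $\Mat{U}_r^{(d)}$) while the objective value is preserved, so minimizing over one is the same as minimizing over the other. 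Under the correspondence the orthonormality constraint $(\Mat{C}_r^{(d)})^{\mathrm{T}}\Mat{C}_r^{(d)}=\mathbf{I}_n$ becomes $(\Mat{U}_r^{(d)})^*\Mat{U}_r^{(d)}=\mathbf{I}_n$, and the stability constraint $|\mu(\Mat{A}_r^{(d)})|<1$ becomes the box constraint $-1<[\Mat{\Lambda}_r^{(d)}]_k<1$ (for the symmetric case these are the real eigenvalues; for the skew-symmetric case they are the real parameters $\Mat{\Theta}$ whose magnitude equals that of the purely imaginary eigenvalues). This reproduces exactly the constraints of Eq.~\eqref{Eq:DL}.

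It then remains to transform the objective. I would insert the kernel identities $k(\Mat{D}_r,\Mat{D}_j)=\sum_{k=1}^n[\Mat{U}_r^{(d)}]_k^*\Mat{F}_{r,j,k}[\Mat{U}_r^{(d)}]_k$ and $k(\Mat{D}_r,\Mat{X}_i)=\sum_{k=1}^n[\Mat{U}_r^{(d)}]_k^*\Mat{F'}_{r,i,k}[\Mat{U}_r^{(d)}]_k$ from Table~\ref{Tab:denotations} into $\Gamma_r$ of Eq.~\eqref{Eq:gamma}, then interchange the order of summation to pull the index $k$ outside. Collecting everything sandwiched between $[\Mat{U}_r^{(d)}]_k^*$ and $[\Mat{U}_r^{(d)}]_k$ gives
\[
\Gamma_r=\sum_{k=1}^n[\Mat{U}_r^{(d)}]_k^*\Mat{S}_{r,k}[\Mat{U}_r^{(d)}]_k,
\]
where the sandwiched matrix is precisely $\Mat{S}_{r,k}=\sum_{i=1}^N z_{r,i}\big(\sum_{j=1,j\neq r}^J z_{j,i}\Mat{F}_{r,j,k}-\Mat{F'}_{r,i,k}\big)$ from Table~\ref{Tab:denotations}; this is the objective of Eq.~\eqref{Eq:DL}. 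The final claim -- that $\Mat{S}_{r,k}$ does not depend on $\Mat{U}_r^{(d)}$ -- I would read off its definition: both $\Mat{F}_{r,j,k}=\Mat{U}_j^{(d)}\Mat{E}([\Mat{\Lambda}_r^{(d)}]_k,\Mat{\Lambda}_j^{(d)})(\Mat{U}_j^{(d)})^*$ and $\Mat{F'}_{r,i,k}=\Mat{U}_i^{(x)}\Mat{E}([\Mat{\Lambda}_r^{(d)}]_k,\Mat{\Lambda}_i^{(x)})(\Mat{U}_i^{(x)})^*$ involve only the fixed tuples of the other atoms and of the data, the fixed codes $z_{\cdot,\cdot}$, and the scalars $[\Mat{\Lambda}_r^{(d)}]_k$; the matrix $\Mat{U}_r^{(d)}$ enters solely through the outer factors.

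The step I expect to demand the most care is not this rearrangement but the kernel identities of Table~\ref{Tab:denotations} on which it rests. Establishing them amounts to evaluating the projection kernel $\|\Mat{V}_r^{\mathrm{T}}\Mat{V}_j\|_F^2$ via the Lyapunov equation~\eqref{Eq:Lyapunov1}: once the transition matrices are diagonalized the Lyapunov solution becomes entrywise, the orthonormalizing factors $\Mat{L}$ contribute the $(1-|\lambda|^2)$ terms and the cross terms the $|1-\lambda\lambda_k^*|^2$ denominators, producing exactly the separable factor $\Mat{E}(\cdot,\cdot)$ that decouples $[\Mat{\Lambda}_r^{(d)}]_k$ from $\Mat{U}_r^{(d)}$. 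Granting those identities (derived in the appendix), the equivalence together with the $\Mat{U}_r^{(d)}$-independence of $\Mat{S}_{r,k}$ follows from the canonical correspondence of Lemma~\ref{canonical-representaion} and the summation interchange.
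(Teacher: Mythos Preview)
Your proposal is correct and follows essentially the same route as the paper's proof: pass to the canonical tuple via Lemma~\ref{canonical-representaion}, substitute the explicit kernel formulas into $\Gamma_r$, and regroup in $k$ to isolate $\Mat{S}_{r,k}$. The only difference in emphasis is that the paper devotes most of its proof to deriving the kernel identity $k(\Mat{D}_r,\Mat{D}_j)=\sum_k[\Mat{U}_r]_k^*\Mat{F}_{r,j,k}[\Mat{U}_r]_k$ from scratch (expanding $\|\Mat{V}_r^{\mathrm{T}}\Mat{V}_j\|_F^2$ as a double infinite sum, inserting the diagonal $\Mat{L}$ factors, and summing the geometric series to produce $\Mat{E}(\cdot,\cdot)$), whereas you invoke these identities from Table~\ref{Tab:denotations} and correctly flag their derivation as the substantive step.
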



\subsubsection*{\textbf{Updating the Symmetric Dictionary}}

For a symmetric dictionary atom $(\Mat{\Lambda}_{1,r}^{(d)},\Mat{U}_{1,r}^{(d)})$, both $\Mat{\Lambda}_{1,r}^{(d)}$ and  $\Mat{U}_{1,r}^{(d)}$ are real matrices. The matrix $\Mat{S}_{r,k}$ in~\eqref{Eq:DL} is also guaranteed to be real and symmetric. We further break the optimization in~\eqref{Eq:DL} into $n$ sub-minimization problems. That is, to find the optimal pair  $([\Mat{\Lambda}_{1,r}^{(d)}]_k, [\Mat{U}_{1,r}^{(d)}]_k)$, we fix all the other pairs $\{[\Mat{\Lambda}_{1,r}^{(d)}]_o, [\Mat{U}_{1,r}^{(d)}]_o)\}_{o=1,o\neq k}^{n}$. As such, we need to optimize the following sub-problem,
\begin{eqnarray}\label{Eq:DL-sym-U}
\begin{aligned}
 &\min_{[\Mat{U}_{1,r}^{(d)}]_k, [\Mat{\Lambda}_{1,r}^{(d)}]_k} &  &[\Mat{U}_{1,r}^{(d)}]_k^{\mathrm{T}}\Mat{S}_{r,k}[\Mat{U}_{1,r}^{(d)}]_k\\
 &~~~~~~\mathrm{s.t. }                                                                          &  &
 (\Mat{U}_{1,r}^{(d)})^{\mathrm{T}}\Mat{U}_{1,r}^{(d)}=\mathbf{I}_n,\\
 & & &
-1<[\Mat{\Lambda}_{1,r}^{(d)}]_k<1.
\end{aligned}
\end{eqnarray}
The optimal $[\Mat{U}_{1,r}^{(d)}]_k$ is obtained by the following theorem.
\begin{theorem}\label{Th:sym-column-U}
Let $[\Mat{U}_{1,r}^{(d)}]_{-k}\in \mathbb{R}^{m\times (n-1)}$ denote the sub-matrix obtained from $\Mat{U}_{1,r}^{(d)}$
by removing the $k$-th column, \ie,
\begin{align*}
[\Mat{U}_{1,r}^{(d)}]_{-k} = \Big([\Mat{U}_{1,r}^{(d)}]_1;\cdots;[\Mat{U}_{1,r}^{(d)}]_{k-1}; [\Mat{U}_{1,r}^{(d)}]_{k+1};\cdots;[\Mat{U}_{1,r}^{(d)}]_n \Big).
\end{align*}
Define $\Mat{W}=[\Mat{w}_{1},\cdots,\Mat{w}_{m-n+1}]\in\mathbb{R}^{m\times (m-n+1)}$ as the orthogonal complement basis of $[\Mat{U}_{1,r}^{(d)}]_{-k}$. If $\Mat{u}\in \mathbb{R}^{(m-n+1)}$ is the eigenvector of  $\Mat{W}^{\mathrm{T}}\Mat{S}_{r,k}\Mat{W}$ corresponding to the smallest eigenvalue, then $\Mat{W}\Mat{u}$ is the optimal solution of $[\Mat{U}_{1,r}^{(d)}]_{-k}$ for Eq. (\ref{Eq:DL-sym-U}).
\end{theorem}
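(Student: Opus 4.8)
The plan is to recognize Eq.~\eqref{Eq:DL-sym-U} as a constrained Rayleigh-quotient minimization over the single column $\Mat{u}_k := [\Mat{U}_{1,r}^{(d)}]_k$. First I would note that, in the symmetric case, all quantities entering $\Mat{S}_{r,k}$ are real and that $\Mat{S}_{r,k}$ is a real \emph{symmetric} matrix (as already asserted in the text preceding the theorem); this is exactly the property that will legitimize the spectral characterization at the end. I would also stress that the eigenvalue variable $[\Mat{\Lambda}_{1,r}^{(d)}]_k$ enters the sub-problem only through $\Mat{S}_{r,k}$, so for the purpose of solving for $\Mat{u}_k$ it is held fixed and $\Mat{S}_{r,k}$ is treated as a known symmetric matrix. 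With the remaining columns $[\Mat{U}_{1,r}^{(d)}]_{-k}$ fixed and assumed orthonormal, the constraint $(\Mat{U}_{1,r}^{(d)})^{\mathrm{T}}\Mat{U}_{1,r}^{(d)}=\mathbf{I}_n$ collapses to the two conditions $\Mat{u}_k^{\mathrm{T}}\Mat{u}_k=1$ and $\Mat{u}_k^{\mathrm{T}}[\Mat{U}_{1,r}^{(d)}]_o=0$ for every $o\neq k$.

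The key step is a reparameterization that absorbs the orthogonality conditions. Since $\Mat{u}_k$ must be orthogonal to the $n-1$ fixed columns, it is forced to lie in the orthogonal complement of $\mathrm{Span}([\Mat{U}_{1,r}^{(d)}]_{-k})$, a subspace of dimension $m-(n-1)=m-n+1$. Taking $\Mat{W}=[\Mat{w}_1,\cdots,\Mat{w}_{m-n+1}]$ to be an orthonormal basis of this complement (so that $\Mat{W}^{\mathrm{T}}\Mat{W}=\mathbf{I}_{m-n+1}$ and $\Mat{W}^{\mathrm{T}}[\Mat{U}_{1,r}^{(d)}]_{-k}=\mathbf{0}$), I would write every feasible column as $\Mat{u}_k=\Mat{W}\Mat{u}$ with $\Mat{u}\in\mathbb{R}^{m-n+1}$. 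The orthogonality constraints are then satisfied automatically, while the unit-norm constraint becomes $\Mat{u}_k^{\mathrm{T}}\Mat{u}_k=\Mat{u}^{\mathrm{T}}\Mat{W}^{\mathrm{T}}\Mat{W}\Mat{u}=\Mat{u}^{\mathrm{T}}\Mat{u}=1$. I would verify that $\Mat{u}\mapsto\Mat{W}\Mat{u}$ is a bijection between the unit sphere of $\mathbb{R}^{m-n+1}$ and the feasible set of $\Mat{u}_k$, so that no feasible point is lost and none is added.

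Substituting $\Mat{u}_k=\Mat{W}\Mat{u}$ turns the objective into $\Mat{u}^{\mathrm{T}}\big(\Mat{W}^{\mathrm{T}}\Mat{S}_{r,k}\Mat{W}\big)\Mat{u}$, so the sub-problem reduces to minimizing this quadratic form over $\Mat{u}$ on the unit sphere. Because $\Mat{W}^{\mathrm{T}}\Mat{S}_{r,k}\Mat{W}$ is real symmetric, the Courant--Fischer (Rayleigh quotient) theorem gives that the minimum equals its smallest eigenvalue and is attained at a corresponding unit eigenvector $\Mat{u}$; mapping back, $\Mat{u}_k=\Mat{W}\Mat{u}$ is the optimal $k$-th column, which is the claim. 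The only genuinely delicate point is the reparameterization: I must argue that restricting to $\mathrm{Span}(\Mat{W})$ captures exactly the feasible columns and that the Euclidean norm is preserved under $\Mat{W}$, after which the spectral conclusion is routine. The symmetry of $\Mat{S}_{r,k}$, guaranteed in the symmetric-dictionary case, is precisely what makes the eigenvalue characterization applicable.
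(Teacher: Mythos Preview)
Your proposal is correct and follows essentially the same approach as the paper: reparameterize the feasible $k$-th column as $\Mat{W}\Mat{u}$ using an orthonormal basis of the orthogonal complement of $[\Mat{U}_{1,r}^{(d)}]_{-k}$, reduce the objective to $\Mat{u}^{\mathrm{T}}(\Mat{W}^{\mathrm{T}}\Mat{S}_{r,k}\Mat{W})\Mat{u}$ on the unit sphere, and conclude via the Rayleigh-quotient characterization. Your write-up is in fact more careful than the paper's own proof, explicitly noting the bijection, the preservation of the unit-norm constraint under $\Mat{W}$, and the need for symmetry of $\Mat{S}_{r,k}$ to invoke Courant--Fischer.
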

\begin{proof}
See the appendix for the proof of this theorem.
\end{proof}
In terms of the transition term $[\Mat{\Lambda}_{1,r}^{(d)}]_k$, solving~\eqref{Eq:DL-sym-U} is actually to solve a minimization problem with bound constraints. Here, we transform~\eqref{Eq:DL-sym-U} to an unconstrained problem using an auxiliary variable $\rho$ satisfying
\begin{eqnarray}\label{Eq:sym-lambda}
[\Mat{\Lambda}_{1,r}^{(d)}]_k = 2\mathrm{Sig}(a\rho)-1.
\end{eqnarray}
This is indeed similar to the SN stabilization method introduced in~\textsection~\ref{Sec:stable_LDS}. With this trick
$[\Mat{\Lambda}_{1,r}^{(d)}]_k$ is naturally bounded in $(-1,1)$, hence enabling us to apply gradient-descent methods to
update $[\Mat{\Lambda}_{1,r}^{(d)}]_k$. More specifically, let
$\Phi(r,k) = [\Mat{U}_{1,r}^{(d)}]_k^{\mathrm{T}}\Mat{S}_{r,k}[\Mat{U}_{1,r}^{(d)}]_k$. Then,
\begin{align*}
\frac{\partial\Phi(r,k)}{\partial \rho} = 2\frac{\partial\Phi(r,k)}{\partial[\Mat{\Lambda}_{1,r}^{(d)}]_k}\frac{\partial [\Mat{\Lambda}_{1,r}^{(d)}]_k}{\partial \rho}\;.
\end{align*}
In practice,  We first update $\rho$ with its gradient, and then return back to $[\Mat{\Lambda}_{1,r}^{(d)}]_k$ via Eq. (\ref{Eq:sym-lambda}).

\subsubsection*{\textbf{Updating the Skew-symmetric Dictionary}}

For a skew-symmetric atom $(\Mat{\Lambda}_{2,r}^{(d)},\Mat{U}_{2,r}^{(d)})$, both $\Mat{\Lambda}_{2,r}^{(d)}$ and $\Mat{U}_{2,r}^{(d)}$ are complex matrices. As such, Theorem~\ref{Th:sym-column-U} and Eq.\eqref{Eq:sym-lambda} cannot be directly used to
update the dictionary. However, since  $(\Mat{\Lambda}_{2,r}^{(d)},\Mat{U}_{2,r}^{(d)})$ can be parametrized by a real tuple
$(\Mat{\Theta},\Mat{Q})$, a similar approach to that of symmetric dictionary can be utilized for updates. More specifically,
fixing the contribution of all the other pairs, to update the pairs $([\Mat{\Theta}]_k, [\Mat{Q}]_{2k-1})$ and $([\Mat{\Theta}]_k, [\Mat{Q}]_{2k})$, we need to solve
\begin{eqnarray}\label{Eq:DL-skew-U}
\begin{aligned}
 &\min_{\substack{[\Mat{Q}]_{2k-1:2k}\\ [\Mat{\Theta}]_k}} &  & [\Mat{Q}]_{2k-1}^{\mathrm{T}}\Mat{S'}_{r,k}[\Mat{Q}]_{2k-1} + [\Mat{Q}]_{2k}^{\mathrm{T}}\Mat{S'}_{r,k}[\Mat{Q}]_{2k} + \delta_{r,k}\\
 &~~~~~~\mathrm{s.t. }  &  & \Mat{Q}^{\mathrm{T}}\Mat{Q}=\mathbf{I}_n,\\
 & & & -1<[\Mat{\Theta}]_k<1.
\end{aligned}
\end{eqnarray}
Here, $\Mat{S'}_{r,k}=\frac{1}{2}(\Mat{S}_{r,2k-1}+\Mat{S}_{r,2k})$ is a real symmetric matrix. As shown in
the appendix, $\delta_{r,k}$  is small and can be neglected in practice. As such, the optimal $[\Mat{Q}]_{2k-1}$ and $[\Mat{Q}]_{2k}$ are given by the following theorem.
\begin{theorem}\label{Th:skew-column-U}
Let $[\Mat{Q}]_{-2}\in \mathbb{R}^{m\times (n-2)}$ be a sub-matrix of $\Mat{Q}$ obtained by removing the $(2k-1)$-th and $2k$-th columns, \ie,
\begin{align*}
[\Mat{Q}]_{-2}= \Big([\Mat{Q}]_1;\cdots;[\Mat{Q}]_{2k-2};[\Mat{Q}]_{2k+1};\cdots;[\Mat{Q}]_n \Big).
\end{align*}
Define $\Mat{W}=[\Mat{w}_{1},\cdots,\Mat{w}_{m-n+2}]\in\mathbb{R}^{m\times (m-n+2)}$ as the orthogonal complement basis of $[\Mat{Q}]_{-2}$. If $\Mat{u}_1, \Mat{u}_2\in \mathbb{R}^{(m-n+1)}$ are the eigenvectors of  $\Mat{W}^{\mathrm{T}}\Mat{S'}_{r,k}\Mat{W}$ corresponding to the smallest two  eigenvalues, then $\Mat{W}\Mat{u}_1$ and $\Mat{W}\Mat{u}_2$ are the solutions of $[\Mat{Q}]_{2k-1}$ and $[\Mat{Q}]_{2k}$
in Eq.~\eqref{Eq:DL-skew-U}, respectively.
\end{theorem}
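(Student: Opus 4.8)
The plan is to mirror the argument used for the symmetric case in Theorem~\ref{Th:sym-column-U}, upgrading the single Rayleigh quotient to a trace-minimization over a two-dimensional orthonormal frame. Since we are told that $\delta_{r,k}$ is negligible and that $\Mat{S'}_{r,k}=\frac{1}{2}(\Mat{S}_{r,2k-1}+\Mat{S}_{r,2k})$ is a fixed real symmetric matrix, I would first discard $\delta_{r,k}$ and treat $\Mat{S'}_{r,k}$ as a constant while optimizing only over the two columns $[\Mat{Q}]_{2k-1}$ and $[\Mat{Q}]_{2k}$, the transition term $[\Mat{\Theta}]_k$ being handled separately as in the symmetric update via Eq.~\eqref{Eq:sym-lambda}.

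First I would extract the feasible set from the constraint $\Mat{Q}^{\mathrm{T}}\Mat{Q}=\mathbf{I}_n$. This forces $[\Mat{Q}]_{2k-1}$ and $[\Mat{Q}]_{2k}$ to be unit vectors that are orthogonal to each other and to every column of $[\Mat{Q}]_{-2}$. Hence both sought columns must live in the orthogonal complement of $\mathrm{Span}([\Mat{Q}]_{-2})$, a space of dimension $m-(n-2)=m-n+2$ whose orthonormal basis is exactly $\Mat{W}$. I would therefore reparametrize $[\Mat{Q}]_{2k-1}=\Mat{W}\Mat{u}_1$ and $[\Mat{Q}]_{2k}=\Mat{W}\Mat{u}_2$; because $\Mat{W}^{\mathrm{T}}\Mat{W}=\mathbf{I}_{m-n+2}$, the orthonormality of the two columns is equivalent to $\Mat{u}_1^{\mathrm{T}}\Mat{u}_1=\Mat{u}_2^{\mathrm{T}}\Mat{u}_2=1$ and $\Mat{u}_1^{\mathrm{T}}\Mat{u}_2=0$, i.e. $[\Mat{u}_1,\Mat{u}_2]$ is an orthonormal frame.

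Substituting into the objective collapses it to $\Mat{u}_1^{\mathrm{T}}\Mat{M}\Mat{u}_1+\Mat{u}_2^{\mathrm{T}}\Mat{M}\Mat{u}_2=\tr\big([\Mat{u}_1,\Mat{u}_2]^{\mathrm{T}}\Mat{M}[\Mat{u}_1,\Mat{u}_2]\big)$ with $\Mat{M}=\Mat{W}^{\mathrm{T}}\Mat{S'}_{r,k}\Mat{W}$, a real symmetric matrix. The remaining task is to minimize this trace over all orthonormal pairs. I would invoke the Ky Fan minimum principle (the two-vector extension of Courant--Fischer): over orthonormal frames the minimum of $\tr(\Mat{U}^{\mathrm{T}}\Mat{M}\Mat{U})$ equals the sum of the two smallest eigenvalues of $\Mat{M}$, attained when $\Mat{u}_1,\Mat{u}_2$ are orthonormal eigenvectors of $\Mat{M}$ associated with those two smallest eigenvalues. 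Mapping back through $\Mat{W}$ then yields $[\Mat{Q}]_{2k-1}=\Mat{W}\Mat{u}_1$ and $[\Mat{Q}]_{2k}=\Mat{W}\Mat{u}_2$, which is the claim.

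The main obstacle is the step invoking Ky Fan's theorem rather than a bare Rayleigh quotient: one must justify that minimizing the sum of two quadratic forms jointly, under the coupled orthonormality constraint $\Mat{u}_1^{\mathrm{T}}\Mat{u}_2=0$, cannot beat picking the two bottom eigenvectors. I would either cite Ky Fan's eigenvalue theorem directly, or give a short self-contained argument: diagonalize $\Mat{M}=\Mat{P}\Mat{\Lambda}\Mat{P}^{\mathrm{T}}$, set $\Mat{G}=\Mat{P}^{\mathrm{T}}[\Mat{u}_1,\Mat{u}_2]$ (still column-orthonormal), and note the trace equals $\sum_\ell \lambda_\ell p_\ell$ with $p_\ell$ the diagonal of the projection $\Mat{G}\Mat{G}^{\mathrm{T}}$, so $0\le p_\ell\le 1$ and $\sum_\ell p_\ell=2$; the linear program minimizing $\sum_\ell\lambda_\ell p_\ell$ then places unit weight on the two smallest $\lambda_\ell$. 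A minor point to keep clean is that the two eigenvectors can always be chosen orthonormal (automatic for distinct eigenvalues, and by orthonormalizing within a repeated eigenspace otherwise), which guarantees the minimizer is feasible.
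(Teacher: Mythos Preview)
Your proposal is correct and follows essentially the same approach as the paper: reparametrize $[\Mat{Q}]_{2k-1},[\Mat{Q}]_{2k}$ through the orthogonal complement basis $\Mat{W}$, drop $\delta_{r,k}$, and reduce to a trace minimization over an orthonormal $2$-frame whose optimum is the pair of bottom eigenvectors of $\Mat{W}^{\mathrm{T}}\Mat{S'}_{r,k}\Mat{W}$. The paper's own proof merely says ``similar to Theorem~\ref{Th:sym-column-U}'' and asserts the eigenvector conclusion without justification, so your explicit appeal to Ky Fan (together with the self-contained doubly-stochastic/LP argument) is strictly more detailed than what the paper provides.
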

\begin{proof}
See the appendix for the proof of this theorem.
\end{proof}
To update $\Mat{\Theta}_r$, we also apply the gradient-based method by first smoothing out its value using
\begin{eqnarray}\label{Eq:skew-lambda}
[\Mat{\Theta}]_k = 2\mathrm{Sig}(a\rho)-1,
\end{eqnarray}
and then passing the gradient of the objective with respect to $\rho$.
Once $([\Mat{\Theta}]_k, [\Mat{Q}]_{2k-1})$ are updated, the dictionary atom $(\Mat{\Lambda}_{2,r}^{(d)},\Mat{U}_{2,r}^{(d)})$ can be  obtained from Lemma~\ref{canonical-representaion}.
All the aforementioned details are summarized in Algorithm \ref{Alg:DL}.

\begin{algorithm}[tb]
   \caption{Dictionary learning with LDSs}
   \label{Alg:DL}
\begin{algorithmic}
   \STATE {\bfseries Input:} $\Mat{X}$
   \STATE Extract the system parameters of the data sequences by the two-fold LDS:
   $\Mat{C}_1 = \Mat{C}_2=\Mat{C}, \Mat{A}_{sym} = \frac{1}{2}(\Mat{A}+\Mat{A}^{\mathrm{T}}), \Mat{A}_{skew} = \frac{1}{2}(\Mat{A}-\Mat{A}^{\mathrm{T}})$;
   \STATE Initialize the symmetric and skew-symmetric dictionary atoms by random;
   \FOR{$t=1$ {\bfseries to} $MaxNumIters$}
   \STATE Learn the sparse codes $\Mat{Z}$ by solving Eq. (\ref{Eq:SC2}), where the kernels are computed as shown in Table~\ref{Tab:denotations};
   \FOR{$r=1$ {\bfseries to} $J$}
   \STATE \% update the symmetric dictionary
   \FOR{$k=1$ {\bfseries to} $n$}
   \STATE Compute $\Mat{S}_{r,k}$ as defined in Eq. (\ref{Eq:DL});
   \STATE Update $[\Mat{U}_{1,r}^{(d)}]_k$ according to Theorem \ref{Th:sym-column-U};
   \STATE Update $[\Mat{\Lambda}_{1,r}^{(d)}]_k$ according to Eq. (\ref{Eq:sym-lambda});
   \ENDFOR
   \STATE \% update the skew-symmetric dictionary
   \FOR{$k=1$ {\bfseries to} $\frac{n}{2}$}
   \STATE Compute $\Mat{S'}_{r,k}$ as defined in Eq. (\ref{Eq:DL-skew-U});
   \STATE Update $[\Mat{Q}]_{2k-1:2k}$ according to Theorem \ref{Th:skew-column-U};
   \STATE Update $[\Mat{\Theta}]_k$ according to Eq. (\ref{Eq:skew-lambda});
   \STATE Compute $[\Mat{U}_{2,r}^{(d)}]_{2k-1:2k}$ via Lemma~\ref{canonical-representaion};
   \STATE Compute $[\Mat{\Lambda}_{2,r}^{(d)}]_{2k-1:2k}$via Lemma~\ref{canonical-representaion};
   \ENDFOR
   \ENDFOR
   \ENDFOR
\end{algorithmic}
\end{algorithm}

\section{Models Considering State Covariances}
\label{Sec:state_covariance}

In our preliminary study~\cite{wenbing2016sparse}, we incorporate the state covariance matrix $\Mat{B}$
into kernel functions as the symmetric structure on the transition matrices might be restrictive in certain cases.
Generally speaking and compared to conventional LDSs, the two-fold LDS can model a time-series better. However, we observe that adding the state covariance matrix $\Mat{B}$ can boost the performances further (see Fig.~\ref{Fig:Cambridge_SymSkew_vs_SymGrass} for empirical evaluations).

From Eq.~\eqref{Eq:ARMA}, the conditional probability of $\Mat{y}_{t+1}$ given $\Mat{x}(t)$ is $p(\Mat{y}(t+1)\mid \Mat{x}(t))=\mathcal{N}(\Mat{y}(t+1);\Mat{CA}\Mat{x}(t)+\Mat{\bar{y}},\Mat{CB}\Mat{B}^{\mathrm{T}}\Mat{C}^{\mathrm{T}}+\Mat{R})$.
Since, the covariance of the state dynamic is our main concern, we assume $\Mat{R} \simeq \Mat{0}$.
Empirically, we observe that stable performances can be attained if $\Mat{B}$ is orthogonalized. This can be done by first factorizing
$\Mat{B}$ using SVD as $\Mat{B}=\Mat{U}_B\Mat{S}_B\Mat{V}_B^{\mathrm{T}}$, followed by orthogonalizing in the form $\Mat{B}=\Mat{B}\Mat{U}_B\Mat{S}_B^{-1/2}$.
We define a hybrid kernel function by making use of the covariance term as
\begin{eqnarray}\label{Eq:hybrid-kernel}
k(\Mat{X}_1,\Mat{X}_2)=\beta k_{m}(\Mat{X}_1,\Mat{X}_2) + (1-\beta)k_{cov}(\Mat{X}_1,\Mat{X}_2).
\end{eqnarray}
Here, $k_{m}(\Mat{X}_1,\Mat{X}_2)$ is the kernel for the extended observability subspaces defined in Eq. (\ref{Eq:SC2}),
$k_{cov}(\Mat{X}_1,\Mat{X}_2)=\| \Mat{H}_1^{\mathrm{T}}\Mat{H}_2\|_F^2$ is the kernel value between covariances with
$\Mat{H}=\Mat{CB}^\prime \in\mathcal{R}^{m\times n_v}$ being an orthogonal matrix
, and $\beta$ is a weight to trade-off between $k_{m}(\Mat{X}_1,\Mat{X}_2)$ and $k_{cov}(\Mat{X}_1,\Mat{X}_2)$.


Replacing the kernel function in Eq.~\eqref{Eq:gamma} with the hybrid kernel in Eq.~\eqref{Eq:hybrid-kernel} does not change
the algorithm  described in Alg.~\ref{Alg:DL} dramatically. The only additional calculation is the update of the covariance term $\Mat{H}_r^{(d)}$ for the $r$-th atom. We can derive that, when the covariances of other atoms are given. That is to obtain $\Mat{H}_r^{(d)}$, we can  minimize $ \mathrm{Tr}((\Mat{H}_r^{(d)})^{\mathrm{T}}\Mat{S}_{cov,r}\Mat{H}_r^{(d)})$, where $\Mat{S}_{cov,r}=\sum_{i=1}^Nz_{r,i}\sum_{j=1,j\neq r}^{J}(z_{j,i}\Mat{H}_j^{(d)}(\Mat{H}_j^{(d)})^{\mathrm{T}}-\Mat{H}_i^{(x)}(\Mat{H}_i^{(x)})^{\mathrm{T}})$. The optimal $\Mat{H}_r^{(d)}$ is obtained as the eigenvectors of $\Mat{S}_{cov,r}$ corresponding to the smallest $n_v$ eigenvalues.

\section{Computational Complexity}
\label{Sec:computational_complexity} 

For sparse coding (Eq. (\ref{Eq:SC2})), computing the kernel-values is required. In doing so, we need to
\textbf{I)} perform the SVD decomposition,
\textbf{II)} solve the Lyapunov equation and
\textbf{III)} calculate the matrix multiplication, with the complexity of $O(n^3)$, $O(n^3)$ and $O(mn^2)$, respectively.
Since usually $n\ll m$, all these computations cost $O(mn^2)$. Thus computing $\Mat{K}_{\Mat{D}}$ and $\Mat{k}_{\Mat{DX}}$
costs $O((NJ+J^2)mn^2)$.

As shown in Algorithm \ref{Alg:DL}, for each dictionary atom, we need to calculate $\Mat{S}_{r,k}$ and $\Mat{U}_{1,r}^{(d)}$ to update the symmetric dictionary and $\Mat{S}^\prime_{r,k}$ and $\Mat{U}_{2,r}^{(d)}$ to update the skew-symmetric dictionary. Computing $\Mat{S}_{r,k}$ and $\Mat{S'}_{r,k}$ requires $O( 2J(N+nm^2)+\gamma nm^2)$ flops, where $\gamma$ denotes the number of non-zero elements in the $r$-th row of $\Mat{Z}$. We apply the Grassmannian-based Conjugate Gradient Method~\cite{edelman1998geometry} to find the smallest eigenvector of $\Mat{W}^{\mathrm{T}}\Mat{S}_{r,k}\Mat{W}$, which has a computational cost of $O(m^2)$. This operation needs to be repeated $n$ times until all the columns of $\Mat{U}_{1,r}^{(d)}$ are updated. Thus updating $\Mat{U}_{1,r}^{(d)}$ costs $O(nm^2)$ in total. Similarly, updating $\Mat{U}_{2,r}^{(d)}$ costs $O(nm^2)$. Computing the terms associated to a transition matrix, \ie, $\Mat{\Lambda}_{1,r}^{(d)}$ and $\Mat{\Lambda}_{2,r}^{(d)}$ and the covariance terms, \ie, $\Mat{H}_r^{(d)}$ are much cheaper than those of the measurement terms. Hence, updating a dictionary atom costs $O( 2J(N+nm^2)+\gamma nm^2)$ for one iteration.

Compared to the finite-approximation method proposed in~\cite{harandi2014extrinsic}, our sparse coding and dictionary learning algorithms
can scale up better  when the $L$-order observability is employed to representat LDSs. To be precise, the complexity of the finite method
is $O(L(NJ+J^2)mn^2)$ for sparse coding and $O( J(N+nL^2m^2)+\gamma nL^2m^2)$ for updating one dictionary atom, respectively. Roughly, our methods are $L$ times faster than the finite-approximation method for sparse coding; and $L^2/2$ times faster for dictionary learning.

\section{Empirical Evaluations}
\label{sec:experiments}

In this section, we assess and contrast the proposed coding and dictionary learning methods against state-of-the-art techniques on two groups of experiments. First, we study the performance of the sparse coding algorithms on various tasks including hand gesture recognition, dynamical scene classification, dynamic texture categorization and tactile recognition.
Later, we turn our attention to dictionary learning and evaluate the effectiveness of the proposed learning method.
Hereafter, we refer to
\textbf{1.} the kernel-based sparse coding on LDSs with Martin kernel (\textsection~\ref{Sec:Kernel_SC}) by \emph{LDS-SC-Martin},
\textbf{2.} sparse coding based on the diffeomorphic-embedding (\textsection~\ref{Sec:diffemorphic_embedding}) by
\emph{LDS-SC-Grass},
\textbf{3.} the dictionary learning on LDSs (\textsection~\ref{Sec:DL}) by \emph{LDS-DL} and
\textbf{4.} the dictionary learning equipped with the state covariance (\textsection~\ref{Sec:state_covariance}) by \emph{covLDS-DL}.

The baselines are \textbf{1.} the basic LDS model~\cite{saisan2001dynamic,chan2007classifying} with the Martin distance denoted by
\emph{LDS-Martin} and \emph{LDS-SVM}, where the Nearest-Neighbor (NN) method and SVM are utilized as the classifier, respectively,
\textbf{2.} sparse coding and dictionary learning on finite Grassmannian~\cite{harandi2014extrinsic} denoted by
\emph{gLDS-SC} and \emph{gLDS-DL}, respectively and
\textbf{3.} our preliminary work on dictionary learning with symmetric transition matrices~\cite{wenbing2016sparse} denoted by \emph{LDSST-DL}.

In all our experiments, $n_v = 2$ (see Eq.~\eqref{Eq:ARMA}); the scaling parameter of the Gaussian kernel, \ie, $\sigma^2$  in
Eq.~\eqref{Eq:Gaussian_Kernel} is set to 200 and the sparsity penalty factor $\lambda = 0.1$ (see Eq.~\eqref{Eq:SC2}). Other parameters,
\eg, $n$ in Eq.~\eqref{Eq:ARMA} are application-dependent and their values are reported accordingly later. All experiments are carried out with Matlab 8.1.0.604 (R2013a) on an Intel Core i5, 2.20-GHz CPU with 8-GB RAM.

\begin{table}[!ht]
\centering
\caption{The specification of the benchmark datasets.}
\label{Tab:datasets}
\tabcolsep 3pt 
\begin{tabular}{|c|c|c|c|c|}
\hline
Datasets         & \#Sequences & Spatial size          & \#Frames per Seq. & \#Classes \\ \hline
\emph{Cambridge} & 900         & $320\times240$        & 37-119   & 9         \\ \hline
\emph{UCSD }     & 254         & $320\times240$        & 42-52    & 3         \\ \hline
\emph{UCLA}      & 200         & $160\times110$        & 75       & 50         \\ \hline
\emph{DynTex++}  & 3600        & $50\times50$          & 50       & 36        \\ \hline
\emph{SD }       & 100         & $27\times18$          & 325-526  & 10        \\ \hline
\emph{SPr}       & 97          & $8\times16$           & 503-549  & 10        \\ \hline
\emph{BDH }      & 100         & $8\times9$            & 203-486  & 2         \\ \hline
\end{tabular}
\end{table}

\begin{figure*}[!t]
\begin{center}
\subfigure{
\includegraphics[width=15cm]{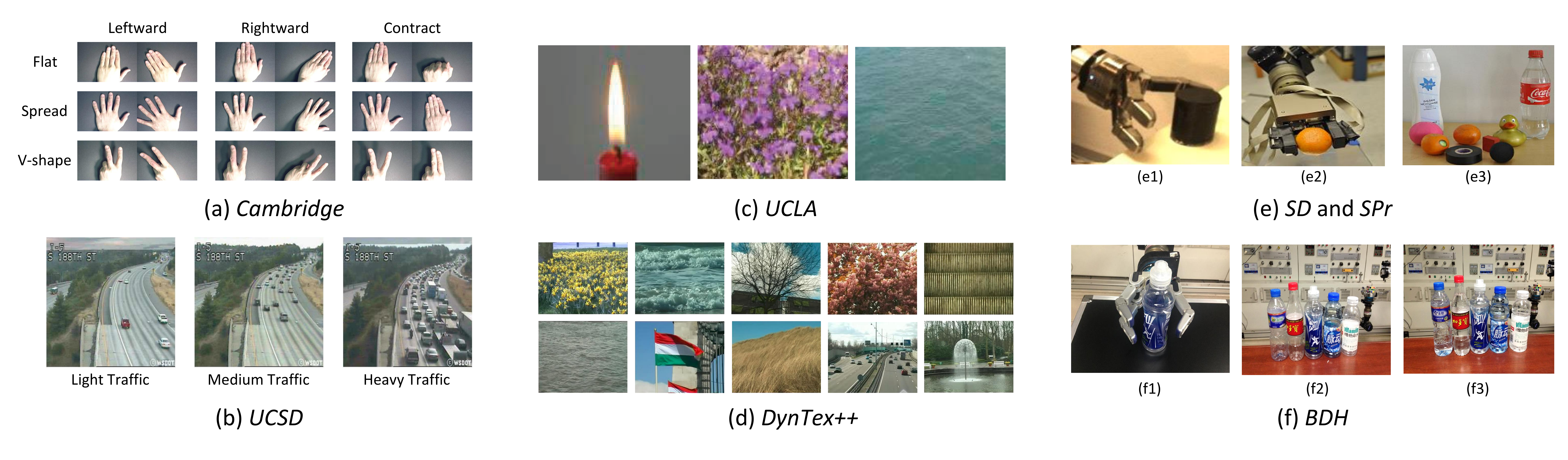}
}
\caption{Samples of the benchmark datasets. (a) \emph{Cambridge}: the image sequences are performed by  $3$ primitive hand shapes with $3$ primitive motions; (b) \emph{UCSD}: representative examples of the three classes; (c) \emph{UCLA}: candle, flower and Sea; (d) \emph{DynTex++}: flowers, sea, naked trees, foliage, escalator, calm water, flags, grass, traffic, and fountains; (e) \emph{SD} and  \emph{SPr}: (e1) the 3-finger Schunk Dextrous hand, (e2) the 2-finger Schunk Parallel hand, (e3) objects in \emph{SD} and  \emph{SPr} including  rubber ball, balsam bottle, rubber duck, empty bottle, full bottle, bad orange, rresh orange, joggling ball, tape, and wood block; (f) \emph{BDH}: (f1) the BH8-280 Hand, (f2) bottles without water, (f3) bottles with water.}
\label{Fig:SamplesOfData}
\end{center}
\end{figure*}

\subsection{Benchmark Datasets}
We consider two types of benchmark datasets, namely \textbf{vision datasets} and \textbf{tactile datasets} in our experiments.
The specification of each dataset is provided in Table~\ref{Tab:datasets}.

\subsection*{Vision datasets}
As for the vision tests, we make use of four widely used datasets, namely, \emph{Cambridge}, \emph{UCSD}, \emph{UCLA}, and \emph{DynTex++}
(see Fig.~\ref{Fig:SamplesOfData} for examples).

\textbf{\emph{Cambridge}.}
The \emph{Cambridge} hand gesture dataset~\cite{kim2009canonical} consists of $900$ images sequences of $9$ gesture classes generated by $3$ primitive hand shapes and $3$ primitive motions. Each class contains $100$ image sequences performed by $2$ subjects, with $10$ arbitrary camera motions and under $5$ illumination conditions. Sample images are demonstrated in Fig.~\ref{Fig:SamplesOfData} (a).
Similar to \cite{Harandi_2015_CVPR}, we resize all images to $20 \times 20$, and use the first $80$ images of each class for testing while the remaining images are used for training purposes.

\textbf{\emph{UCSD}.}
The experiment of scene analysis is performed using the \emph{UCSD} traffic dataset~\cite{chan2005probabilistic}
(see Fig.~\ref{Fig:SamplesOfData} (b) for sample images).
{UCSD} dataset consists of 254 video sequences of highway traffic with a variety of traffic patterns in various weather conditions. Each video is recorded with a resolution of $320 \times 240$ pixels for a duration between $42$ and $52$ frames.
We use the cropped version of the dataset where each video is cropped and resized to $48 \times 48$. The dataset is labeled into three classes with respect to the severity of traffic congestion in each video. The total number of sequences with heavy, medium and light traffic is 44, 45 and 165, respectively. We use the four splits suggested in~\cite{chan2005probabilistic} in our tests.

\textbf{\emph{UCLA}.}
The \emph{UCLA} dataset~\cite{saisan2001dynamic} contains 50 categories of dynamic textures
(see Fig.~\ref{Fig:SamplesOfData} (c) for sample images). Each category consists of 4 gray-scale videos captured from different viewpoints. Each video has 75 frames and cropped to $48\times 48$ by keeping the associated motion. Four random splits of the dataset, as provided  in~\cite{saisan2001dynamic} are used in our experiments.

\textbf{\emph{DynTex++}.}
Dynamic textures are video sequences of complex scenes that exhibit certain stationary properties in the time domain,  such as water on the surface of a lake, a flag fluttering in the wind, swarms of birds, humans in crowds, etc. The constant change poses a challenge for applying traditional vision algorithms to these videos.  The dataset \emph{DynTex++}~\cite{ghanem2010maximum}
(see Fig.~\ref{Fig:SamplesOfData} (d) for sample images) is a variant of the original
\emph{DynTex}~\cite{dyntex} dataset. It contains $3600$ videos of $36$ classes ($100$ videos of size $50\times50\times50$ per class). In this paper, we apply the same test protocol as \cite{ghanem2010maximum}, namely, half of the videos are applied as the training set and the other half as the testing set over 10 trials. We utilize the histogram of LBP from Three Orthogonal Planes (LBP-TOP)~\cite{zhao2007dynamic} as the input feature by splitting each video into sub-videos of length $8$, with a 6-frame overlap.

\subsection*{Tactile datasets}
Recognizing objects that a robot grasps via the tactile series
is an active research area in robotics~\cite{madry2014st}. The tactile series obtained from the force sensors can be used to determine  properties of an object (\eg, shape or softness). In our experiments, the recognition tasks are evaluated on  three datasets:
namely \emph{SD}~\cite{jingwei15tactile},  \emph{SPr}~\cite{madry2014st} and  \emph{BDH}~\cite{madry2014st}.  The \emph{SD} dataset contains $100$ tactile series of $10$ household objects grasped by a 3-finger Schunk Dextrous (SD) hand. The \emph{SPr} dataset composed of $97$ sequences  with the same object classes as \emph{SD}, but is captured with a 2-finger Schunk Parallel (SPr) hand (see Fig.~\ref{Fig:SamplesOfData} (e) for sample images). \emph{BDH} consists of $100$ tactile sequences generated by controlling the BH8-280 Hand to grasp $5$ different bottles with water or without water (see Fig.~\ref{Fig:SamplesOfData} (f) for sample images).  The task is to predict whether the bottle is empty or is filled with water. All datasets are split randomly into the training and testing sets with a ratio of $9:1$ over $10$ trials~\cite{madry2014st, jingwei15tactile}.

\begin{figure*}[!ht]
\begin{center}
\subfigure{
\includegraphics[width=3.4cm,height=2.8cm]{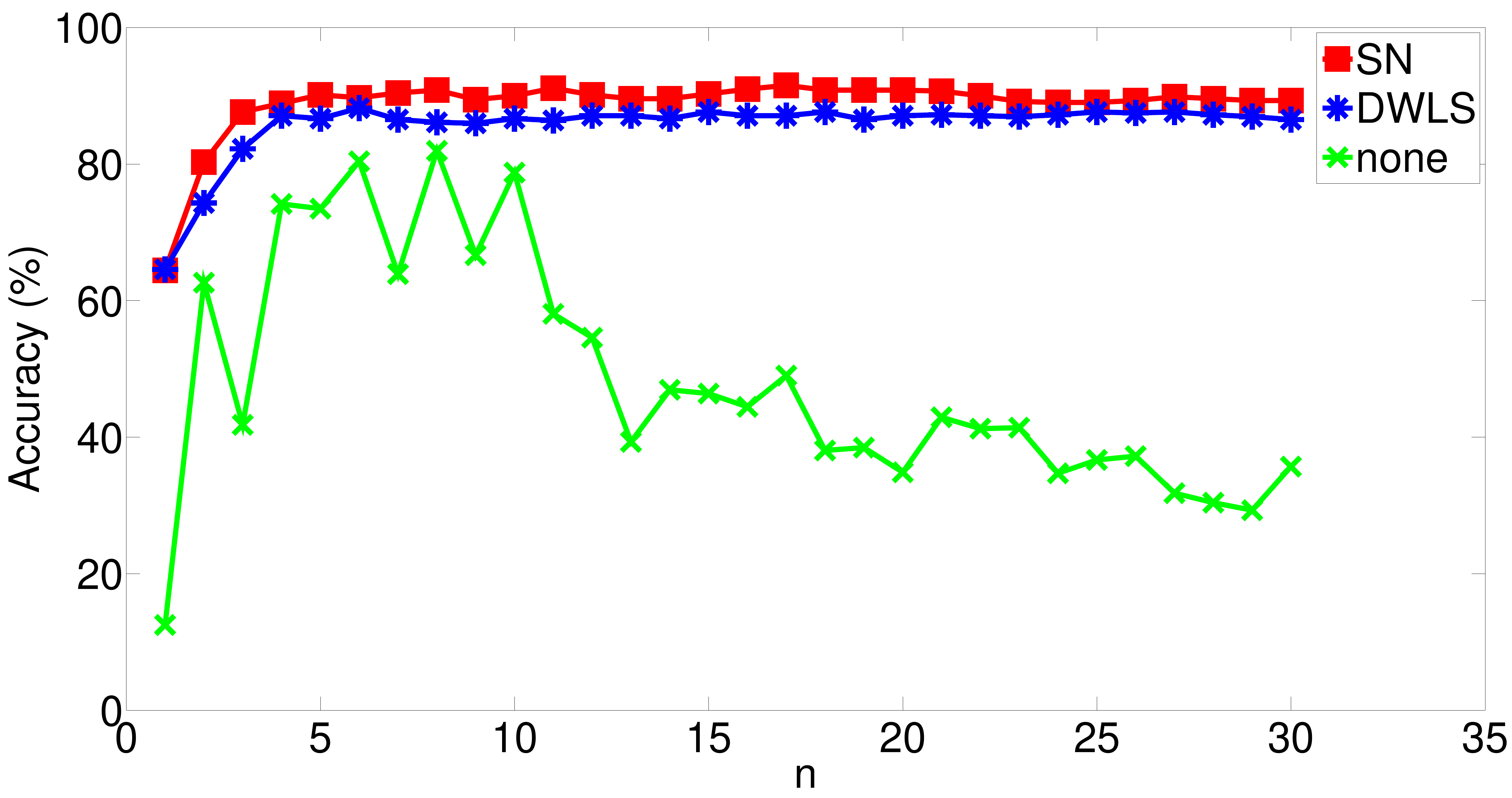}
}
\subfigure{
\includegraphics[width=3.4cm,height=2.8cm]{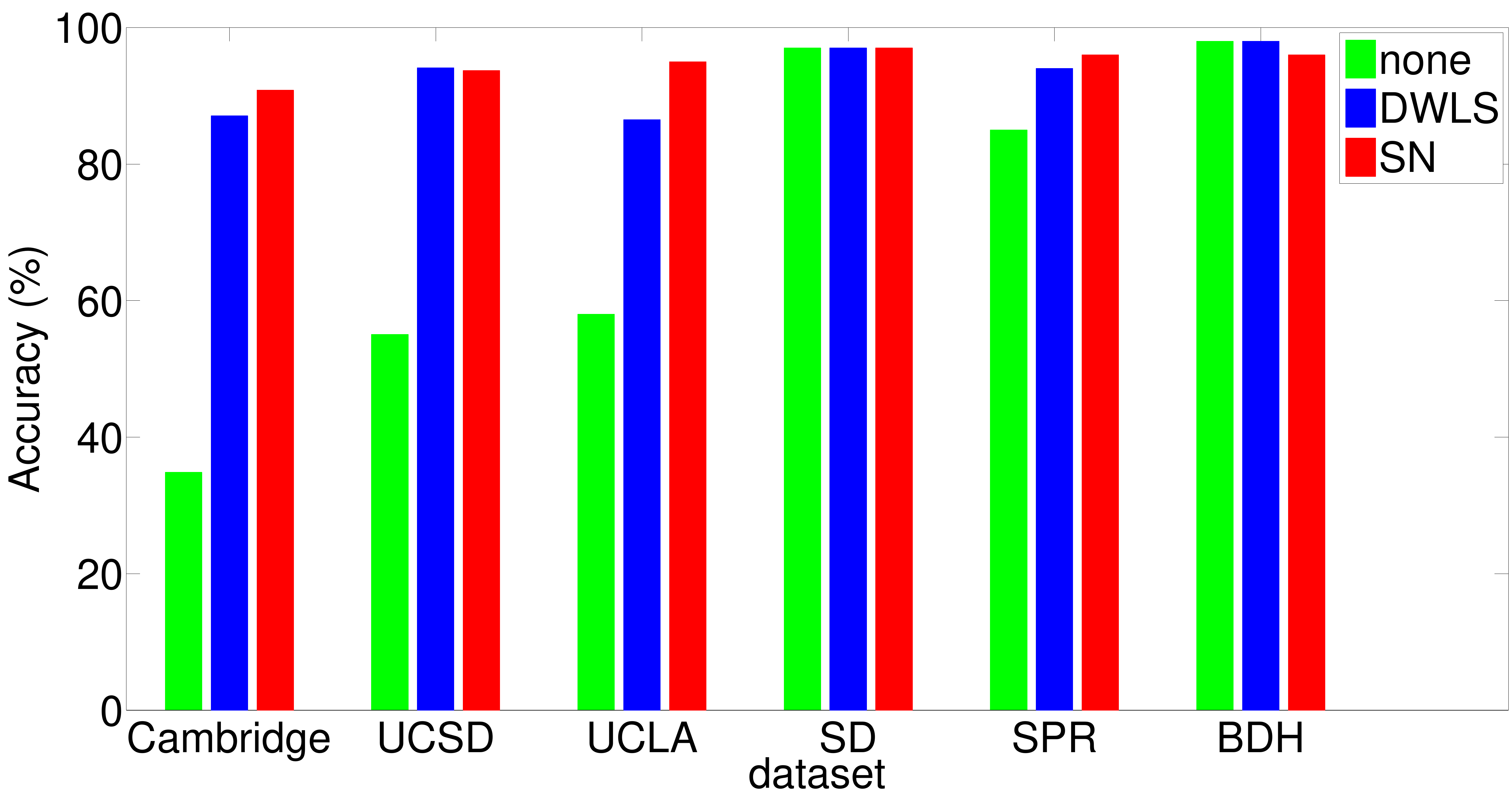}
}
\subfigure{
\includegraphics[width=3.4cm,height=2.8cm]{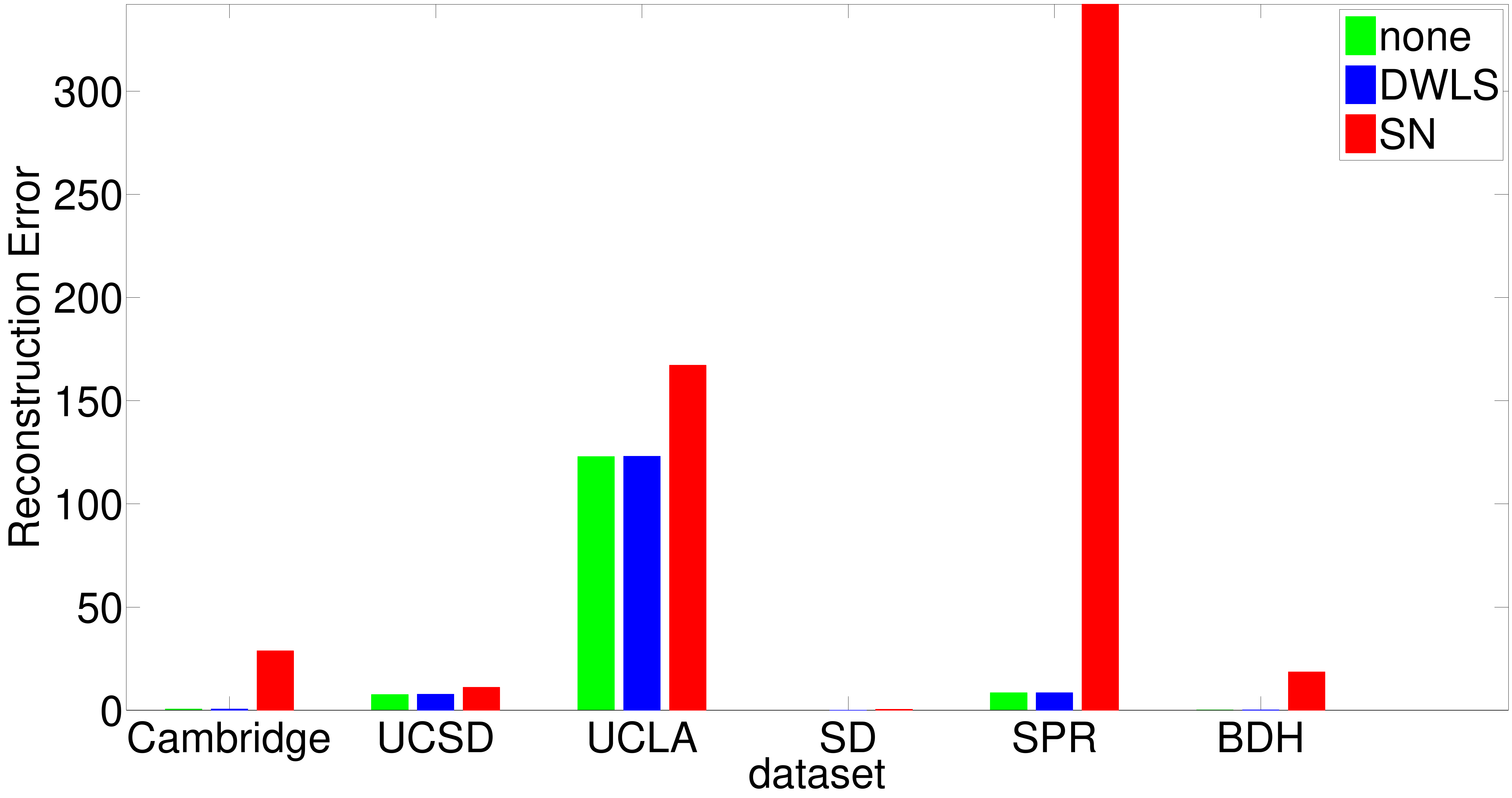}
}
\subfigure{
\includegraphics[width=3.4cm,height=2.8cm]{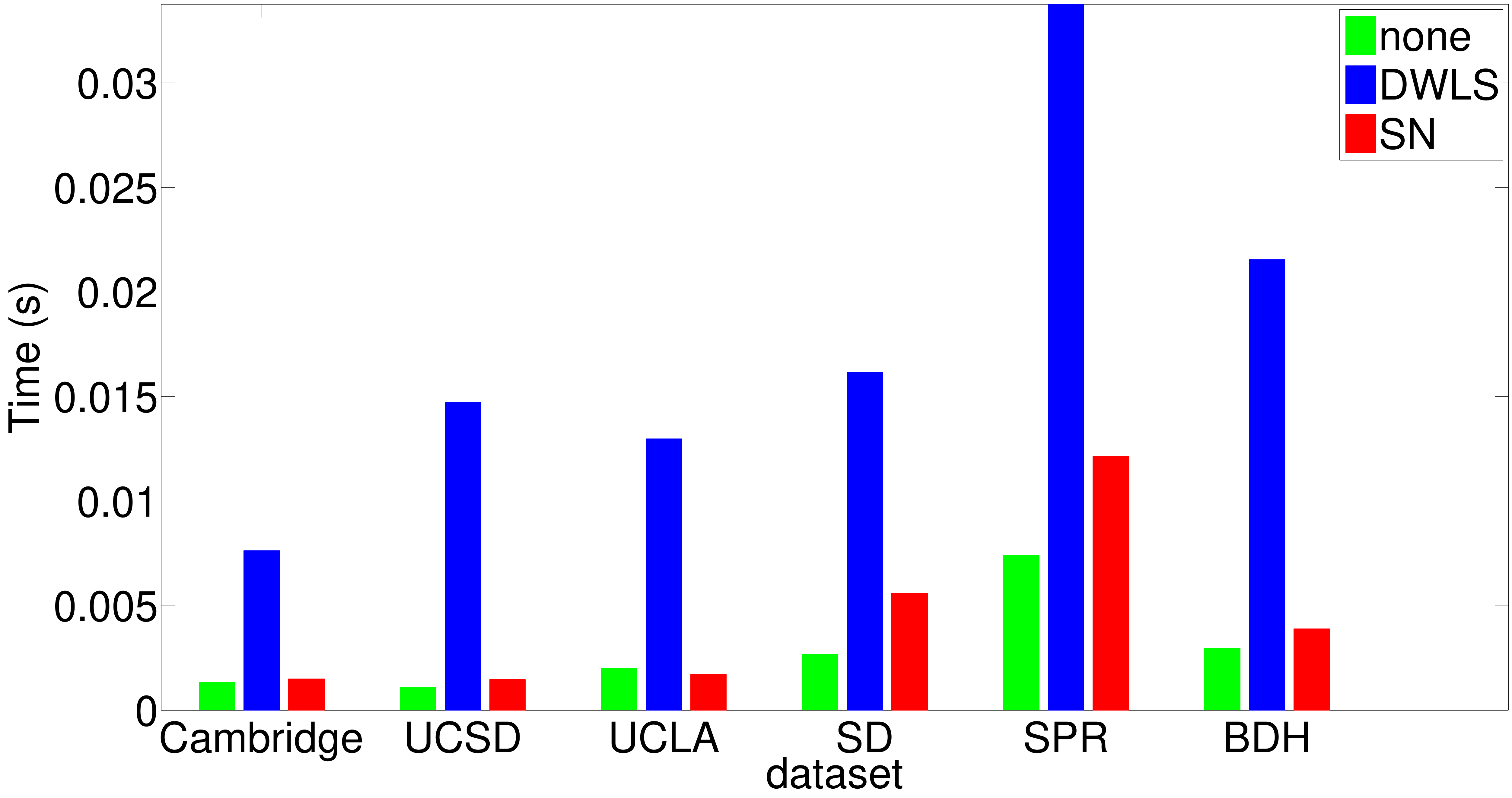}
}
\caption{Comparisons between stable and non-stable LDSs. The left panel shows the average classification accuracies for non-stable LDSs (denoted by none) and stabilized LDSs by SN and DWLS methods on the \emph{Cambridge} dataset. The horizontal axis represents the state
dimensionality $n$ here. The second panel from left shows the average classification accuracies for
the \emph{Cambridge}, \emph{UCSD}, \emph{UCLA}, \emph{SD}, \emph{SPR}, and \emph{BDH} datasets with $n=20$.
The third and fourth diagrams show the average reconstruction errors and average learning times for the studied datasets.
The parameter $a$ in SN is set to $4$ for the \emph{UCSD}, \emph{UCLA}, \emph{SD}, \emph{SPR} and \emph{BDH} datasets and $2.5$ for the \emph{Cambridge} dataset.}

\label{Fig:stability}
\end{center}
\end{figure*}

\begin{figure*}[!ht]
\begin{center}
\subfigure[]{
\includegraphics[width=4.8cm,height=3.6cm]{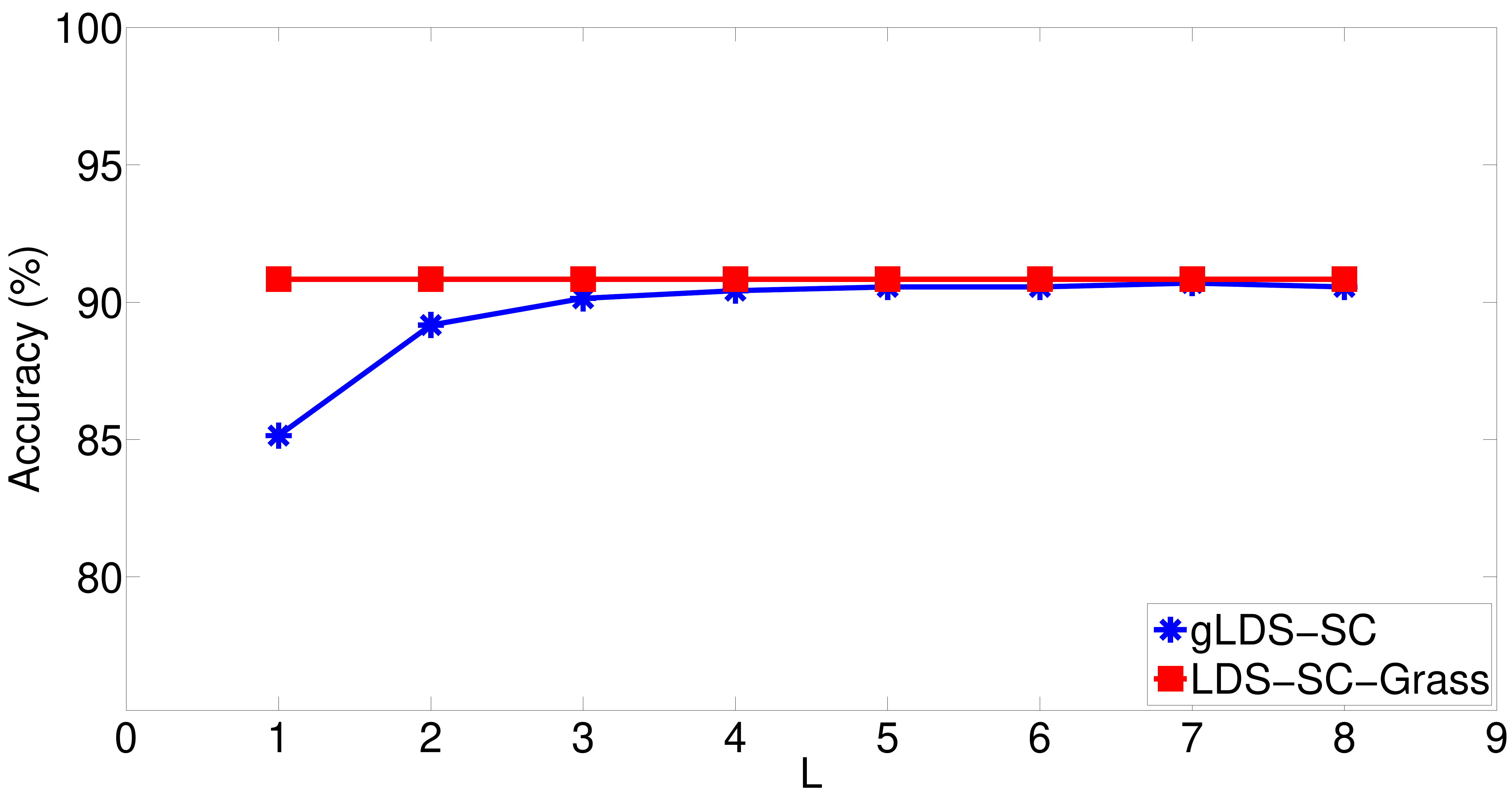}
}
\subfigure[]{
\includegraphics[width=4.8cm,height=3.6cm]{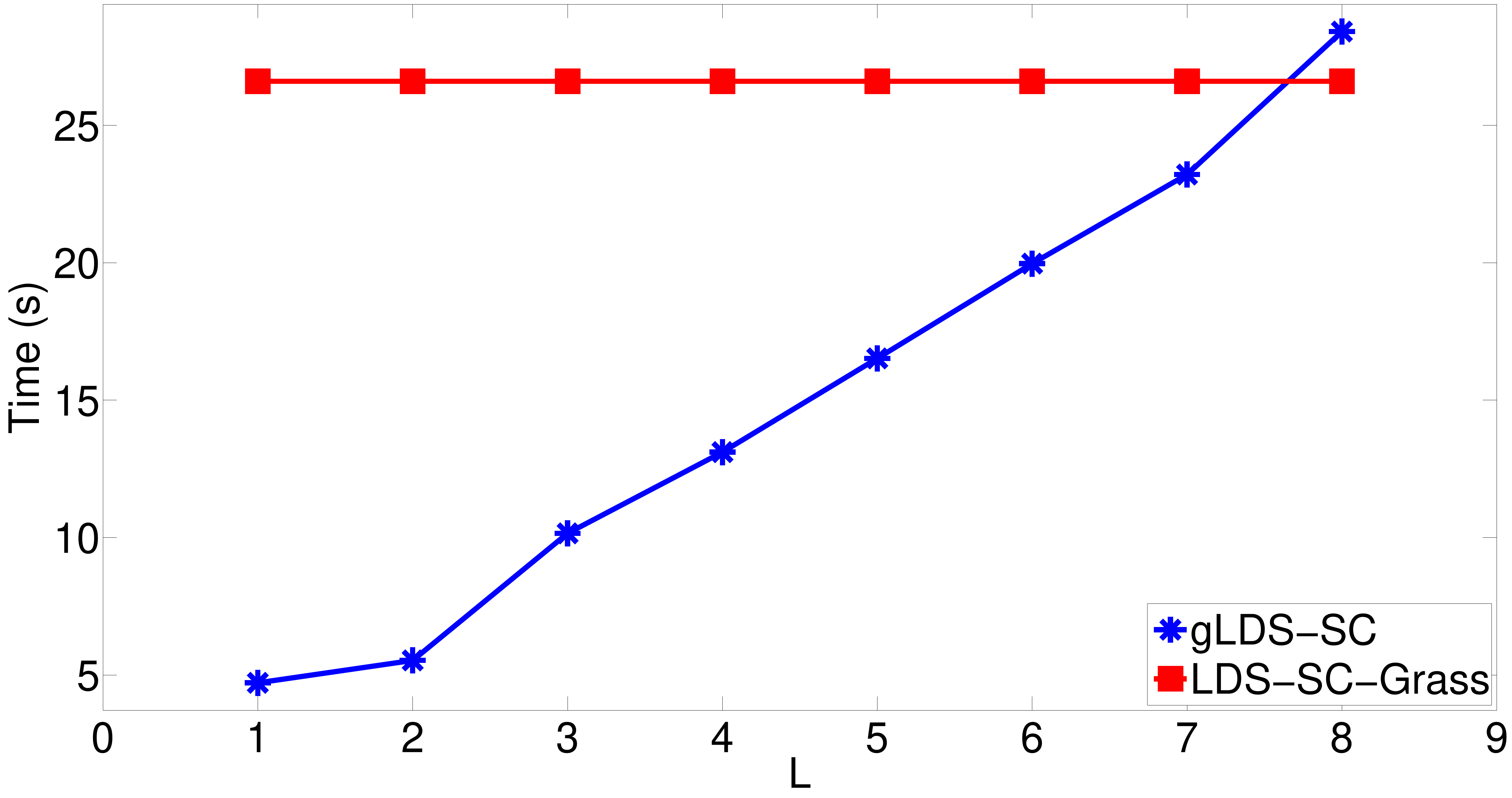}
}
\subfigure[]{
\includegraphics[width=4.8cm,height=3.6cm]{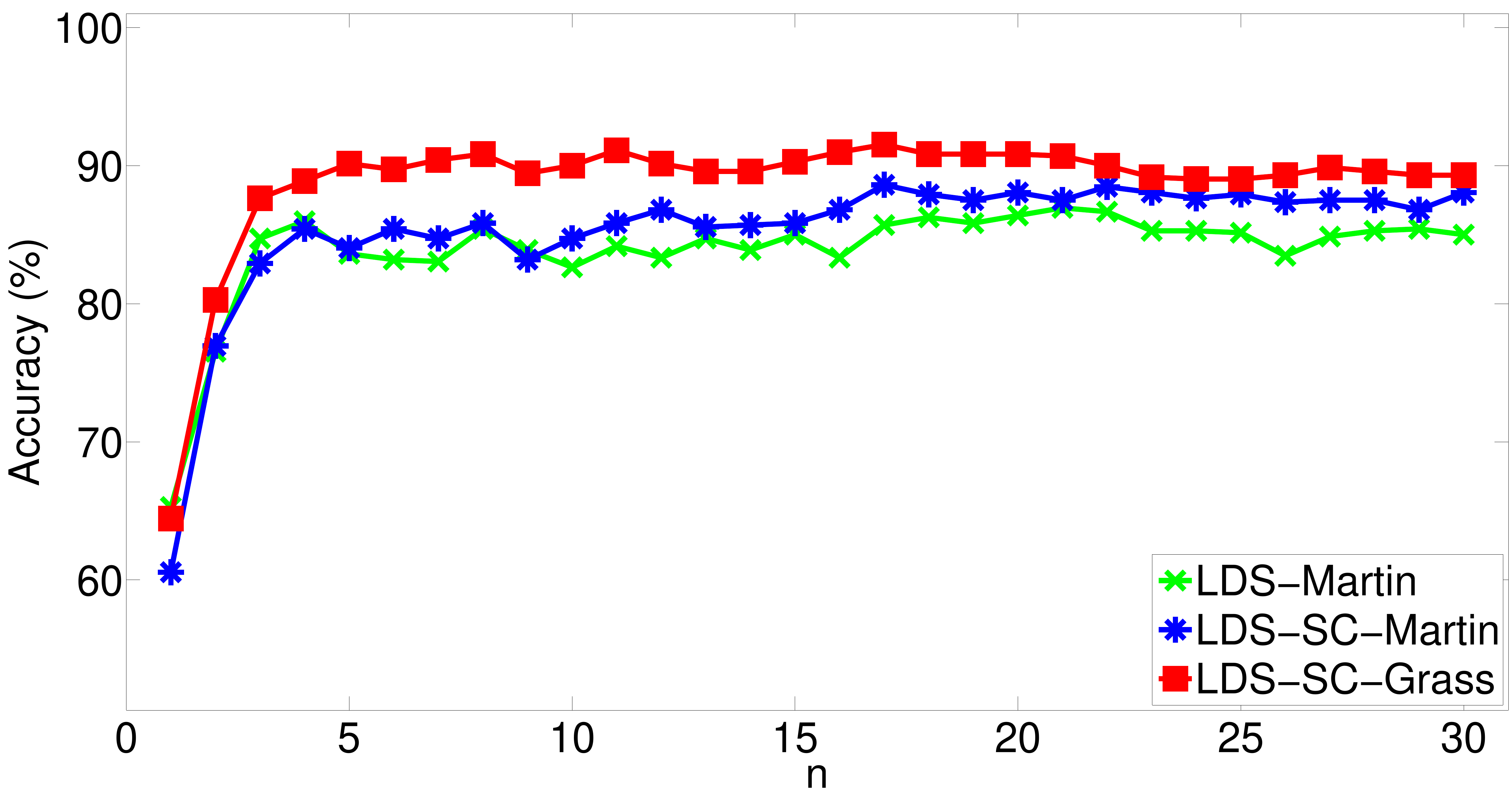}
}
\caption{Performance of the proposed sparse coding methods on \emph{Cambridge}. (a) The classification accuracies of LDS-SC-Grass and gLDS-SC versus observability order $L$ (here, $n=20$). (b) Training time of
LDS-SC-Grass and gLDS-SC versus the observability order $L$. (c) Performances of LDS-Martin, LDS-SC-Martin and LDS-SC-Grass versus state dimensionality $n$.}
\label{Fig:Cambridge}
\end{center}
\end{figure*}

\begin{figure*}[!ht]
\begin{center}
\subfigure[]{
\includegraphics[width=4.8cm,height=3.6cm]{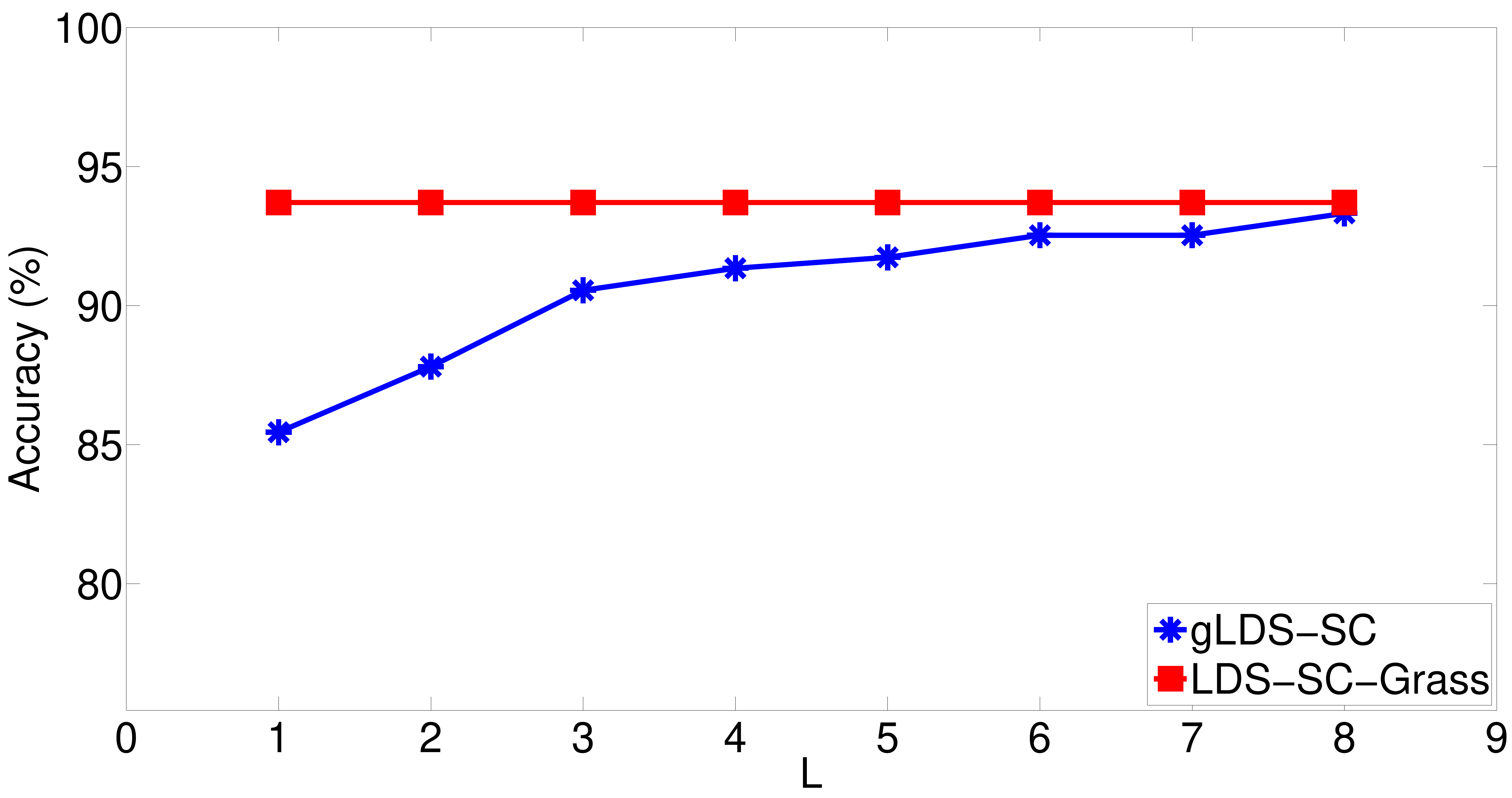}
}
\subfigure[]{
\includegraphics[width=4.8cm,height=3.6cm]{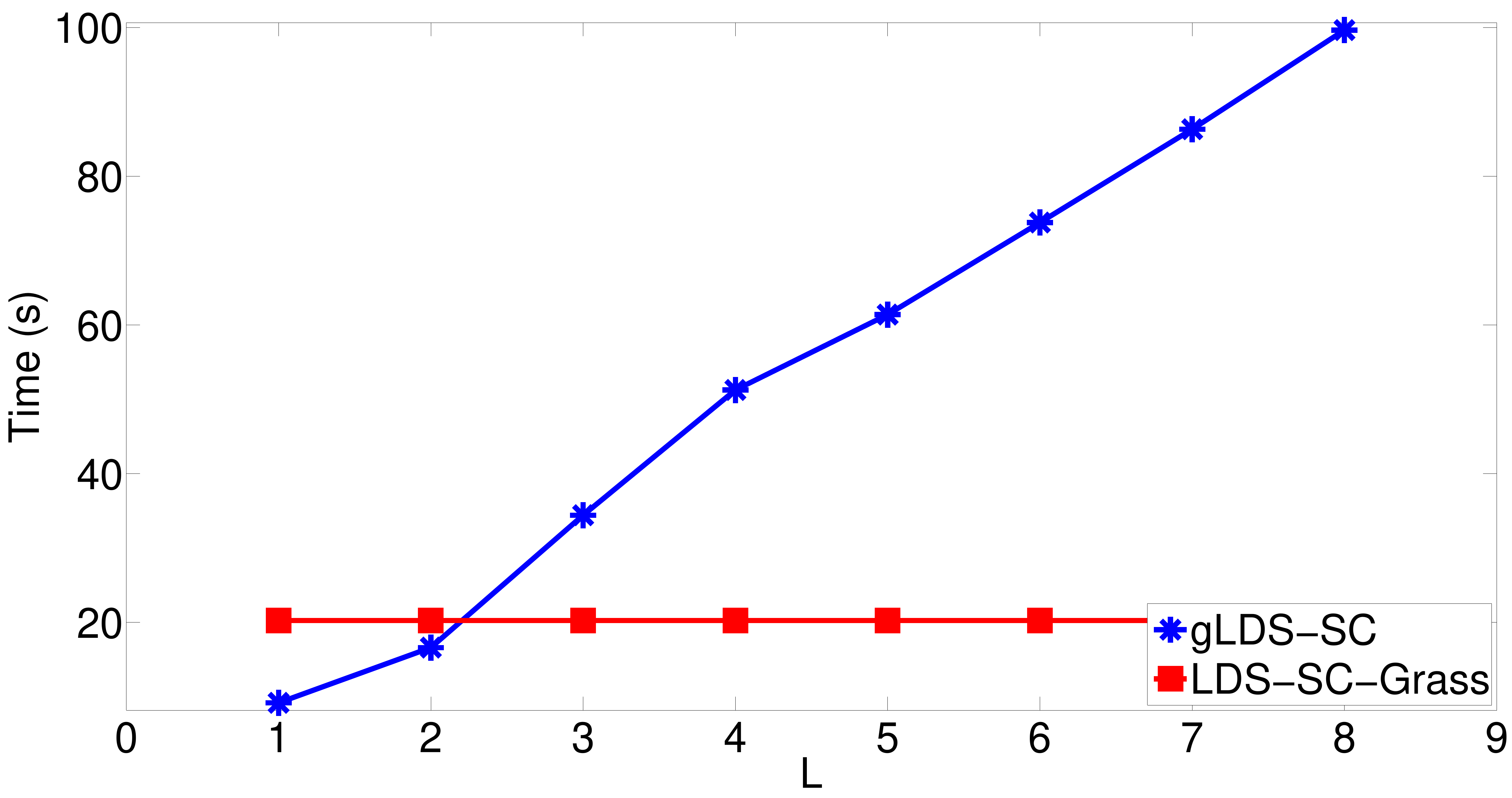}
}
\subfigure[]{
\includegraphics[width=4.8cm,height=3.6cm]{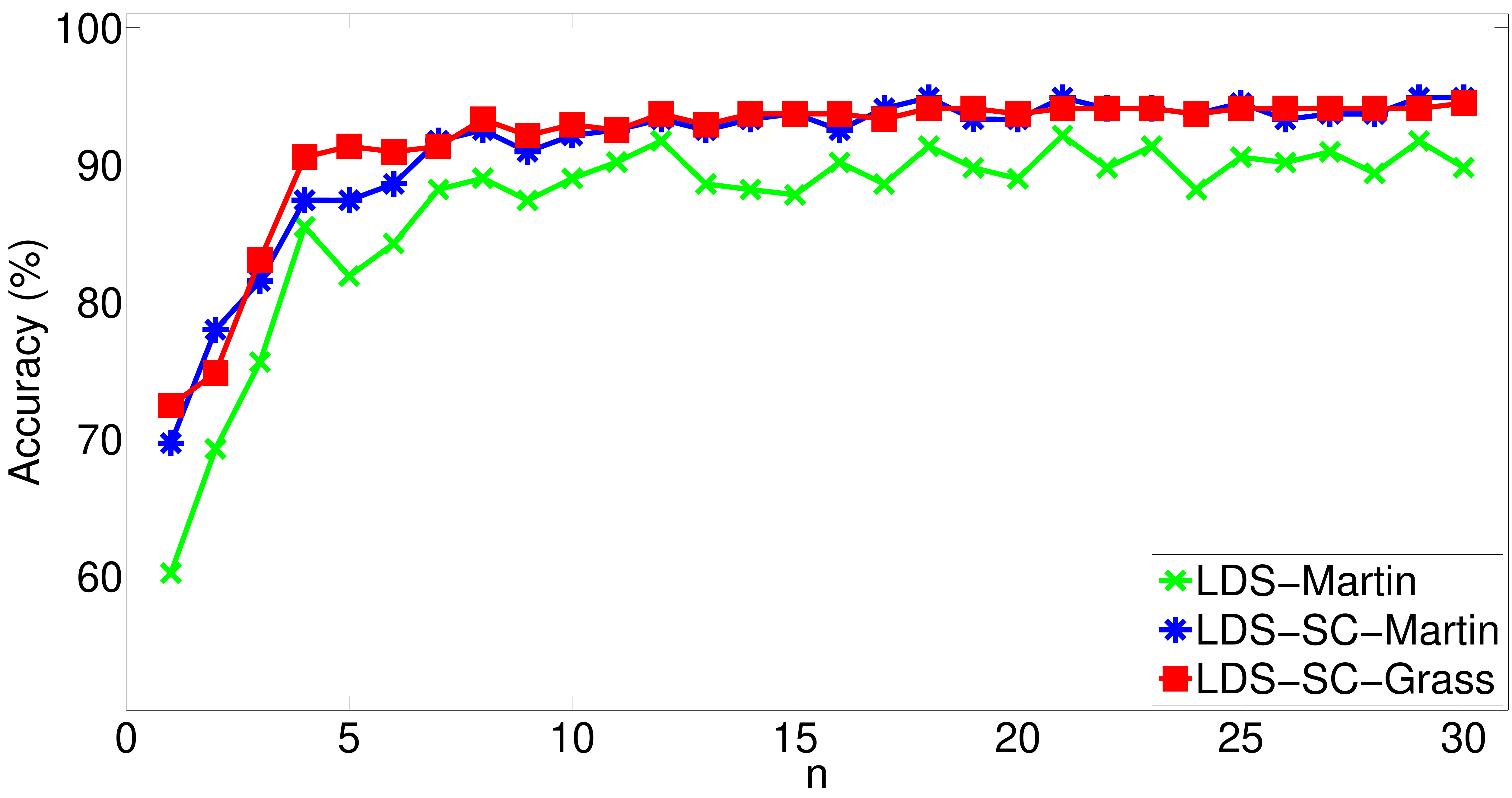}
}
\caption{Performance of the proposed sparse coding methods on \emph{UCSD}. (a) The classification accuracies of LDS-SC-Grass and gLDS-SC versus observability order $L$ (here, $n=20$). (b) Training time of
LDS-SC-Grass and gLDS-SC versus the observability order $L$. (c) Performances of LDS-Martin, LDS-SC-Martin and LDS-SC-Grass versus state dimensionality $n$.}
\label{Fig:UCSD}
\end{center}
\end{figure*}
\begin{figure*}[!ht]
\begin{center}
\subfigure[]{
\includegraphics[width=4.8cm,height=3.6cm]{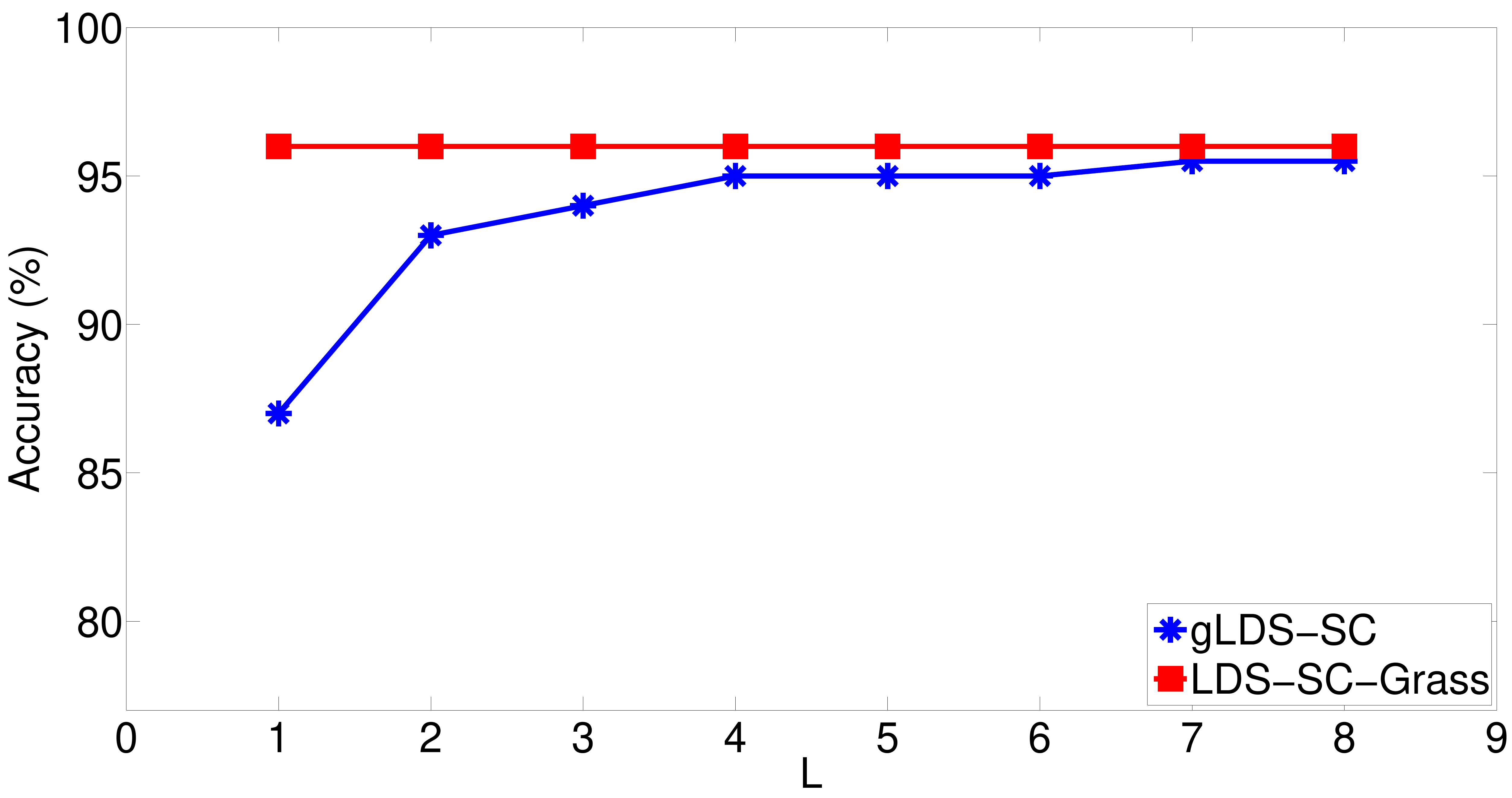}
}
\subfigure[]{
\includegraphics[width=4.8cm,height=3.6cm]{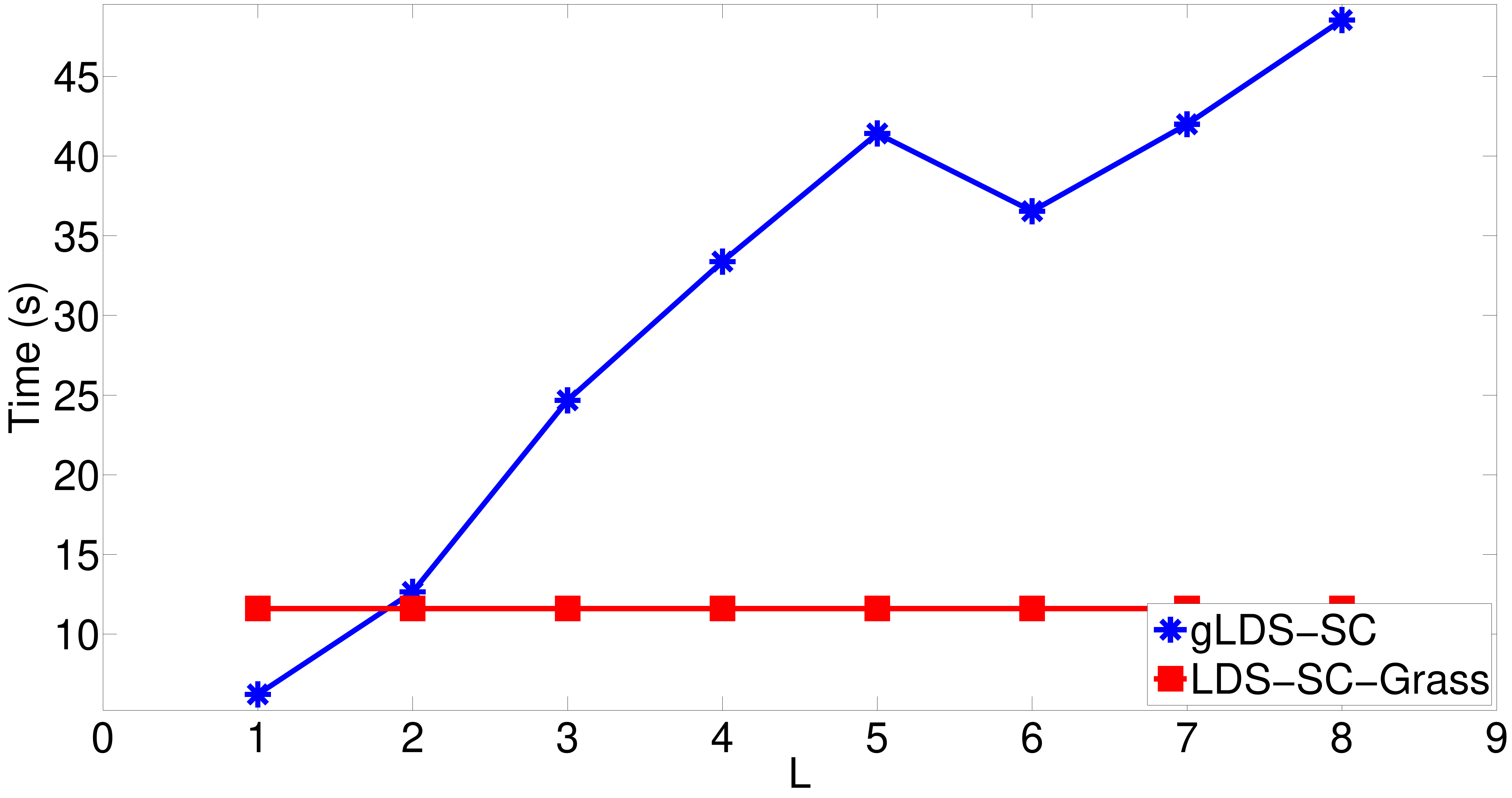}
}
\subfigure[]{
\includegraphics[width=4.8cm,height=3.6cm]{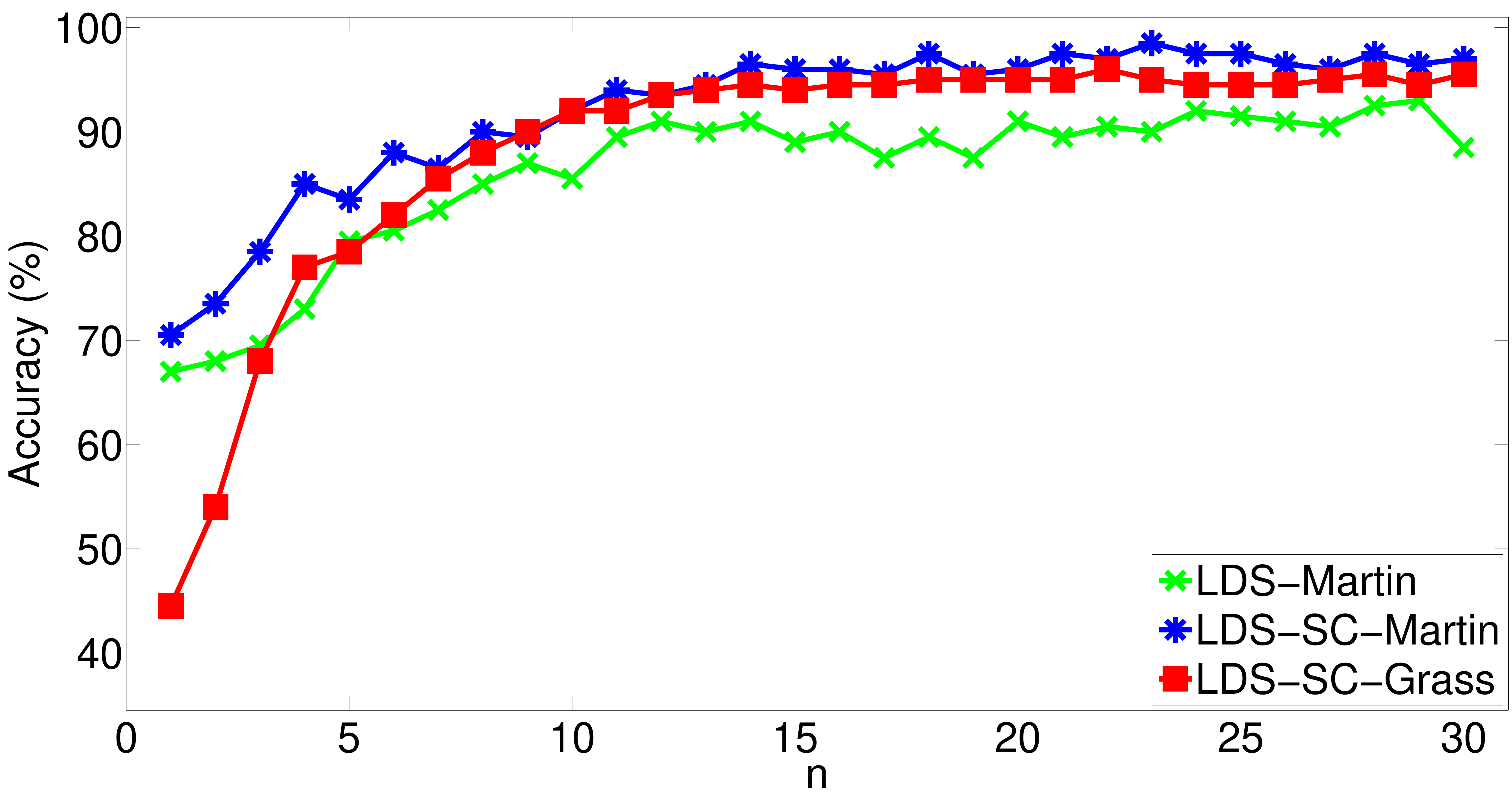}
}
\caption{Performance of the proposed sparse coding methods on the \emph{UCLA} dataset.
(a) The classification accuracies of LDS-SC-Grass and gLDS-SC versus observability order $L$ (here, $n=20$). (b) Training time of
LDS-SC-Grass and gLDS-SC versus the observability order $L$. (c) Performances of LDS-Martin, LDS-SC-Martin and LDS-SC-Grass versus state dimensionality $n$.}
\label{Fig:SC}
\end{center}
\end{figure*}

\begin{figure*}[!ht]
\begin{center}
\subfigure[]{
\includegraphics[width=4.8cm,height=3.6cm]{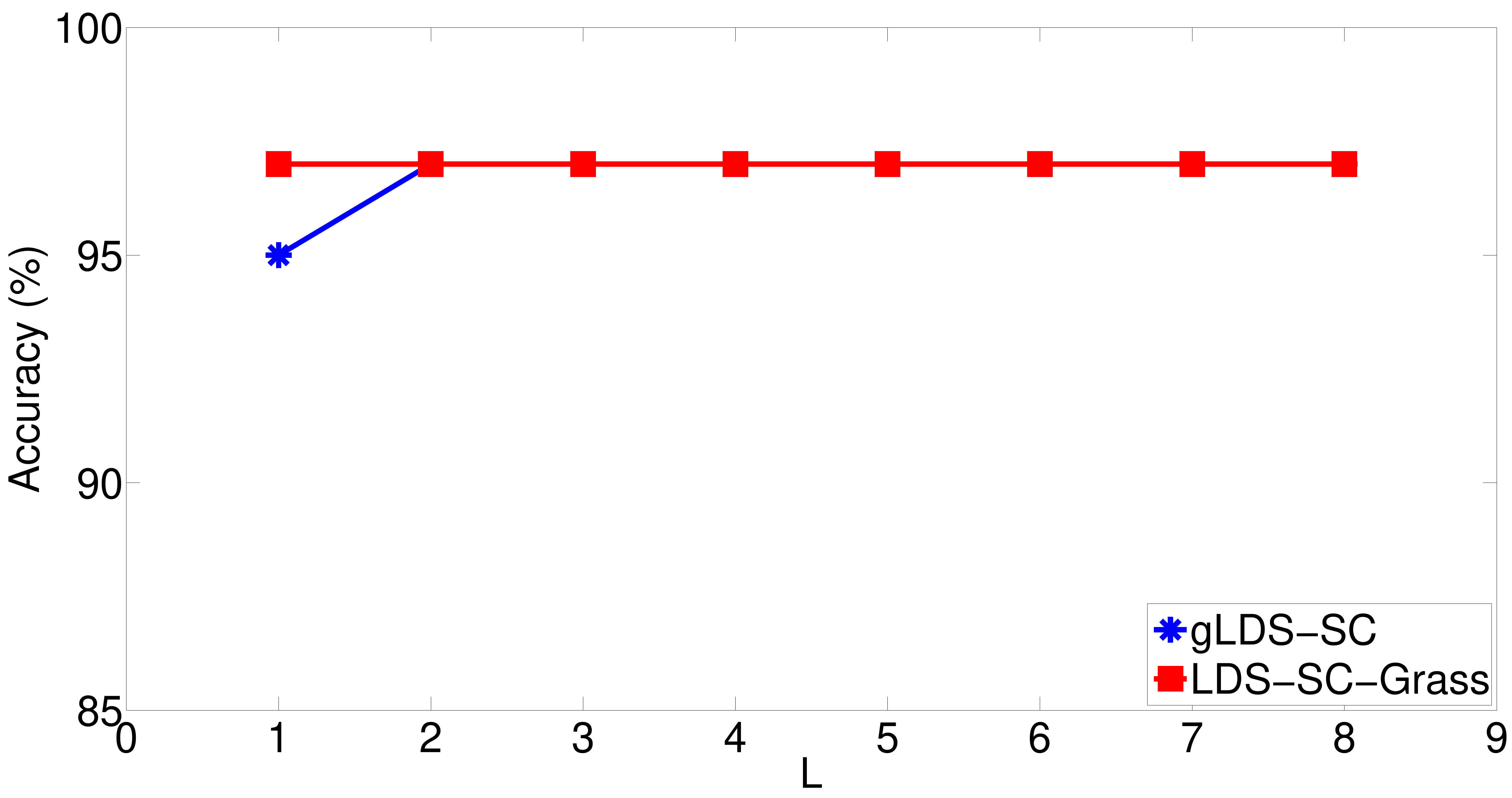}
}
\subfigure[]{
\includegraphics[width=4.8cm,height=3.6cm]{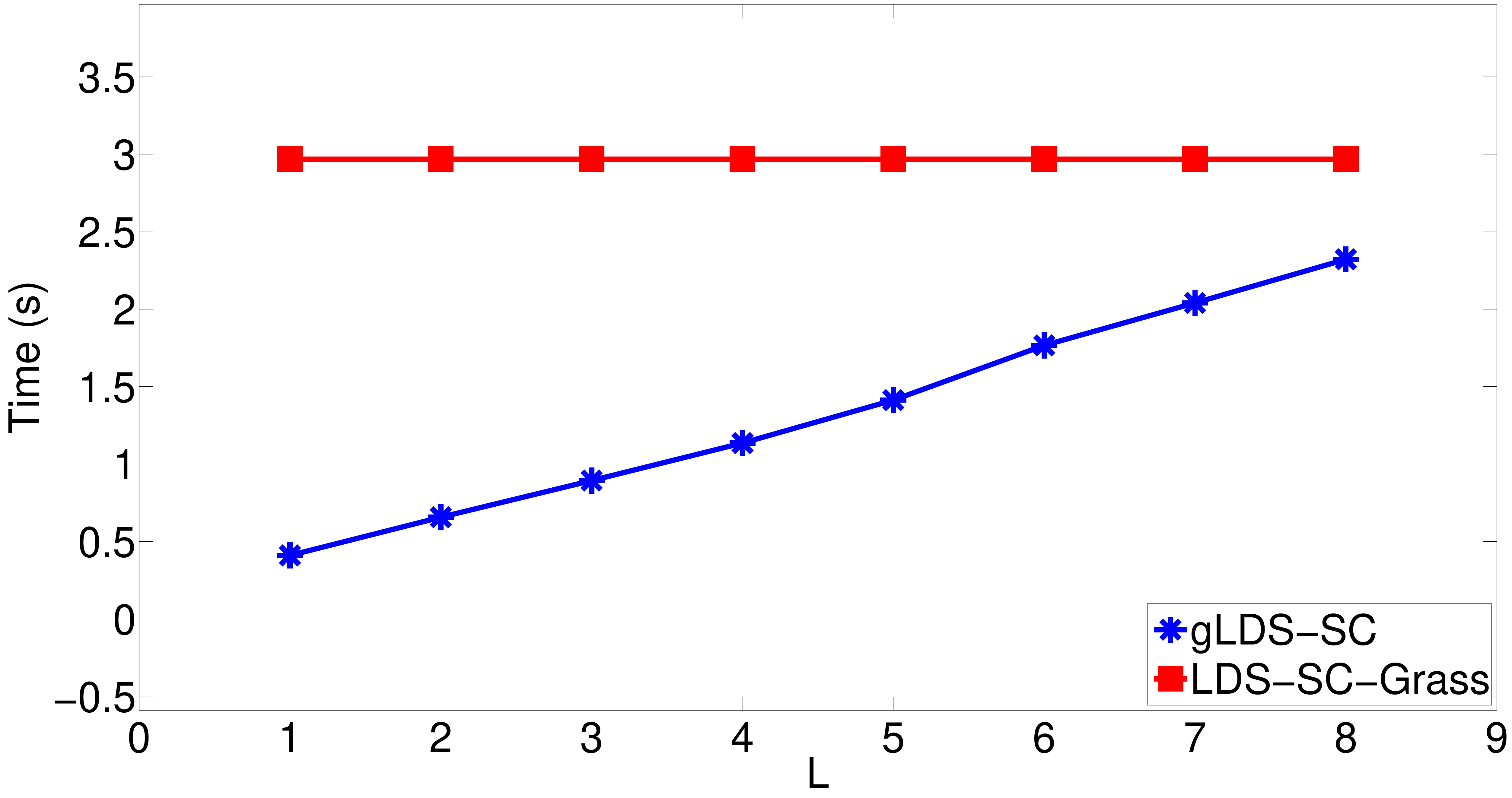}
}
\subfigure[]{
\includegraphics[width=4.8cm,height=3.6cm]{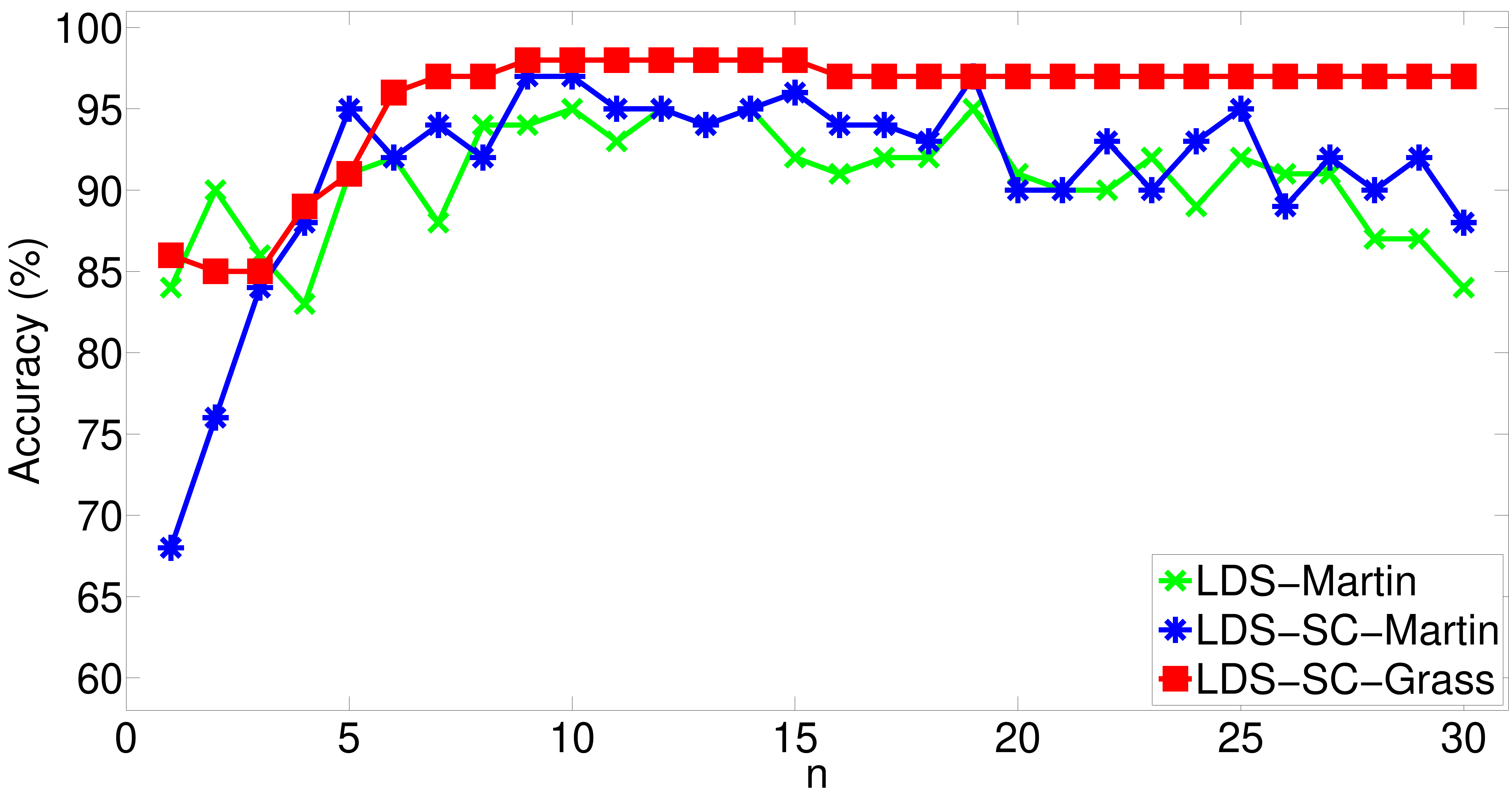}
}
\caption{Performance of the proposed sparse coding methods on \emph{SD}. (a) The classification accuracies of LDS-SC-Grass and gLDS-SC versus observability order $L$ (here, $n=20$). (b) Training time of
LDS-SC-Grass and gLDS-SC versus the observability order $L$. (c) Performances of LDS-Martin, LDS-SC-Martin and LDS-SC-Grass versus state dimensionality $n$.}
\label{Fig:SD}
\end{center}
\end{figure*}

\begin{figure*}[!ht]
\begin{center}
\subfigure[]{
\includegraphics[width=4.8cm,height=3.6cm]{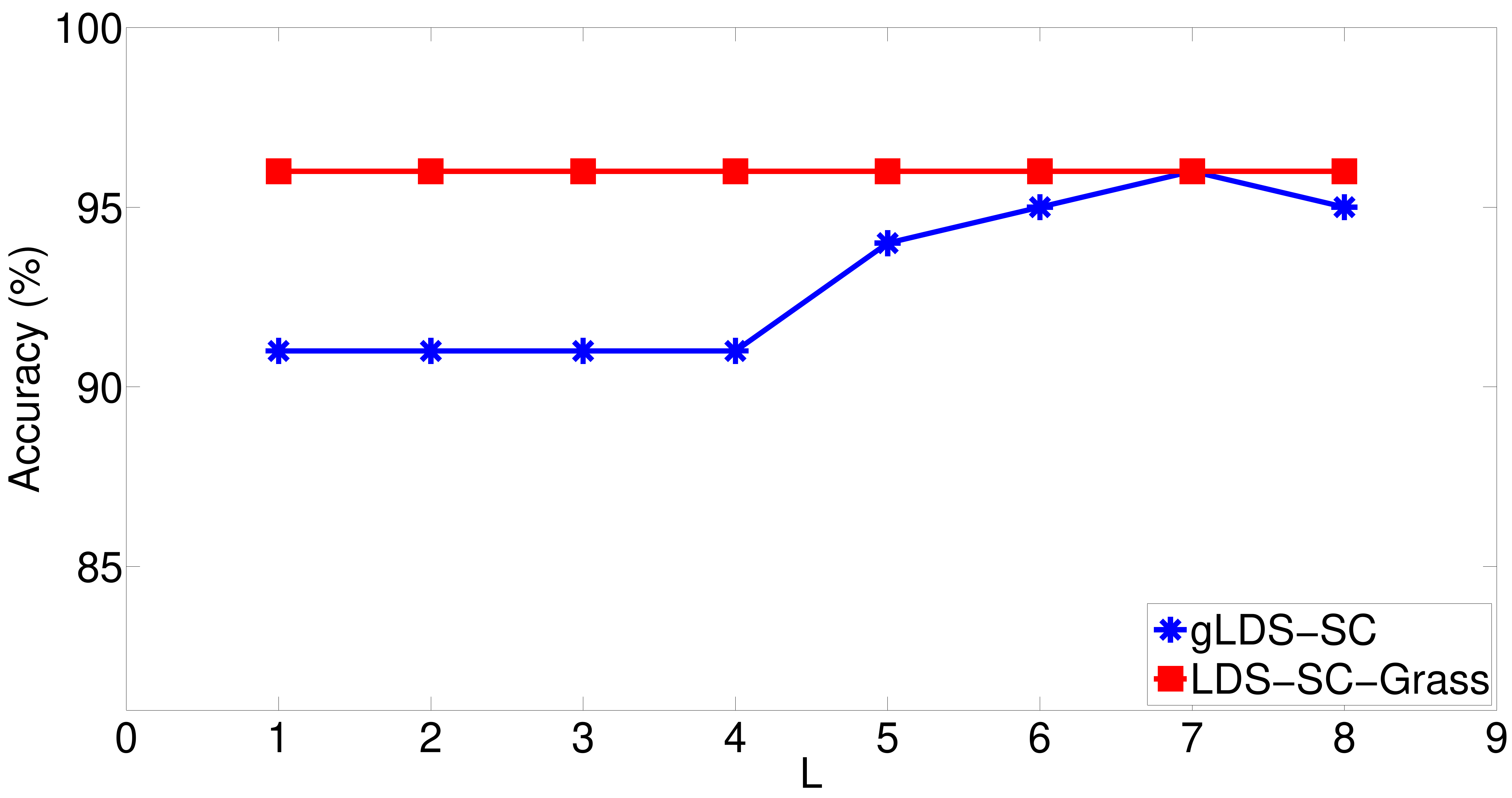}
}
\subfigure[]{
\includegraphics[width=4.8cm,height=3.6cm]{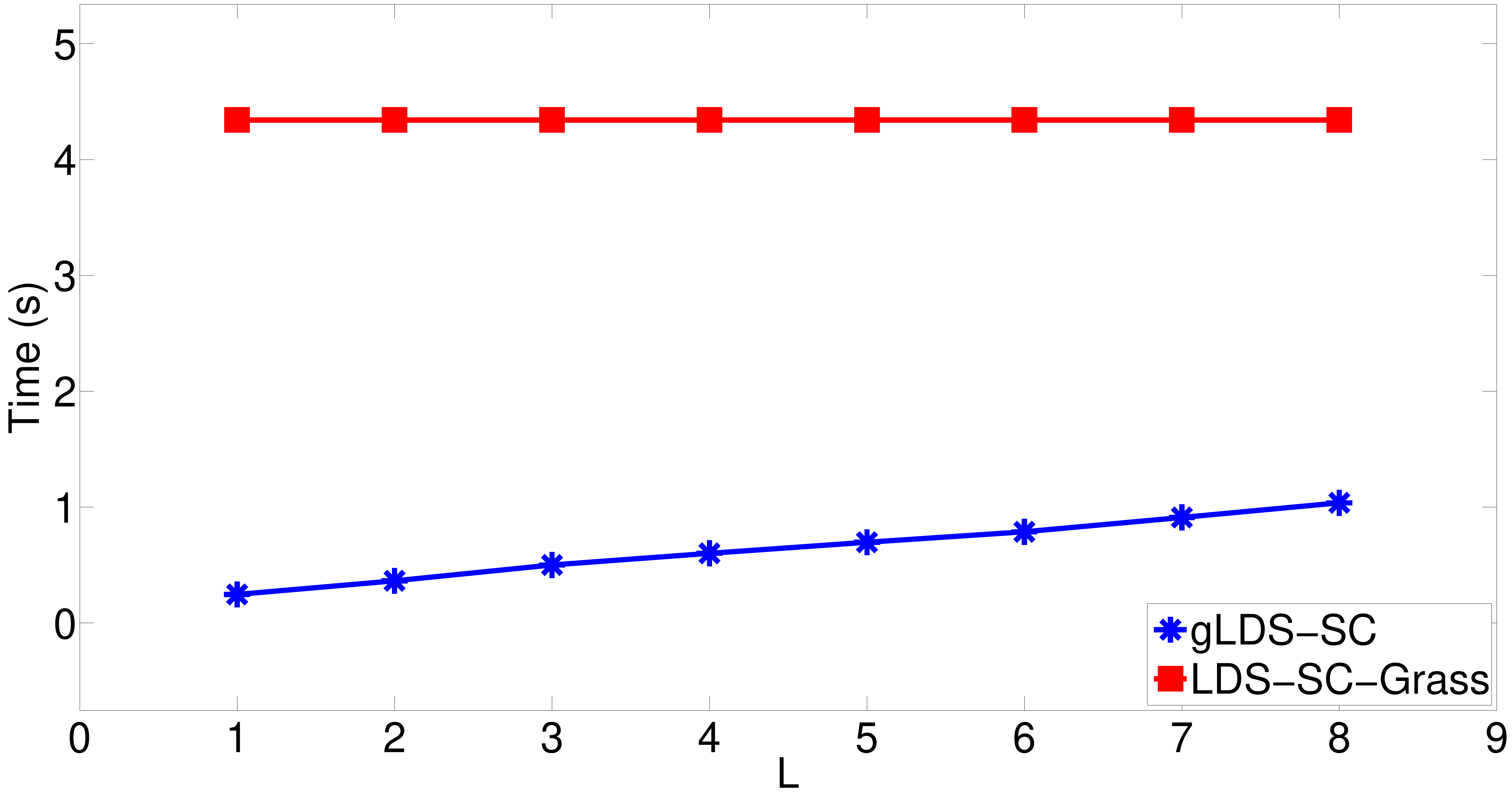}
}
\subfigure[]{
\includegraphics[width=4.8cm,height=3.6cm]{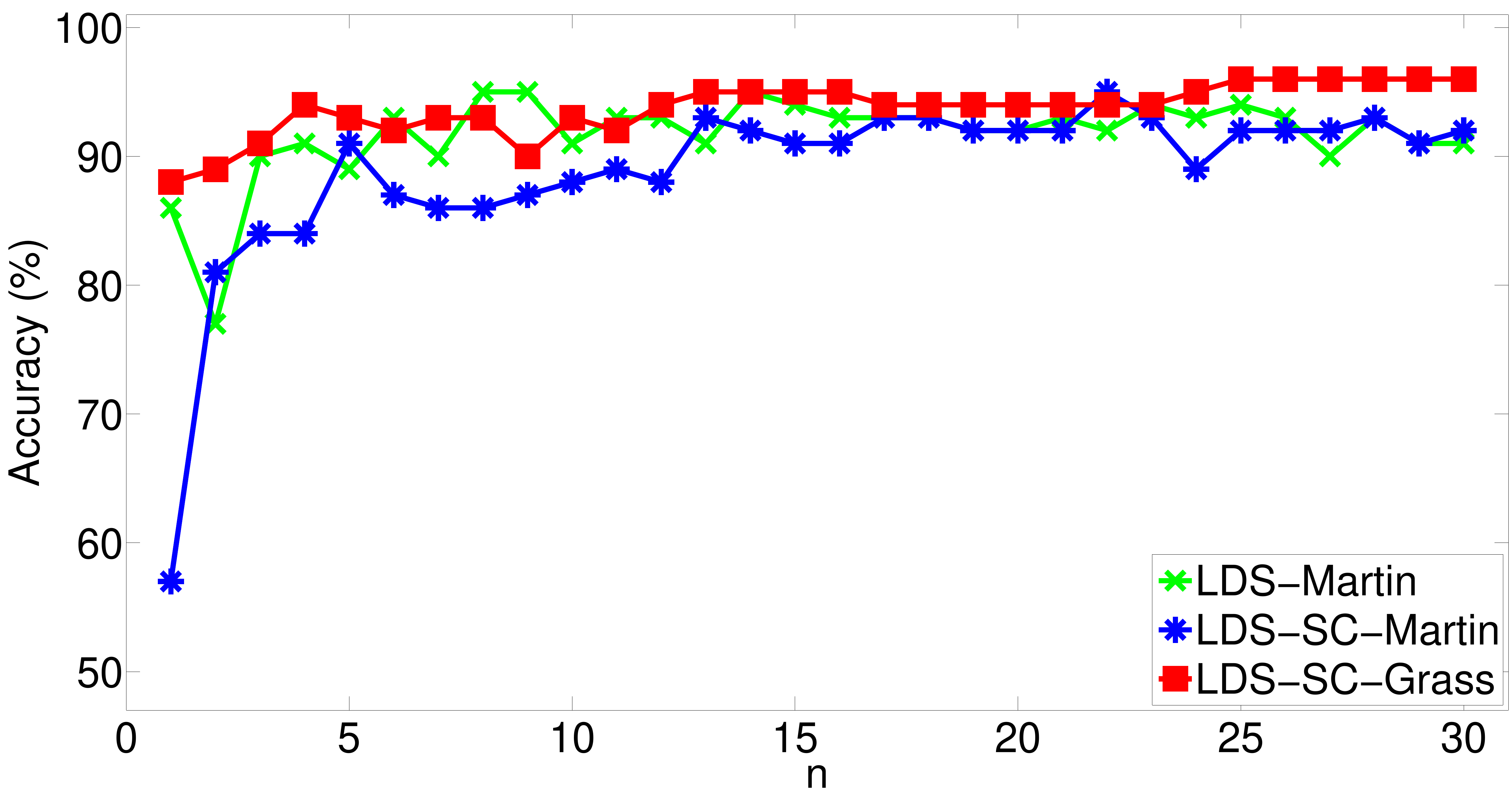}
}
\caption{Performance of the proposed sparse coding methods on \emph{SPR}. (a) The classification accuracies of LDS-SC-Grass and gLDS-SC versus observability order $L$ (here, $n=20$). (b) Training time of
LDS-SC-Grass and gLDS-SC versus the observability order $L$. (c) Performances of LDS-Martin, LDS-SC-Martin and LDS-SC-Grass versus state dimensionality $n$.}
\label{Fig:SPR}
\end{center}
\end{figure*}

\begin{figure*}[!ht]
\begin{center}
\subfigure[]{
\includegraphics[width=4.8cm,height=3.6cm]{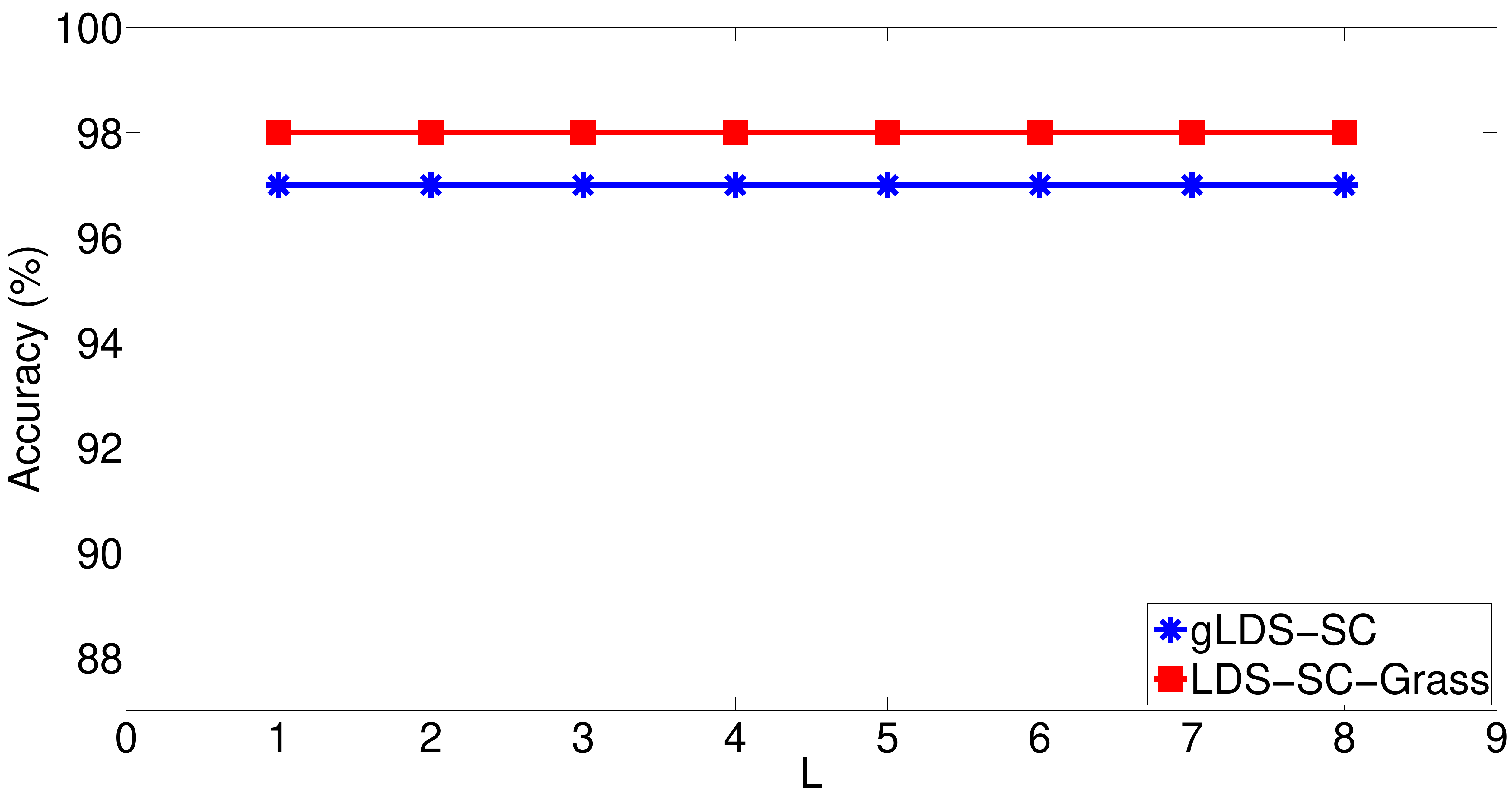}
}
\subfigure[]{
\includegraphics[width=4.8cm,height=3.6cm]{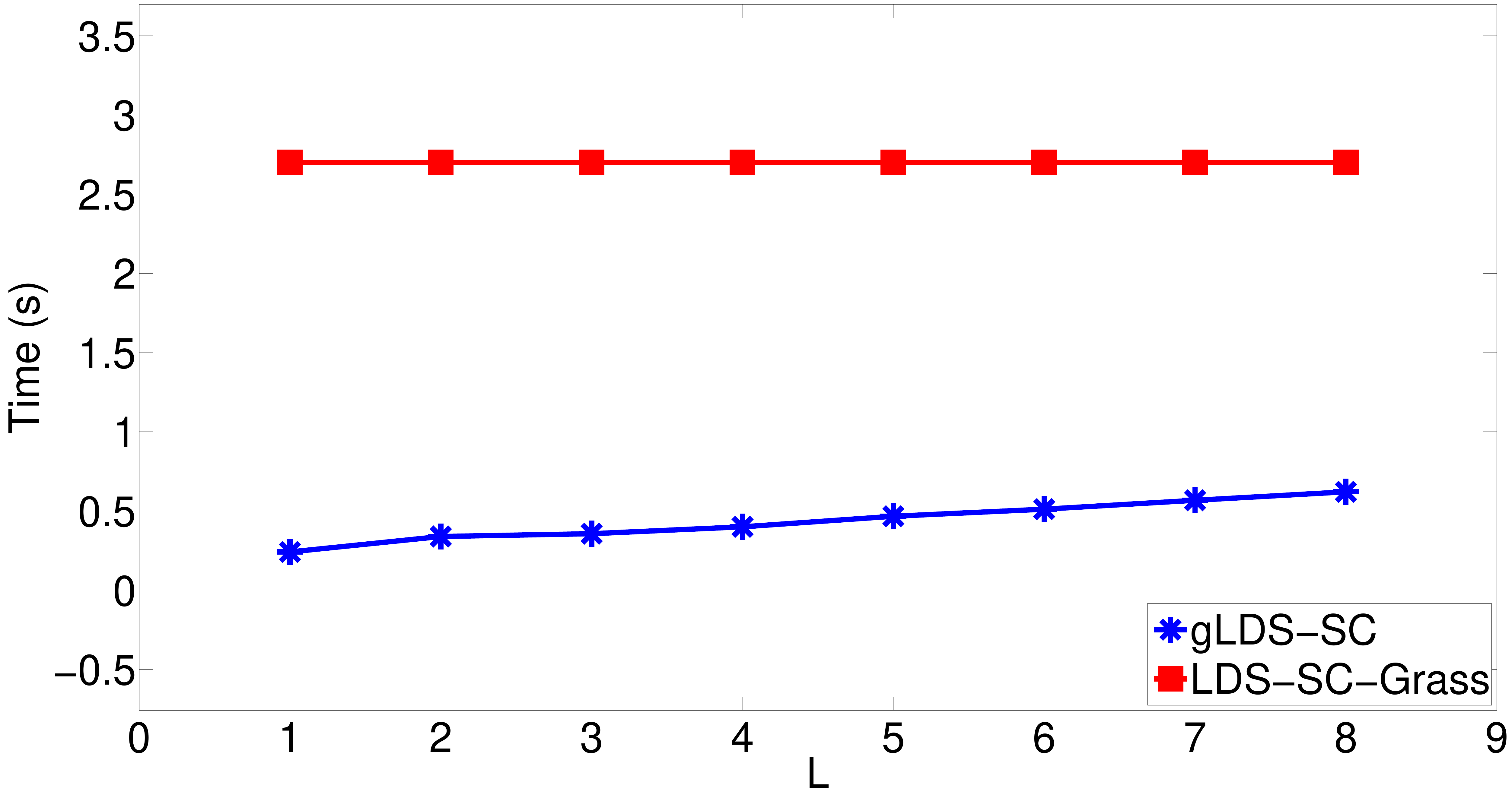}
}
\subfigure[]{
\includegraphics[width=4.8cm,height=3.6cm]{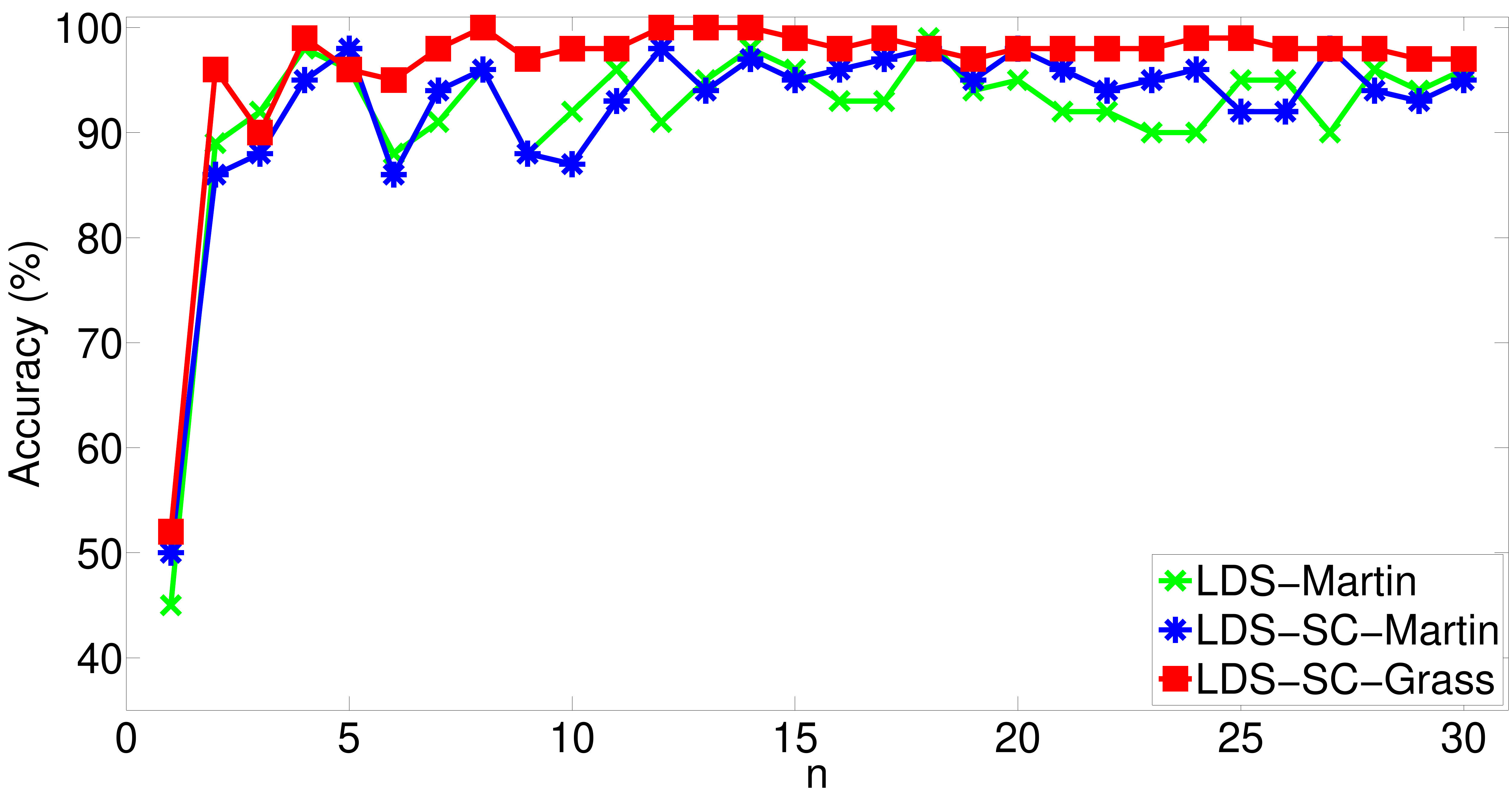}
}
\caption{Performance of the proposed sparse coding methods on \emph{BDH}. (a) The classification accuracies of LDS-SC-Grass and gLDS-SC versus observability order $L$ (here, $n=20$). (b) Training time of
LDS-SC-Grass and gLDS-SC versus the observability order $L$. (c) Performances of LDS-Martin, LDS-SC-Martin and LDS-SC-Grass versus state dimensionality $n$.}
\label{Fig:BDH}
\end{center}
\end{figure*}

\begin{table*}[!ht]
\begin{center}
\caption{Averaged classification accuracies of the proposed sparse coding methods compared with the state-of-the-arts.}
\label{Tab:SC}
\begin{threeparttable}
\begin{tabular}{|c|c|c|c|c|c|c|}
\hline
\multirow{2}{*}{Datasets} & \multirow{2}{*}{References}                              & \multirow{2}{*}{LDS-SVM} & \multirow{2}{*}{LDS-Martin} & \multicolumn{3}{c|}{Proposed models}         \\ \cline{5-7}
                          &                                                          &                          &                             & Our best      & LDS-SC-Martin & LDS-SC-Grass \\ \hline
\emph{Cambridge}              & {\ul 90.7}\tnote{1} ; 83.1\tnote{2}                            & 86.5                     & 86.9                        & \textbf{91.7} & 88.6          & 91.7         \\ \hline
\emph{UCSD}                   &  {94.5}\tnote{3} ; 87.8\tnote{4} & 94.5                     & 92.1                        & \textbf{94.9} & 94.9          & 94.5 \tnote{9}    \\ \hline
\emph{UCLA}                   & 96.0\tnote{3} ; {\ul 97.5}\tnote{5}                & 95.0                     & 93.0                        & \textbf{98.5} & 98.5          & 96.0         \\ \hline
\emph{SD}                     & 97\tnote{6} ; 92\tnote{7}                                                 & \textbf{98}              & 95                          & \textbf{98}   & 97            & 98           \\ \hline
\emph{SPR}                    & 91\tnote{6} ; 89\tnote{7}                                                 & 94                       & {\ul 95}                    & \textbf{96}   & 95            & 96           \\ \hline
\emph{BDH }                   & 81\tnote{6} ; 87\tnote{8}                                       & 80                       & {\ul 98}                    & \textbf{100}  & 98            & 100          \\ \hline
\end{tabular}
\begin{tablenotes}
    \footnotesize
    \item[1] KSLCC: \cite{Harandi_2015_CVPR}.
    \item[2] SSSC: \cite{mahmood2014semi}.
    \item[3] KL-SVM: \cite{chan2005probabilistic}. 
    \item[4] CS-LDS: \cite{sankaranarayanan2013compressive}.
    \item[5] KDT-SVM: \cite{chan2007classifying}.
    \item[6] ST-HMP: \cite{madry2014st}.
    \item[7] DTW: \cite{drimus2014design}. 
    \item[8] JKSC: \cite{jingwei15tactile}.
    \item[9] The result of LDS-SC-Grass on UCSD is better than our earlier work~\cite{wenbing2016sparse}, as here we employe the SN stabilization before coding while in \cite{wenbing2016sparse} we simply stabilize $A$ by dividing it with a scale factor.
\end{tablenotes}
\end{threeparttable}
\end{center}
\end{table*}

\subsection{Sparse coding}\label{Sec:exp_SC}

In this part, we assess the performance of the proposed sparse coding techniques by constructing
the LDS dictionary directly from the training data, meaning each atom in the dictionary is one sample from the training set.
Classification is done using the approach presented in \textsection~\ref{Sec:reconstruction_error_approach}.
The experiments are carried out on \emph{Cambridge}, \emph{UCSD}, \emph{UCLA}, \emph{SD}, \emph{SPR}, and \emph{BDH}.

\textbf{Non-stable vs. Stable.}
We start by studying the stability procedure proposed in \textsection~\ref{Sec:stable_LDS} for classification purposes. Thus, we extract dynamical features in two different ways: one with and one without SN.
Though our SN method avoids any optimization procedure, it is interesting to compare it against optimization-based methods
to show its full potential. Among LDS stabilization methods that benefit from optimization techniques, WLS shows to be fast while achieving  small reconstruction errors~\cite{wenbing2016learning}. Here, we only consider the diagonal form of WLS, \ie, DWLS.
The stable bound in DWLS is set to 0.99 to make the computation of the Lyapunov equation possible. The exacted features are fed to LDS-SC-Grass for classification.
As shown in Fig.~\ref{Fig:stability}, stabilization can promote the classification accuracies on \emph{Cambridge}, \emph{UCSD}, \emph{UCLA}, \emph{SPR} datasets while not helping on \emph{SD} and \emph{BDH} datasets.
In terms of the state reconstruction errors (see~\cite{wenbing2016learning} for details), SN underperforms in comparison to DWLS. However, when classification accuracy is considered, SN generally yields better results  while being remarkably faster. We conjecture that SN can preserve the discriminative information contained in the data sequences. Based to the results here, we will only perform SN on \emph{Cambridge}, \emph{UCSD}, \emph{UCLA}, \emph{SPR} datasets in the following experiments.

\noindent
\textbf{Infinite vs. Finite.}\\
As discussed in \textsection~\ref{Sec:SC}, LDS-SC-Grass can be understood as a generalization of gLDS-SC from finite-dimensional Grassmannian to infinite Grassmannian. Here, we are interested in the asymptotic behavior of gLDS-SC when the observability order $L$ increases. For this purpose, we report the results in Figures~\ref{Fig:Cambridge}-\ref{Fig:BDH}. As expected, the classification accuracy of gLDS-SC converges to that of LDS-SC when $L$ increases.
In \textsection~\ref{Sec:computational_complexity}, we have shown that the computational complexity of gLDS-SC ($O(L(NJ+J^2)mn^2)$) is $L$ times more than that of LDS-SC ($O((NJ+J^2)mn^2)$). Larger $L$ demands for more computational resources in gLDS-SC.
As shown in Fig.~\ref{Fig:SC} (b), gLDS-SC takes more coding time than LDS-SC when $L>2$. On tactile datasets, LDS-SC-Grass performs much slower than gLDS-SC when $L<8$. This is because LDS-SC-Grass additionally requires SVD decomposition and solving the Lyapunov equation, both are $O(n^3)$ computation-wise.
When the dimensions of the data $m$ is large (\eg, UCSD and UCLA datasets), additional computations have a small impact in the coding time.

\noindent
\textbf{Varying $n$.}\\
To evaluate the sensitivity of the coding algorithms with respect to the hidden dimensionality (\ie, $n$), we performed an experiment
using LDS-Martin, LDS-SC-Martin and LDS-SC-Grass methods.
From Fig.~\ref{Fig:SC} (c), we can see that the proposed methods, \ie, LDS-SC-Martin and LDS-SC-Grass, perform consistently when $n$ is greater than a certain value. Also both proposed methods outperform the LDS-Martin when $n$ is sufficiently large.
LDS-SC-Martin performs better than LDS-SC-Grass on \emph{UCLA} and \emph{UCSD}, indicating that applying a Gaussian kernel plus Martin distance on these two datasets is beneficial. However, on other datasets (including all the tactile datasets), LDS-SC-Grass is better than LDS-SC-Martin.
The take-home message here is that, while employing different kernel functions can lead to slight improvements, sparse coding is
a robust and powerful method for analyzing LDSs.

\noindent
\textbf{Comparison with the state-of-the-art.}\\
We compare the proposed sparse coding methods, \ie, LDS-SC-Martin and LDS-SC-Martin, against the state-of-the-art in this part.
Beside the dataset-dependent state-of-the-arts, we use two baselines, namely LDS-Martin and LDS-SVM.
For the proposed methods and also LDS-Martin and LDS-SVM, we vary the parameter $n$ and report the best results in Table~\ref{Tab:SC}.
We first note that the best results of our proposed algorithms outperform all other baselines and state-of-the-arts on all datasets.
On the \emph{UCSD} dataset, the method proposed in~\cite{chan2005probabilistic} achieves a similar performance to that of LDS-SC-Martin.
As discussed in our preliminary study~\cite{wenbing2016sparse}, one can combine the state covariance term into the sparse coding formulation (see Eq.~\eqref{Eq:hybrid-kernel}) to boost the accuries more. With such combination, the accuracy of LDS-SC-Grass increases from
98\% and 96\% to 100\% and 97\% on the \emph{SD} and \emph{SPR} datasets, respectively.

\subsection{Dictionary learning}
In this part, we analyze the effectiveness of the proposed dictionary learning algorithms. Experiments are carried out on the \emph{Cambridge} and \emph{DynTex++} datasets. For the \emph{Cambridge} dataset, we considered a different testing protocol compared to that of the sparse coding. In particular, we split the videos of each class into two non-overlapping and equal-sized sets and used the first half for learning the dictionary. The random splitting is repeated $10$ times and the average accuracies over $10$ trials are reported here. The sparse codes for training and test data with respect to a learned dictionary are fed to a linear SVM~\cite{fan2008liblinear} for classification.
The parameter $n$ is fixed to $10$ in all the experiments.

\begin{figure}[!h]
\begin{center}
\subfigure{
\includegraphics[width=6cm]{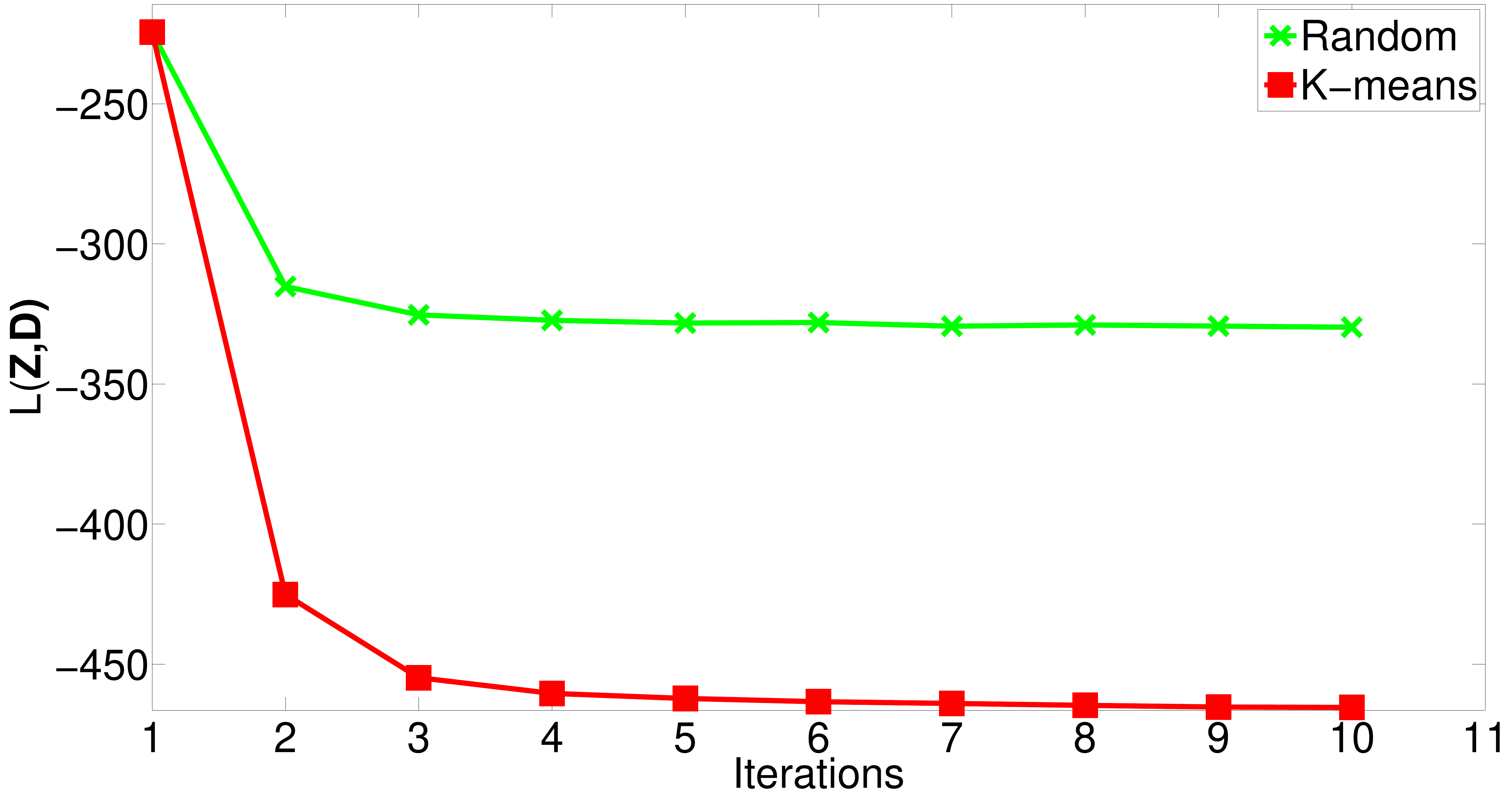}
}
\caption{The cost of LDS-DL with Algorithm~\ref{Alg:DL} on \emph{Cambridge}. The vertical axis represents the cost (Eq.~\eqref{Eq:DL}) and the horizontal axis denotes the iteration number where each iteration refers to a complete update of all dictionary atoms. $J=8$.}
\label{Fig:DL_convergence}
\end{center}
\end{figure}

\begin{figure}[!h]
\begin{center}
\subfigure{
\includegraphics[width=6cm]{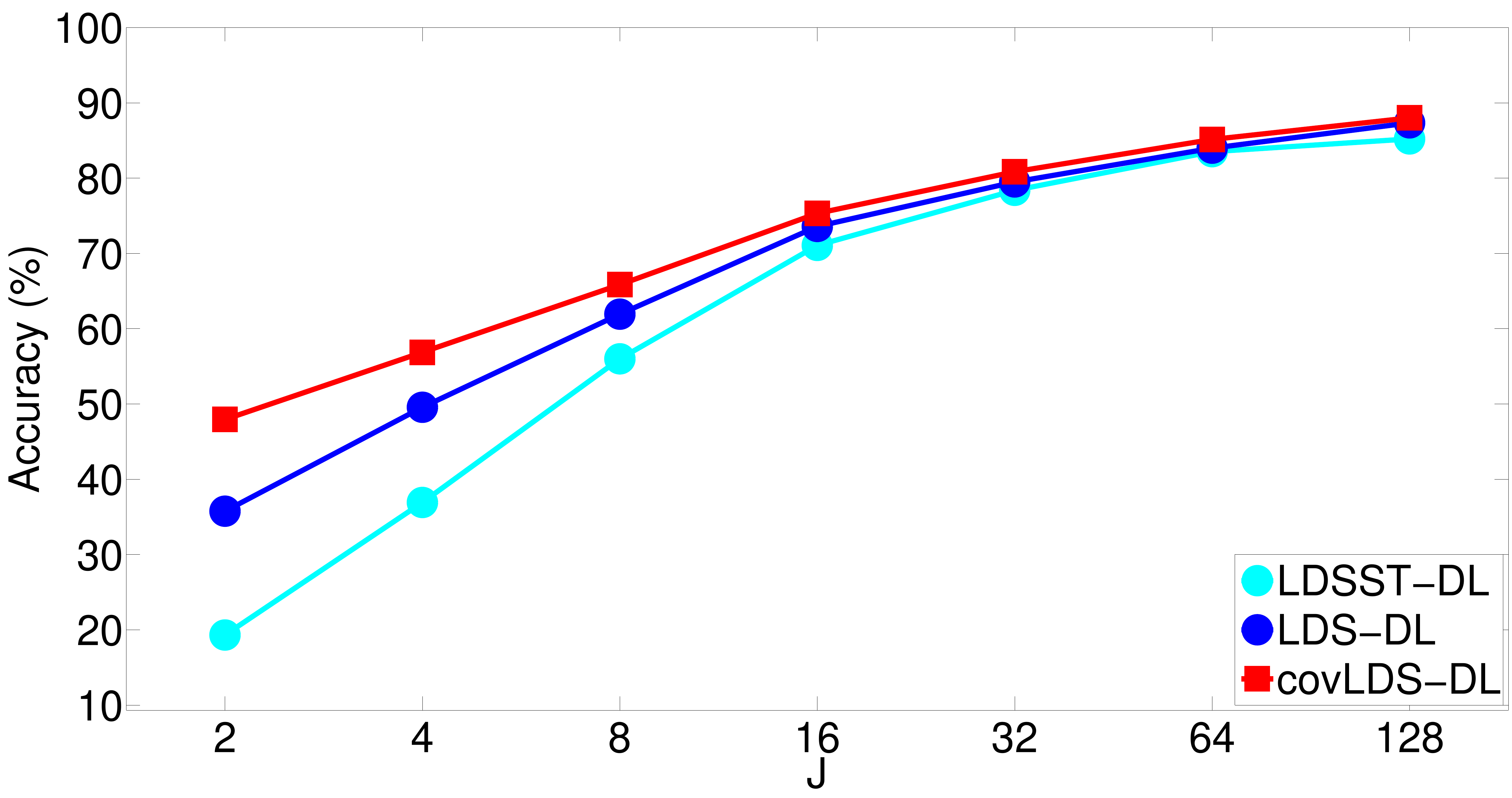}
}
\caption{Comparison between LDSST-DL, LDS-DL and covLDS-DL with varying values of $J$ on \emph{Cambridge}. For covLDS-DL, $\beta=0.2$.}
\label{Fig:Cambridge_SymSkew_vs_SymGrass}
\end{center}
\end{figure}

\begin{figure*}[!th]
\begin{center}
\subfigure{
\includegraphics[width=15cm]{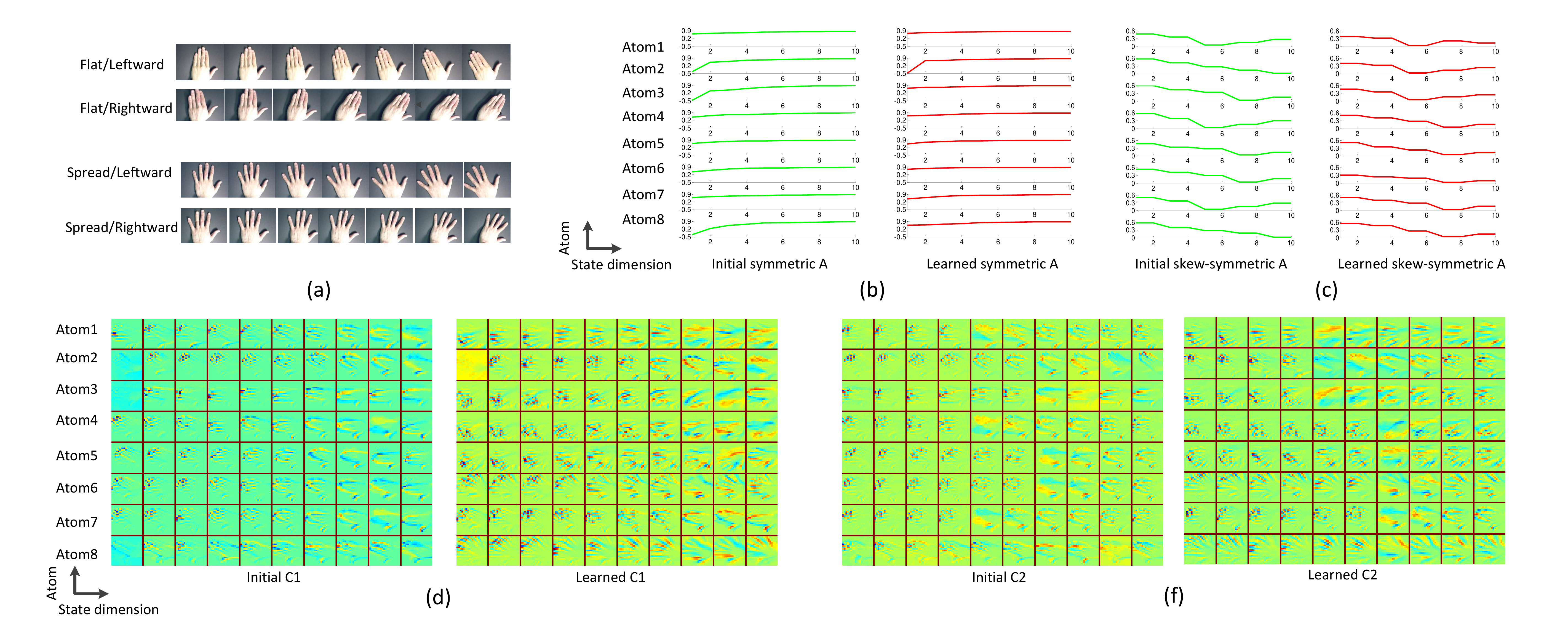}
}
\caption{ Visualization of  the initial and the learned dictionaries on \emph{Cambridge}. $J=8$. (a) Samples of the $4$ sub-categories in \emph{Cambridge}. (b-c) Plots of $\Mat{A}$: different plots display the values of the transition eigenvalues of different atoms; (d-f) Visualization of A: rows corresponds to atoms and columns to the state dimensions.}
\label{Fig:visualizeDictionary}
\end{center}
\end{figure*}

\begin{figure*}[!ht]
\begin{center}
\subfigure{
\includegraphics[width=3.6cm, height=3cm]{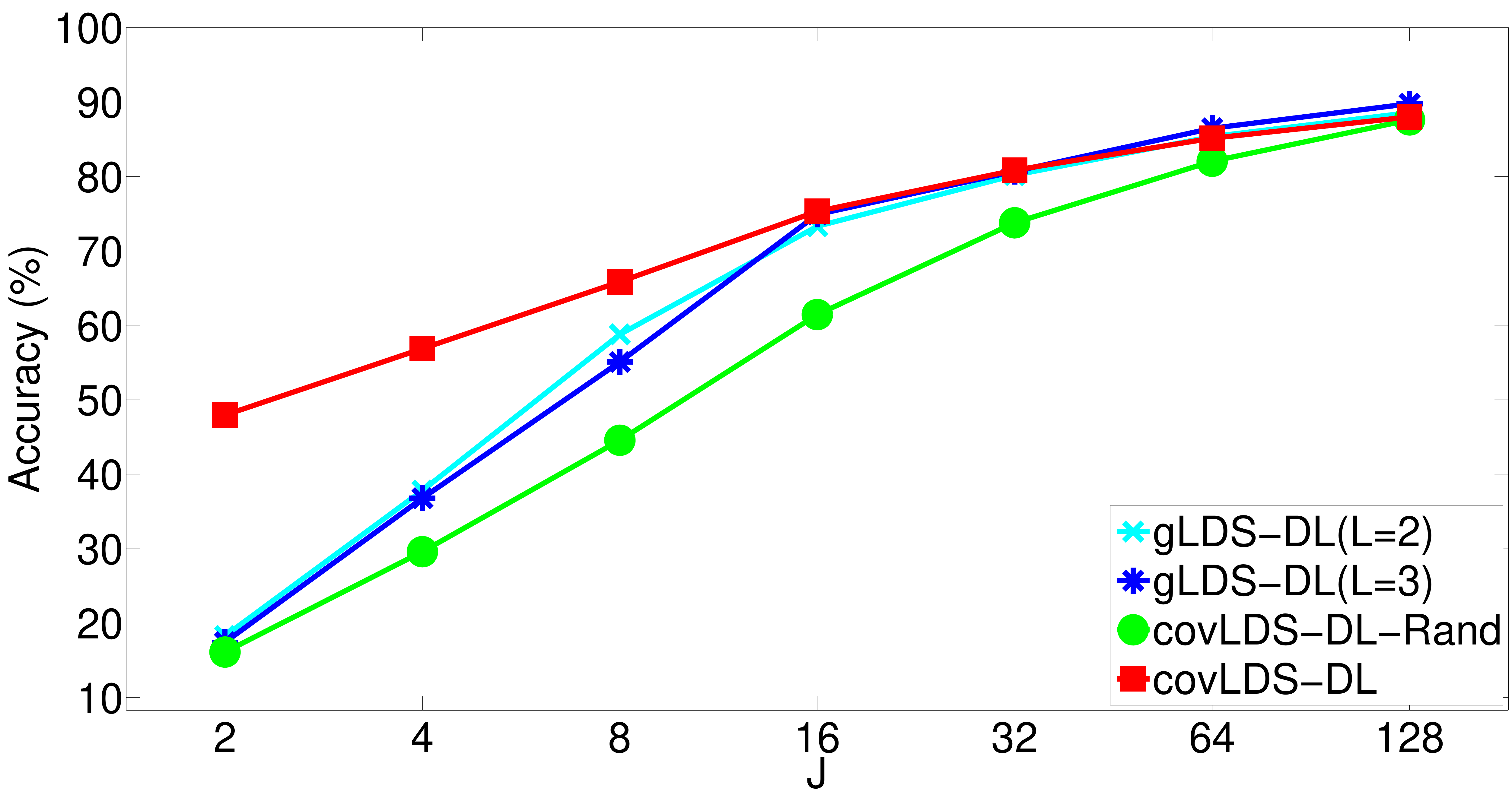}
}
\subfigure{
\includegraphics[width=3.2cm, height=3cm]{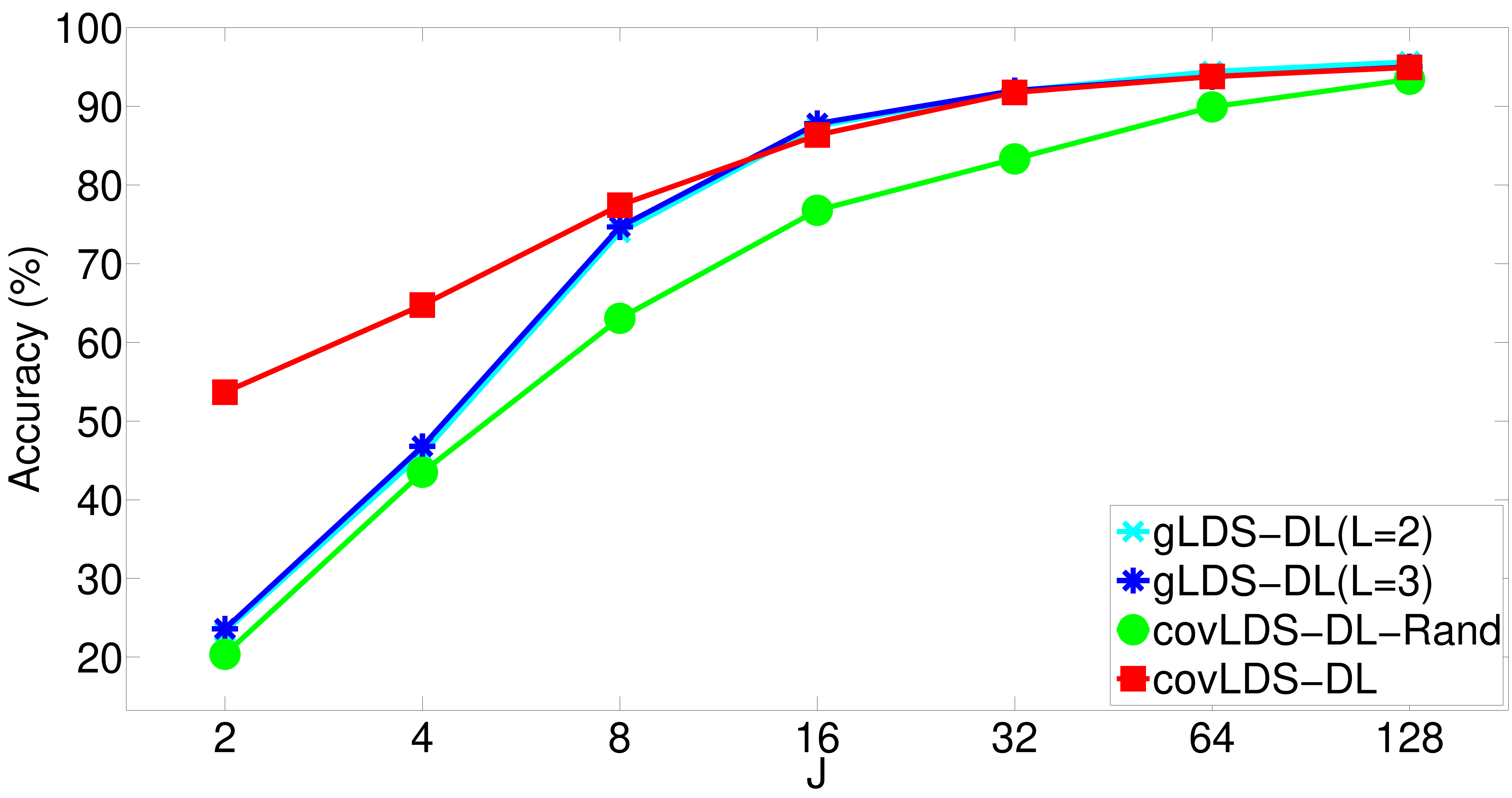}
}
\subfigure{
\includegraphics[width=3.2cm, height=3cm]{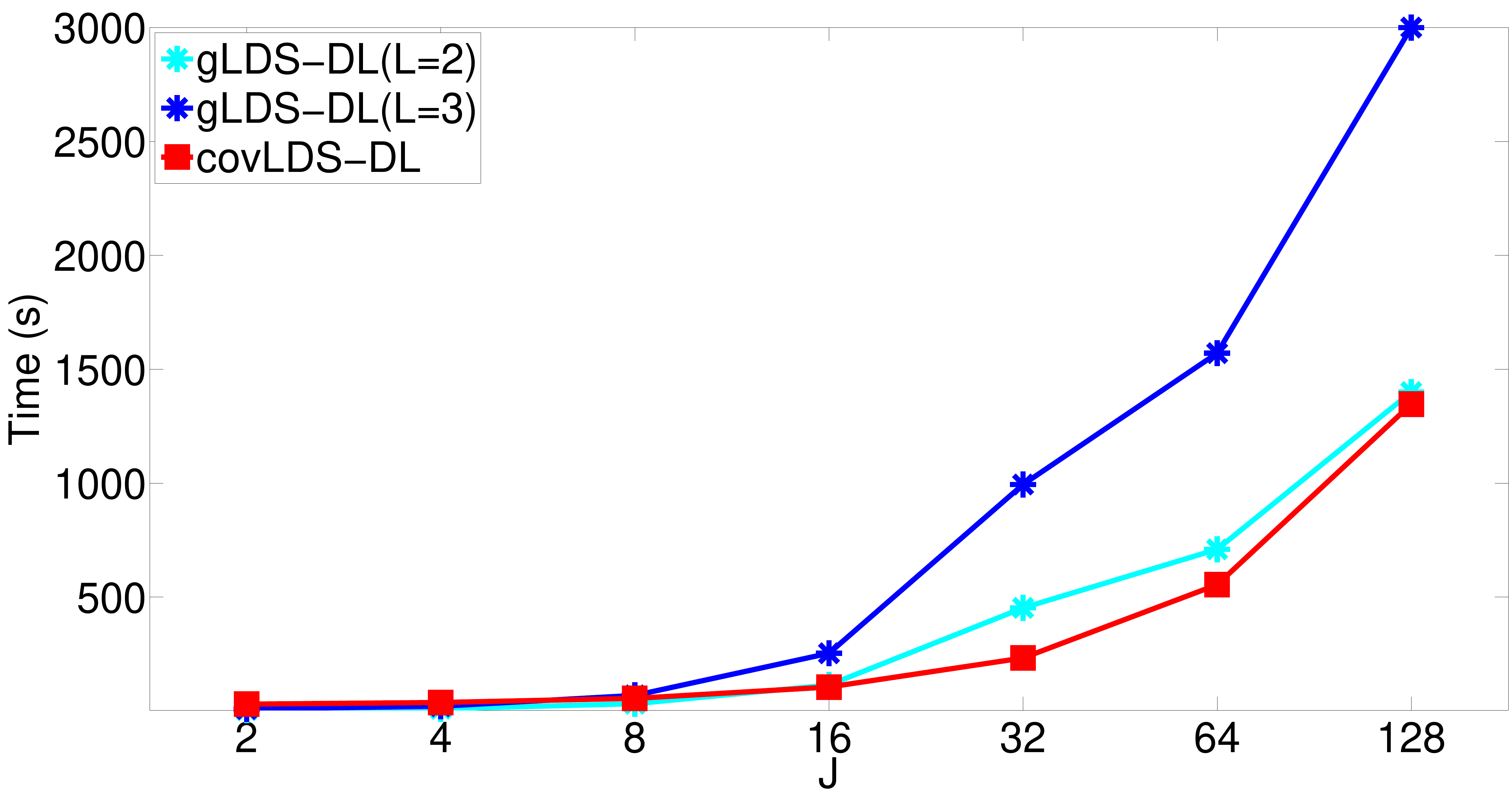}
}
\subfigure{
\includegraphics[width=3.2cm, height=3cm]{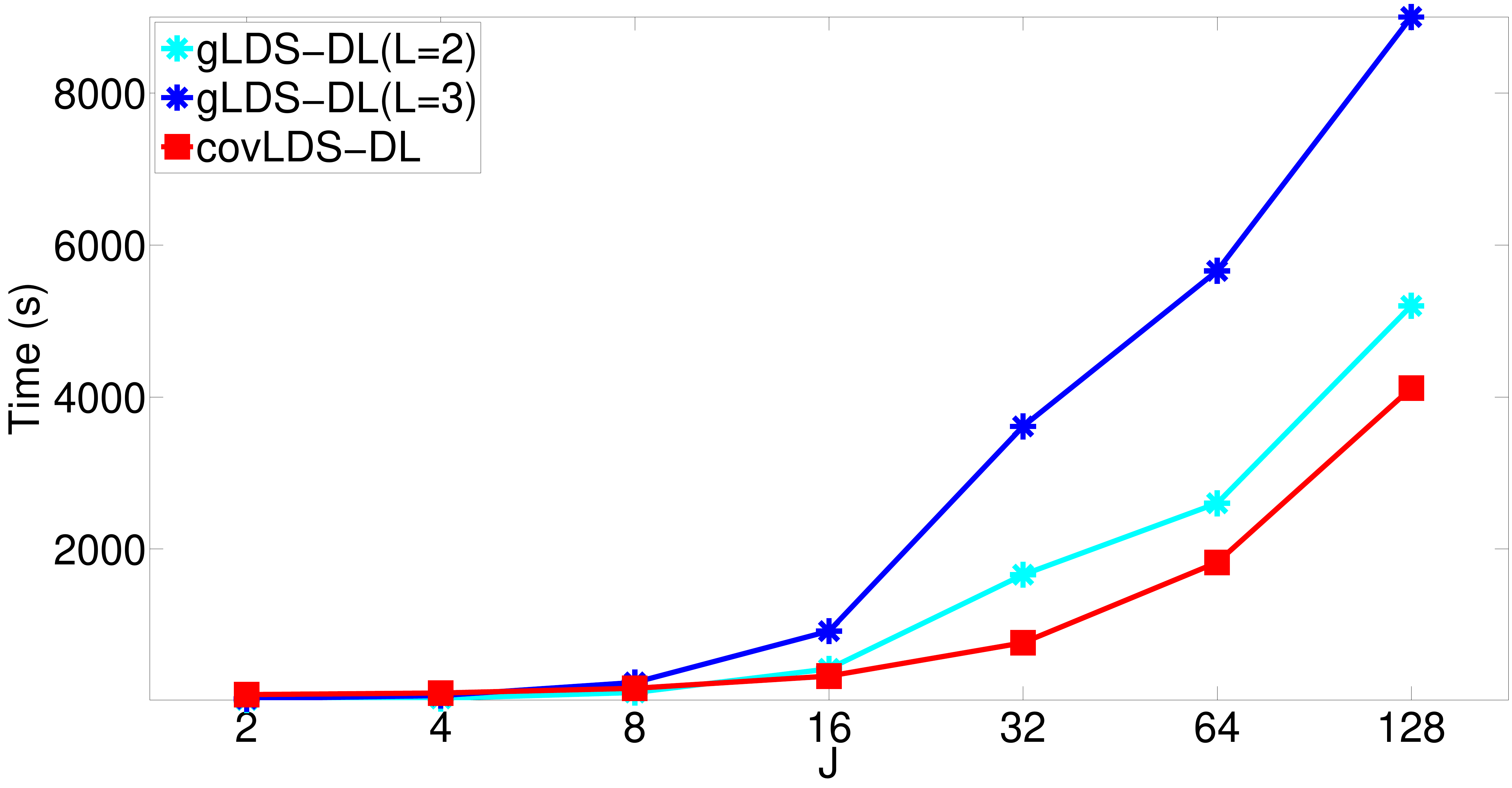}
}
\caption{Comparisons between covLDST-DLs and gLDS-DLs for various dictionary size $J$ on the \emph{Cambridge} and \emph{DynTex++}
(the $9$-classes subset) datasets.  The first two panels show the average classification accuracies, while the right two ones display the training time for a one update of all atoms.}
\label{Fig:DL}
\end{center}
\end{figure*}


\noindent
\textbf{Convergence analysis.}\\
At each step of our dictionary learning algorithm (Algorithm~\ref{Alg:DL}), the updated columns of the measurement matrices are optimal according to Theorems~\ref{Th:sym-column-U} and \ref{Th:skew-column-U}. Each diagonal element of the transition matrices is updated in the opposite direction of the gradient, thus reducing the cost function. To further show this, we randomly initialize the dictionary atoms with the training data and plot the convergence behavior of LDS-DL in Fig.~\ref{Fig:DL_convergence}. This plot suggests that the algorithm convergences in a few iterations.

In addition to random initialization, we are also interested in developing a K-means-like approach to cluster data sequences prior to dictionary learning.  Once such mechanism is at our disposal, we can make use of the clustering centers to initialize the dictionary. Recalling that the traditional K-means paradigm consists of two alternative phases: first assigning the data to the closest centers under the given distance metric and then updating the centers with the new assigned data~\cite{bishop2006pattern}. Extending the first step is straightforward for LDS clustering; the second one entails a delicate method for computing the clustering centers. Suppose the extended observability subspaces in a cluster are $\{\Mat{V}_i\}_{i=1}^{q}$. We define the center of this set as $\Mat{V}_c=\arg\min_{\Mat{V}_c}\parallel\Mat{V}_c\Mat{V}_c^{\mathrm{T}}-\frac{1}{q}\sum_{i=1}^q\Mat{V}_i\Mat{V}_i^{\mathrm{T}}\parallel_{F}^2$. Unfolding this equation actually leads to a special case of Eq.~\eqref{Eq:gamma} by setting the number of the dictionary atoms $J$ to be 1 and the values of the codes $\Mat{Z}_{i,r}$ to $1/q$. This means that we can calculate the system tuples of the centers with Algorithm~\ref{Alg:DL}. The convergence curve of LDS-DL with the K-means initialization is shown in Fig.~\ref{Fig:DL_convergence}. We find that the objective decreases significantly after the K-means initialization, leading to superior performances compared to random initialization. Thus, in the following experiments, we employ K-means to initialize the dictionary unless mentioned otherwise.

To learn an LDS dictionary, our preliminary study~\cite{wenbing2016sparse} opts for symmetric LDSs to model data.
In this study, we introduce the notion of a two-fold LDS, decomposing an LDS into symmetric and screw-symmetric parts.
In Fig.~\ref{Fig:Cambridge_SymSkew_vs_SymGrass}, we compare the two methods, namely LDSST-DL and LDS-ST, on the \emph{Cambridge} dataset for various size of dictionary. This experiment shows the superiority of LDS-DL over LDSST-DL. In \textsection~\ref{Sec:state_covariance}, we show how to consider the state covariances for learning a more discriminative dictionary. As shown in Fig.~\ref{Fig:Cambridge_SymSkew_vs_SymGrass}, equipping LDS-DL with state covariances, \ie, covLDS-DL, can further boost the performances.

\noindent
\textbf{Visualizing the Dictionary.}\\
The LDS-DL algorithm learns the measurement (\ie, $\Mat{C}$) and the transition (\ie, $\Mat{A}$) matrices explicitly and separately.
Thus, we can visualize the learned system tuples $(\Mat{A},\Mat{C})$ to demonstrate what patterns have been captured.  For simplicity, we perform LDS-DL on the $4$-class subset of \emph{Cambridge}, namely,  Flat-Leftward,  Flat-Rightward, Spread-Leftward, and Spread-Rightward. Dictionary atoms are initialized randomly by choosing $8$ videos from the class Flat-Leftward.

Fig.~\ref{Fig:visualizeDictionary} (a)
visualizes both the initial and the learned pairs. As the system tuples of the skew-symmetric dictionary are complex, we visualize the corresponded real pairs, \ie, $(\Mat{\Theta},\Mat{Q})$. Clearly, more spatial patterns such as the spread-hand shape and the hand-rightward state,  have been captured by the learned measurement matrices. There are also slight changes in transition matrices after training.  The transition matrices of different atoms have a small difference, indicating that  there is not much dynamic, presumably because the speed of hand and the sampling frequency of the camera are consistent.

\noindent
\textbf{Is training useful?}\\
To verify whether learning a dictionary is helpful towards classification purposes or not,  we compare covLDS-DL against a baseline, namely  covLDS-Rand, in which the dictionary atoms are chosen from the training set randomly (no training is involved).
In addition to  covLDS-Rand, we use  gLDS-DL with $L=2, 3$ as another baseline.  For fair comparisons, we use a linear SVM
and set $n=10$ for covLDS-DL, covLDS-Rand and gLDS-DL.

To compare covLDS-DL with covLDS-Rand and gLDS-DL, we use the \emph{Cambridge} and a smaller subset of  \emph{DynTex++} dataset
(only the videos from the first $9$ classes). By cross-validation, $\beta$ is 0.2 in the \emph{Cambridge} and 0.8 in the \emph{DynTex++} dataset  in covLDS-DL. Fig.~\ref{Fig:DL} shows that covLDS-DL consistently outperforms covLDS-Rand for various number of the dictionary atoms.  Compared to gLDS-DLs, covLDS-DL achieves higher accuracies when the dictionary size $J$ is small (\eg, $J<16$), and obtains on par performances when $J$ is large.  As discussed in \textsection~\ref{Sec:state_covariance}, the computational complexity of gLDS-DL is higher than covLDS-DL. We also plot the training time of gLDS-DLs and covLDS-DL in Fig.~\ref{Fig:DL}, showing that gLDS-DL is more computationally exhaustive as $L$ increases.

In addition to the $9$-classes subset, we also evaluate LDS-DL on the whole \emph{DynTex++} dataset. To speed up the convergence rate, we learn the dictionary in a hierarchical manner, \ie, separately learning $J_c$ dictionary atoms for each class.
Fig.~\ref{Fig:DL_hierarchical_accuracy} shows the performance of the hierarchical dictionary for various values of $J_c$.  LDS-DL achieves the accuracy of $91.7\%$ when $J_c=32$, which is better than that of the Grassmaniann-based method (\ie, $90.3\%$)~\cite{harandi2013dictionary} and comparable to Grassmannian-kernel-based method (\ie, $92.8\%$)~\cite{harandi2013dictionary}. By choosing $J_c=50$ the classification accuracy increases to $93.06\%$ (see Fig.~\ref{Fig:DL_hierarchical_accuracy}).

\begin{figure}[!h]
\begin{center}
\subfigure{
\includegraphics[width=6cm]{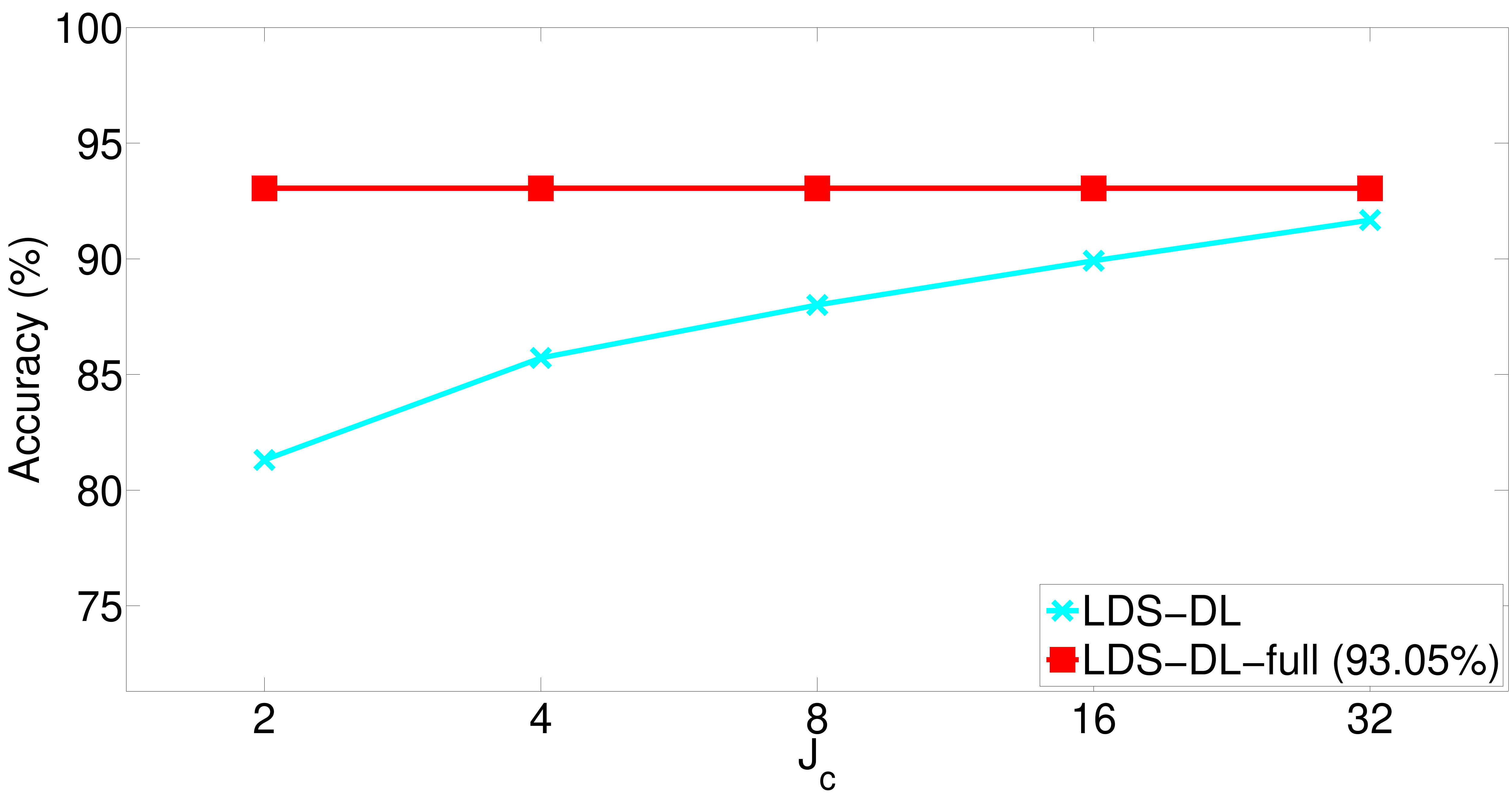}
}
\caption{Classification performance on the \emph{DynTex++} dataset.}
\label{Fig:DL_hierarchical_accuracy}
\end{center}
\end{figure}

\section{Conclusion}
\label{Sec:Conclusion}
In this paper, we address several shortcomings in modeling Linear Dynamical Systems (LDS). In particular, we
formulate the extended observability subspace of an LDS explicitly and devise an efficient method called SN to stabilize LDSs.
We then tackle two challenging problems, namely LDS coding and learning an LDS dictionary. In the former, the goal is to describe an LDS with
a sparse combination of a set of LDSs, known as a dictionary.
In the latter, we show how an LDS dictionary can be obtained from a set of LDSs.
Towards solving the aforementioned problems, we introduce the novel concept of two-fold LDSs and make use of it to obtain
closed-form updates for  learning an LDS dictionary.
Our extensive set of experiments shows the superiority of the proposed techniques compared to various state-of-the-art methods on different tasks including hand gesture recognition, dynamical scene classification, dynamic texture categorization and tactile recognition.

\newtheorem{property}[theorem]{Property}
\setcounter{equation}{24}
\setcounter{theorem}{0}
\appendix
\section{Proofs}
\label{Sec:proofs}
We present the proofs of Theorems \ref{Th:linear_combination}-\ref{Th:skew-column-U}. For better readability, we repeat the theorems before the proofs.
\begin{theorem}\label{Th:linear_combination}
Suppose $\Mat{V}_1,\Mat{V}_2,\cdots,\Mat{V}_M\in \mathcal{S}(n,\infty)$, and  $y_1,y_2,\cdots,y_M\in \mathbb{R}$, we have
\begin{eqnarray}
\nonumber \parallel\sum_{i=1}^{M}y_i\Pi(\Mat{V}_i)\parallel_{F}^2 &=& \sum_{i,j=1}^{M}y_iy_j\parallel\Mat{V}_i^{\mathrm{T}}\Mat{V}_j\parallel_{F}^2,
\end{eqnarray}
\end{theorem}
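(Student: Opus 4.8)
The plan is to expand the left-hand side using the definition $\Pi(\Mat{V}_i)=\Mat{V}_i\Mat{V}_i^{\mathrm{T}}$ together with the trace form of the Frobenius norm, $\|\Mat{W}\|_F^2=\tr(\Mat{W}^{\mathrm{T}}\Mat{W})$, and then collapse everything to traces of \emph{finite} $n\times n$ matrices. Writing $\Mat{W}=\sum_{i=1}^M y_i\Mat{V}_i\Mat{V}_i^{\mathrm{T}}$, I would first observe that each $\Pi(\Mat{V}_i)$ is symmetric, so $\Mat{W}^{\mathrm{T}}=\Mat{W}$, and by linearity of the trace
\begin{align*}
\Big\|\sum_{i=1}^M y_i\Pi(\Mat{V}_i)\Big\|_F^2 = \tr(\Mat{W}\Mat{W}) = \sum_{i,j=1}^M y_iy_j\,\tr\!\big(\Mat{V}_i\Mat{V}_i^{\mathrm{T}}\Mat{V}_j\Mat{V}_j^{\mathrm{T}}\big).
\end{align*}

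Next, for each summand I would apply the cyclic invariance of the trace to bring the factors into the desired shape. Concretely, $\tr(\Mat{V}_i\Mat{V}_i^{\mathrm{T}}\Mat{V}_j\Mat{V}_j^{\mathrm{T}})=\tr(\Mat{V}_j^{\mathrm{T}}\Mat{V}_i\Mat{V}_i^{\mathrm{T}}\Mat{V}_j)=\tr\big((\Mat{V}_i^{\mathrm{T}}\Mat{V}_j)^{\mathrm{T}}(\Mat{V}_i^{\mathrm{T}}\Mat{V}_j)\big)=\|\Mat{V}_i^{\mathrm{T}}\Mat{V}_j\|_F^2$, where I used $\Mat{V}_j^{\mathrm{T}}\Mat{V}_i=(\Mat{V}_i^{\mathrm{T}}\Mat{V}_j)^{\mathrm{T}}$. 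Summing over $i,j$ then reproduces exactly the right-hand side, so the algebraic core of the argument is short.

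The point that requires care, and the main obstacle, is that the $\Mat{V}_i$ live in $\mathcal{V}(n,\infty)$, so $\Mat{V}_i\Mat{V}_i^{\mathrm{T}}$ and $\Mat{W}$ are formally infinite-dimensional operators; I therefore need to justify that every trace above is finite and that the cyclic reordering is legitimate rather than merely formal. The resolution is that each $\Mat{V}_i$ has only $n$ columns, so $\Mat{V}_i\Mat{V}_i^{\mathrm{T}}$ is a finite-rank (rank $\le n$) operator, and the very reordering used collapses each product to the trace of a genuine $n\times n$ matrix built from $\Mat{V}_i^{\mathrm{T}}\Mat{V}_j$. The entries of $\Mat{V}_i^{\mathrm{T}}\Mat{V}_j=\Mat{L}_i^{-1}\Mat{O}_i^{\mathrm{T}}\Mat{O}_j\Mat{L}_j^{-\mathrm{T}}$ are finite because $\Mat{O}_i^{\mathrm{T}}\Mat{O}_j$ is the unique, convergent solution of the Lyapunov equation~\eqref{Eq:Lyapunov1}, guaranteed by the strict stability of the transition matrices (\textsection~\ref{Sec:O}). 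Hence the manipulations are valid operations on finite-rank operators and all quantities are finite; as a sanity check, the diagonal terms give $\|\Mat{V}_i^{\mathrm{T}}\Mat{V}_i\|_F^2=\|\mathbf{I}_n\|_F^2=n$, which confirms that $\Pi$ indeed maps $\mathcal{S}(n,\infty)$ into the finite-norm part of $\mathrm{Sym}(\infty)$.
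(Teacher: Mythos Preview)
Your argument is correct. The algebraic core---expand $\|\Mat{W}\|_F^2=\tr(\Mat{W}^2)$ by bilinearity and then use trace cyclicity to reduce each cross term to $\|\Mat{V}_i^{\mathrm{T}}\Mat{V}_j\|_F^2$---is exactly what is needed, and your justification via the finite-rank nature of $\Mat{V}_i\Mat{V}_i^{\mathrm{T}}$ (so that all traces collapse to traces of genuine $n\times n$ matrices built from the Lyapunov solutions) is sound.

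The paper takes a slightly different route to handle the infinite dimensionality: rather than arguing directly with finite-rank operators, it introduces the $t$-order truncations $\Mat{O}_i(t)=[\Mat{C}_i;\Mat{C}_i\Mat{A}_i;\cdots;\Mat{C}_i\Mat{A}_i^{t}]$, sets $\Mat{V}_i(t)=\Mat{O}_i(t)\Mat{L}_i^{-\mathrm{T}}$, performs the very same trace expansion on these genuinely finite matrices, and then lets $t\to\infty$, using that $\Mat{O}_i(t)^{\mathrm{T}}\Mat{O}_j(t)\to\Mat{O}_{ij}$ via the Lyapunov equation. Your approach is more direct and cleaner, since it avoids the truncation machinery and recognizes from the outset that the only nontrivial objects are the $n\times n$ Gram blocks $\Mat{V}_i^{\mathrm{T}}\Mat{V}_j$; the paper's approach is perhaps more elementary in that it never needs to invoke traces of infinite operators at all, working entirely with finite matrices before passing to the limit. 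Both arrive at the same computation and rely on the same convergence fact (strict stability $\Rightarrow$ the Lyapunov series converges).
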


\begin{proof}
We denote the $t$-order sub-matrix of the extended observability matrix $\Mat{O}_i$ as $\Mat{O}_i(t) = [\Mat{C}_{i}^{\mathrm{T}},(\Mat{C}_{i}\Mat{A}_{i})^{\mathrm{T}},\cdots,(\Mat{C}_{i}\Mat{A}_{i}^{t})]^{\mathrm{T}}$. Suppose the factored matrix for orthogonalizing $\Mat{O}_i$ is $\Mat{L}_i$ (see \textsection 4.3 in the paper) and denote that $\Mat{V}_i(t)=\Mat{O}_i(t)\Mat{L}_i^{\mathrm{-T}}$. Then, we derive
\begin{eqnarray}\label{emnedding_proof}
\nonumber && \parallel\sum_{i=1}^{M}y_i\Pi(\Mat{V}_i)\parallel_{F}^2 \\
\nonumber
&& = \lim_{t\rightarrow\infty}\parallel\sum_{i=1}^{M}y_i\Mat{V}_i(t)\Mat{V}_i(t)^{\mathrm{T}}\parallel_{F}^2\\
\nonumber && =
\lim_{t\rightarrow\infty}\mathrm{Tr}\left(\sum_{i=1}^{M}y_i\Mat{V}_i(t)\Mat{V}_i(t)^{\mathrm{T}} \sum_{j=1}^{J}y_j\Mat{V}_j(t)\Mat{V}_j(t)^{\mathrm{T}}\right)\\
\nonumber  && = \lim_{t\rightarrow\infty}\sum_{i,j=1}^{M}y_iy_j\mathrm{Tr}\left(\Mat{V}_i(t)^{\mathrm{T}}\Mat{V}_j(t) \Mat{V}_j(t)^{\mathrm{T}}\Mat{V}_i(t)\right)\\
\nonumber  && = \sum_{i,j=1}^{M}y_iy_j\lim_{t\rightarrow\infty}\parallel\Mat{V}_i(t)^{\mathrm{T}}\Mat{V}_j(t)\parallel_F^2\\
\nonumber  && = \sum_{i,j=1}^{M}y_iy_j\lim_{t\rightarrow\infty}\parallel\Mat{L}_i^{-1}(\Mat{O}_i(t)^{\mathrm{T}} \Mat{O}_j(t))\Mat{L}_j^{\mathrm{-T}}\parallel_F^2\\
&& =
\sum_{i,j=1}^{M}y_iy_j\parallel\Mat{L}_i^{-1}\Mat{O}_{ij}\Mat{L}_j^{\mathrm{-T}}\parallel_F^2,
\end{eqnarray}
where the limitation value $\Mat{O}_{ij}=\lim_{t\rightarrow\infty}\Mat{O}_i(t)^{\mathrm{T}} \Mat{O}_j(t)=\Mat{O}_i^{\mathrm{T}}\Mat{O}_j$ exists and is computed by solving the Lyapunov equation similar to Eq.~(4).
\end{proof}

\begin{corollary}\label{metric}
For any $\Mat{V}_1,\Mat{V}_2\in\mathcal{S}(n,\infty)$, we have
\begin{eqnarray}
\nonumber \|\Pi(\Mat{V}_1)-\Pi(\Mat{V}_2)\|_{F}^2 &=& 2\left(n -\|\Mat{V}_1^{\mathrm{T}}\Mat{V}_2\|_F^2\right).
\end{eqnarray}
Furthermore, $\| \Pi(\Mat{V}_1)-\Pi(\Mat{V}_2)\|_{F}^2=2\sum_{k=1}^{n}\sin^2\alpha_k$, where $\{\alpha_k\}_{k=1}^n$ are subspace angles between $\Mat{V}_1$ and $\Mat{V}_2$. This also indicates that $0 \leq \| \Pi(\Mat{V}_1)-\Pi(\Mat{V}_2)\|_{F}^2 \leq 2n$.
\end{corollary}

The Frobenius distance $\parallel\Pi(\Mat{V}_1)-\Pi(\Mat{V}_2)\parallel_{F}^2$ is devised by setting $y_1=1$ and $y_2=-1$ in Eq.~\eqref{emnedding_proof}. As demonstrated in \cite{harandi2013dictionary}, the embedding $\Pi(\Mat{V})$ from the finite Grassmannian $\mathcal{G}(n,d)$ to the space of the symmetric matrices is proven to be diffeomorphism  (a one-to-one, continuous, and
differentiable mapping with a continuous and differentiable inverse); The  Frobenius distance between the two points $\Mat{V}_1$ and $\Mat{V}_2$  in the embedding space can be rewritten as
\begin{eqnarray}\label{Eq:Frobenius distance}
\parallel\Mat{V}_1\Mat{V}_1^{\mathrm{T}}-\Mat{V}_2\Mat{V}_2^{\mathrm{T}}\parallel_{F}^2=2\sum_{k=1}^{n}\sin^2\alpha_k,
\end{eqnarray}
where $\alpha_k$ is the $k$-th principal angle of the subspaces between $\Mat{V}_1$ and $\Mat{V}_2$.

We denote the space of the finite observability subspaces as $\mathcal{S}(n,L)$ by taking the first $L$ rows from the extended observability matrix. Clearly, $\mathcal{S}(n,L)$ is a compact subset of $\mathcal{G}(n,L)$. Hence $\mathcal{S}(n,L)$ maintains the relation in Eq. (\ref{Eq:Frobenius distance}); and the embedding $\Pi(\Mat{V})$ from $\mathcal{S}(n,L)$  to the space of the symmetric matrices is diffeomorphism. For our case, $\mathcal{S}(n,\infty)=\lim_{L\rightarrow\infty}\mathcal{S}(n,L)$. Theorem (\ref{Th:linear_combination}) proves the Frobenius distance defined in the embedding $\Pi(\mathcal{S}(n,\infty))$ to be convergent. Thus, we can obtain the relation between the Frobenius distance and the subspace angles in Corollary \ref{metric}, and prove that the embedding $\Pi(\mathcal{S}(n,\infty))$ is diffeomorphism, by extending the conclusions of $\mathcal{S}(n,L)$ with $L$ approaching to the infinity.

\begin{lemma}\label{canonical-representaion}
For a symmetric or skew-symmetric transition matrix $\Mat{A}$, the system tuple $(\Mat{A},\Mat{C})\in \mathbb{R}^{n \times n}\times\mathbb{R}^{m \times n}$ has the canonical form $(\Mat{\Lambda},\Mat{U})\in \mathbb{R}^{n}\times\mathbb{R}^{m \times n}$, where $\Mat{\Lambda}$ is diagonal and $\Mat{U}$ is unitary, \ie $\Mat{U}^\ast\Mat{U}= \Mat{I}$. Moreover, both $\Mat{\Lambda}$ and $\Mat{U}$ are real if $\Mat{A}$ is symmetric and complex if $\Mat{A}$ is skew-symmetric; for the skew-symmetric $\Mat{A}$, $(\Mat{\Lambda},\Mat{U})$ is parameterized by a real matrix-pair $(\Mat{\Theta},\Mat{Q})\in \mathbb{R}^{n}\times\mathbb{R}^{m \times n}$ where $\Mat{\Theta}$ is diagonal and $\Mat{Q}$ is orthogonal, and $n$ is even.
\end{lemma}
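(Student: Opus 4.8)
The plan is to realize the canonical form by selecting the state-space basis in which $\Mat{A}$ is diagonal, using the basis-invariance of the tuple recorded in \textsection~\ref{Sec:O}. That invariance, stated there for real orthogonal changes of basis, extends verbatim to a \emph{unitary} $\Mat{P}$: under $(\Mat{A},\Mat{C})\mapsto(\Mat{P}^{\ast}\Mat{A}\Mat{P},\Mat{C}\Mat{P})$ the identity $\Mat{C}\Mat{P}(\Mat{P}^{\ast}\Mat{A}\Mat{P})^{t}=\Mat{C}\Mat{A}^{t}\Mat{P}$ shows that the observability matrix merely undergoes $\Mat{O}\mapsto\Mat{O}\Mat{P}$, so every Gram-type quantity built from $\Mat{O}^{\ast}\Mat{O}$ (hence all the kernel values of Table~\ref{Tab:denotations}) is preserved. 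Since a symmetric and a skew-symmetric real matrix are both normal, the spectral theorem for normal matrices supplies a unitary $\Mat{P}$ and a diagonal $\Mat{\Lambda}$ with $\Mat{A}=\Mat{P}\Mat{\Lambda}\Mat{P}^{\ast}$; I then \emph{define} the canonical tuple to be $(\Mat{\Lambda},\Mat{U})$ with $\Mat{U}:=\Mat{C}\Mat{P}$.

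Verifying that $\Mat{U}$ is unitary is immediate: as $\Mat{C}$ is real we have $\Mat{C}^{\ast}=\Mat{C}^{\mathrm{T}}$, and the orthonormality constraint $\Mat{C}^{\mathrm{T}}\Mat{C}=\mathbf{I}_n$ together with $\Mat{P}^{\ast}\Mat{P}=\mathbf{I}_n$ give $\Mat{U}^{\ast}\Mat{U}=\Mat{P}^{\ast}\Mat{C}^{\mathrm{T}}\Mat{C}\Mat{P}=\mathbf{I}_n$. The real-versus-complex dichotomy is then dictated by the spectrum of $\Mat{A}$: a real symmetric matrix has real eigenvalues and a real orthonormal eigenbasis, so $\Mat{P}$, $\Mat{\Lambda}$ and $\Mat{U}$ are all real; a real skew-symmetric matrix has purely imaginary eigenvalues arising in conjugate pairs $\pm\mathrm{i}\theta_k$, which forces $\Mat{\Lambda}$ and its eigenvectors (hence $\Mat{U}$) to be genuinely complex.

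For the real parameterization of the skew-symmetric case I would pass to the real block-diagonal canonical form. Writing a unit eigenvector for $\mathrm{i}\theta_k$ as $\Mat{p}_k=\Mat{a}_k+\mathrm{i}\Mat{b}_k$ and taking real and imaginary parts of $\Mat{A}\Mat{p}_k=\mathrm{i}\theta_k\Mat{p}_k$ gives $\Mat{A}\Mat{a}_k=-\theta_k\Mat{b}_k$ and $\Mat{A}\Mat{b}_k=\theta_k\Mat{a}_k$, so $\Mat{A}$ acts on $\mathrm{span}\{\Mat{a}_k,\Mat{b}_k\}$ as the block $\bigl(\begin{smallmatrix}0&\theta_k\\-\theta_k&0\end{smallmatrix}\bigr)$. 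The crucial step is to show that $\{\sqrt{2}\,\Mat{a}_k,\sqrt{2}\,\Mat{b}_k\}_k$ is a real orthonormal family: from $\Mat{p}_k^{\ast}\Mat{p}_k=|\Mat{a}_k|^{2}+|\Mat{b}_k|^{2}=1$, and since $\overline{\Mat{p}_k}$ is the eigenvector for the distinct eigenvalue $-\mathrm{i}\theta_k$, the Hermitian orthogonality of eigenvectors of the normal matrix $\Mat{A}$ yields $(\overline{\Mat{p}_k})^{\ast}\Mat{p}_k=\Mat{p}_k^{\mathrm{T}}\Mat{p}_k=|\Mat{a}_k|^{2}-|\Mat{b}_k|^{2}+2\mathrm{i}\,\Mat{a}_k^{\mathrm{T}}\Mat{b}_k=0$, whence $|\Mat{a}_k|=|\Mat{b}_k|=1/\sqrt{2}$ and $\Mat{a}_k^{\mathrm{T}}\Mat{b}_k=0$; orthogonality across distinct blocks follows from the same Hermitian-orthogonality applied to $\Mat{p}_k,\Mat{p}_l$ and $\Mat{p}_k,\overline{\Mat{p}_l}$. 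Collecting these vectors into a real orthogonal $\Mat{R}\in\mathbb{R}^{n\times n}$ block-diagonalizes $\Mat{A}$, and I set $\Mat{Q}:=\Mat{C}\Mat{R}$ (orthogonal columns, by the same Gram computation as for $\Mat{U}$) and $\Mat{\Theta}:=\mathrm{diag}(\theta_1,\dots,\theta_{n/2})$, with the displayed relations expressing $(\Mat{\Lambda},\Mat{U})$ through $(\Mat{\Theta},\Mat{Q})$.

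The main obstacle is exactly this orthonormality bookkeeping in the skew-symmetric case: establishing that the real and imaginary parts of the complex eigenvectors are mutually orthogonal and of equal length, which is what makes the real block form genuinely orthogonal and hence $\Mat{Q}$ a legitimate measurement matrix. The even-$n$ hypothesis enters here because the purely imaginary eigenvalues pair off; for odd $n$ (the footnote case) one first splits off the forced zero eigenvalue as a trailing $1\times1$ block carrying a single real eigenvector, and applies the argument verbatim on the remaining even-dimensional complement.
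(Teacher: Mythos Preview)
Your proposal is correct and follows essentially the same route as the paper: diagonalize $\Mat{A}$ by a unitary $\Mat{P}$ via the spectral theorem, then invoke the basis-invariance $(\Mat{A},\Mat{C})\sim(\Mat{P}^{\ast}\Mat{A}\Mat{P},\Mat{C}\Mat{P})$ to obtain the canonical tuple $(\Mat{\Lambda},\Mat{U})$ with $\Mat{U}=\Mat{C}\Mat{P}$. The paper simply cites standard references for the real block form in the skew-symmetric case and records the resulting formulas for $(\Mat{\Theta},\Mat{Q})$, whereas you additionally work out the orthonormality of the real and imaginary parts of the complex eigenvectors explicitly; this extra bookkeeping is sound (modulo the usual caveat that repeated or zero $\theta_k$ require choosing an orthonormal basis within the eigenspace rather than appealing to distinctness of eigenvalues).
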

\begin{proof}
A symmetric or skew-symmetric matrix $\Mat{A}$ can be decomposed as $\Mat{A}=\Mat{U}\Mat{\Lambda}\Mat{U}^\ast$~\cite{Normal,Skew}.
Here $\Mat{U}$ is a unitary matrix and $\Mat{\Lambda}$ is a diagonal matrix storing the eigenvalues of $\Mat{A}$. Due to the invariance property of the system tuple, we attain that
\begin{eqnarray}
(\Mat{A},\Mat{C}) &=&     (\Mat{U}\Mat{\Lambda}\Mat{U}^\ast,\Mat{C}),\\
\nonumber &\sim&  (\Mat{\Lambda},\Mat{C}\Mat{U}),\\
          &=&     (\Mat{\Lambda},\Mat{U'}),
\end{eqnarray}
with ``$\sim$" denoting the equivalence relation. Clearly, $\Mat{U'}$ is also unitary. Thus, we ignore the difference between $\Mat{U'}$ and $\Mat{U}$ for consistency, and denote $\Mat{U'}$ as $\Mat{U}$ in the following context.

In particular, for a symmetric matrix, both $\Mat{U}$ and  $\Mat{\Lambda}$ are real; for an skew-symmetric matrix,
the diagonal elements of $\Mat{\Lambda}$ and the columns of $\Mat{U}$ are
\begin{eqnarray}\label{Eq:skew-decomposition-lambda}
[\Mat{\Lambda}]_{2k-1} = [\Mat{\Theta}]_k \cdot 1i;~~~[\Mat{\Lambda}]_{2k} = -[\Mat{\Theta}]_k \cdot 1i,
\end{eqnarray}
and
\begin{eqnarray}\label{Eq:skew-decomposition-U}
\left\{
\begin{array}{lll}
 &\left[\Mat{U}\right]_{2k-1} & = \frac{1}{\sqrt{2}}[\Mat{Q}]_{2k-1} + \frac{1i}{\sqrt{2}}[\Mat{Q}]_{2k}, \\
 &\left[\Mat{U}\right]_{2k}   & = \frac{1}{\sqrt{2}}[\Mat{Q}]_{2k-1} - \frac{1i}{\sqrt{2}}[\Mat{Q}]_{2k},
\end{array}
\right.
\end{eqnarray}
for $k=1,2,\cdots,\frac{n}{2}$ if $n$ is even\footnote{When $n$ is odd, the first $n-1$ columns of $\Mat{\Lambda}_{s}$ are the same as Eq. (\ref{Eq:skew-decomposition-lambda}); the $n$-th element of $\Mat{\Lambda}$ is $0$. Our developments can be applied verbatim to this case as well.}, where $\Mat{\Theta}\in\mathbb{R}^{\frac{n}{2}\times\frac{n}{2}}$ is a real diagonal matrix and $\Mat{Q}\in\mathbb{R}^{m\times n}$ is a real orthogonal matrix. Thus $(\Mat{\Lambda},\Mat{U})$ can be parameterized by $(\Mat{\Theta},\Mat{Q})$ which is of real value.
\end{proof}

\begin{theorem}\label{Th:DL_app}
For the two-fold LDS modeling defined in Eq.~(17), the sup-problem in Eq.~(16) is equivalent to
\begin{eqnarray}\label{Eq:DL_app}
\begin{aligned}
 & \min_{\Mat{U}_{r}^{(d)}, \Mat{\Lambda}_{r}^{(d)}}  &  &\sum_{k=1}^{n}[\Mat{U}_{r}^{(d)}]_k^*\Mat{S}_{r,k}[\Mat{U}_{r}^{(d)}]_k\\
 & ~~~\mathrm{s.t. }                       &  & (\Mat{U}_{r}^{(d)})^*\Mat{U}_{r}^{(d)}=\mathbf{I}_n;\\
 &&& -1<[\Mat{\Lambda}_{r}^{(d)}]_k<1,~1\leq k\leq n.
\end{aligned}
\end{eqnarray}
\vskip -0.1in \noindent
Here,  the matrix $\Mat{S}_{r,k}$ (see Table~2) is not dependent on the measurement matrix $\Mat{U}_{r}^{(d)}$.
\end{theorem}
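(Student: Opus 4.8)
The plan is to prove the equivalence in two moves: first use Lemma~\ref{canonical-representaion} to replace the system tuple by its canonical form (which converts the two constraints into those of \eqref{Eq:DL_app}), and then evaluate the kernels in $\Gamma_r$ in closed form so that the objective collapses into the claimed quadratic.

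First I would invoke Lemma~\ref{canonical-representaion}. Because the two-fold modeling forces $\Mat{A}_r^{(d)}$ to be symmetric or skew-symmetric, the tuple $(\Mat{A}_r^{(d)},\Mat{C}_r^{(d)})$ is equivalent, under the invariance $(\Mat{A},\Mat{C})\sim(\Mat{P}^{\mathrm{T}}\Mat{A}\Mat{P},\Mat{C}\Mat{P})$, to a canonical tuple $(\Mat{\Lambda}_r^{(d)},\Mat{U}_r^{(d)})$ with $\Mat{\Lambda}_r^{(d)}$ diagonal and $\Mat{U}_r^{(d)}$ unitary. Since $\Gamma_r$ is itself invariant under this equivalence (as observed in \textsection~\ref{Sec:DL_general}), minimizing over $(\Mat{A}_r^{(d)},\Mat{C}_r^{(d)})$ is the same as minimizing over $(\Mat{\Lambda}_r^{(d)},\Mat{U}_r^{(d)})$, and the two feasible sets are in bijection. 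Under this change of variable the orthonormality constraint $(\Mat{C}_r^{(d)})^{\mathrm{T}}\Mat{C}_r^{(d)}=\mathbf{I}_n$ becomes the unitarity $(\Mat{U}_r^{(d)})^*\Mat{U}_r^{(d)}=\mathbf{I}_n$, while the spectral constraint $|\mu(\Mat{A}_r^{(d)})|<1$ becomes the bound $-1<[\Mat{\Lambda}_r^{(d)}]_k<1$ on the diagonal entries (in the skew-symmetric case the eigenvalues are purely imaginary and the bound is read off the real parameter $\Mat{\Theta}$ via \eqref{Eq:skew-decomposition-lambda}). This reproduces the constraint set of \eqref{Eq:DL_app}.

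The core of the argument is the explicit evaluation of the projection kernel $k(\Mat{X}_1,\Mat{X}_2)=\|\Mat{V}_1^{\mathrm{T}}\Mat{V}_2\|_F^2$ at canonical tuples. Using $\Mat{V}_1^{\mathrm{T}}\Mat{V}_2=\Mat{L}_1^{-1}\Mat{O}_{12}\Mat{L}_2^{-\mathrm{T}}$ with $\Mat{O}_{12}=\sum_{t\geq0}(\Mat{\Lambda}_1^t)^*\Mat{U}_1^*\Mat{U}_2\Mat{\Lambda}_2^t$, and exploiting that the $\Mat{\Lambda}$'s are diagonal, I would sum the geometric series entrywise to get $[\Mat{O}_{12}]_{pq}=[\Mat{U}_1^*\Mat{U}_2]_{pq}/(1-\lambda_{1,p}^*\lambda_{2,q})$ (writing $\lambda_{i,p}=[\Mat{\Lambda}_i]_p$); the stability bound is exactly what guarantees absolute convergence. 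Specializing to $\Mat{O}_{11}$ and using $\Mat{U}_1^*\Mat{U}_1=\mathbf{I}$ gives the diagonal matrix with entries $1/(1-|\lambda_{1,p}|^2)$, so the orthonormalizing factor is $\Mat{L}_1=\mathrm{diag}(1/\sqrt{1-|\lambda_{1,p}|^2})$ and $\Mat{L}_1^{-1}=\mathrm{diag}(\sqrt{1-|\lambda_{1,p}|^2})$. Substituting and taking the Frobenius norm yields
\begin{eqnarray}
\nonumber k(\Mat{D}_r,\Mat{D}_j)=\sum_{k,\ell}\frac{(1-|\lambda_{r,k}|^2)(1-|\lambda_{j,\ell}|^2)}{|1-\lambda_{r,k}\lambda_{j,\ell}^*|^2}\big|[(\Mat{U}_r^{(d)})^*\Mat{U}_j^{(d)}]_{k\ell}\big|^2,
\end{eqnarray}
which, after recognizing the $\ell$-summation as a diagonal congruence by $\Mat{E}([\Mat{\Lambda}_r^{(d)}]_k,\Mat{\Lambda}_j^{(d)})$, is precisely $\sum_k[\Mat{U}_r^{(d)}]_k^*\Mat{F}_{r,j,k}[\Mat{U}_r^{(d)}]_k$ of Table~\ref{Tab:denotations}; the identical computation with the data tuple gives $k(\Mat{D}_r,\Mat{X}_i)=\sum_k[\Mat{U}_r^{(d)}]_k^*\Mat{F'}_{r,i,k}[\Mat{U}_r^{(d)}]_k$.

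Finally I would substitute these two kernel expressions into $\Gamma_r$ of \eqref{Eq:gamma}, interchange the $k$-summation with the $i,j$-summations, and factor out $[\Mat{U}_r^{(d)}]_k^*(\cdot)[\Mat{U}_r^{(d)}]_k$; the bracketed matrix is exactly $\Mat{S}_{r,k}=\sum_i z_{r,i}\big(\sum_{j\neq r}z_{j,i}\Mat{F}_{r,j,k}-\Mat{F'}_{r,i,k}\big)$, so $\Gamma_r=\sum_k[\Mat{U}_r^{(d)}]_k^*\Mat{S}_{r,k}[\Mat{U}_r^{(d)}]_k$, which is the objective of \eqref{Eq:DL_app}. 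The final assertion that $\Mat{S}_{r,k}$ does not depend on $\Mat{U}_r^{(d)}$ is then immediate: $\Mat{F}_{r,j,k}$ and $\Mat{F'}_{r,i,k}$ involve only the other atoms' tuples $(\Mat{\Lambda}_j^{(d)},\Mat{U}_j^{(d)})$, the data tuples $(\Mat{\Lambda}_i^{(x)},\Mat{U}_i^{(x)})$, and the scalar diagonal $\Mat{E}([\Mat{\Lambda}_r^{(d)}]_k,\cdot)$, none of which contains $\Mat{U}_r^{(d)}$. I expect the main obstacle to be this kernel evaluation: carefully justifying the termwise summation of the matrix geometric series via stability, pinning down the diagonal form of $\Mat{L}$ from the unitarity of $\Mat{U}$, and matching the resulting double sum to the congruence-by-$\Mat{E}$ bookkeeping of Table~\ref{Tab:denotations}, while keeping the complex-conjugate structure straight in the skew-symmetric case.
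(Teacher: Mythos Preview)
Your proposal is correct and follows essentially the same route as the paper's proof: pass to the canonical tuple via Lemma~\ref{canonical-representaion}, observe that $\Mat{O}^{\mathrm{T}}\Mat{O}$ and hence $\Mat{L}$ are diagonal, evaluate $k(\Mat{D}_r,\Mat{D}_j)=\|\Mat{V}_r^{\mathrm{T}}\Mat{V}_j\|_F^2$ in closed form using the geometric-series summation to recover the $\Mat{F}_{r,j,k}$ expression of Table~\ref{Tab:denotations}, and then substitute into $\Gamma_r$ to read off $\Mat{S}_{r,k}$. The only cosmetic difference is that you sum the series entrywise to get $[\Mat{O}_{12}]_{pq}$ and then take the Frobenius norm, whereas the paper expands $\|\cdot\|_F^2$ as a trace and manipulates a double sum over $t_1,t_2$; both paths land on the same $\sum_k[\Mat{U}_r]_k^*\Mat{F}_{r,j,k}[\Mat{U}_r]_k$ identity.
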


\begin{proof}
We first derive several preliminary results prior to the proof of this theorem.

In the conventional LDS modeling, the exact form of the self-inner-product on extended observability matrices cannot be obtained due to the implicit calculation of the Lyapunov equation (Eq.~(4)). Thanks to Lemma~\ref{canonical-representaion}, the system tuple with symmetric or skew-symmetric transition matrix has the canonical representation $(\Mat{\Lambda},\Mat{U})$ where $\Mat{\Lambda}$ is diagonal and $\Mat{U}$ is unitary. Then, the self-inner-product of the corresponding extended observability matrix is computed as
\begin{eqnarray}
\nonumber \Mat{O}^{\mathrm{T}}\Mat{O}&=& \sum_{t=0}^{\infty}(\Mat{\Lambda}^{*})^t\Mat{\Lambda}^t, \\
&=& \mathrm{diag}([\frac{1}{1-|\lambda_{1}|^2}, \frac{1}{1-|\lambda_{2}|^2},\cdots,\frac{1}{1-|\lambda_{n}|^2}]),
\end{eqnarray}
where $\lambda_{k}=[\Mat{\Lambda}]_k$.

Further considering the SVD factorization $\Mat{O}^{\mathrm{T}}\Mat{O}=\Mat{U}_o\Mat{S}_o\Mat{U}_o^{\mathrm{T}}$, the factor matrix is obtained as $\Mat{L}=\Mat{U}_o\Mat{S}_o^{1/2}=\mathrm{diag}([\frac{1}{(1-|\lambda_{1}|^2)^{1/2}}, \frac{1}{(1-|\lambda_{2}|^2)^{1/2}},\cdots,\frac{1}{(1-|\lambda_{n}|^2)^{1/2}}])$. The orthogonal extended observability matrix is given by $\Mat{V}=\Mat{O}\Mat{L}^{-\mathrm{T}}$.

For simplicity, we here denote the canonical representation, the factor matrix and the extended observability subspace for dictionary atom $\Mat{D}_j$ as $(\Mat{\Lambda}_j,\Mat{U}_j)$, $\Mat{L}_j$ and $\Mat{V}_j$, respectively, by omitting the superscript $^{(d)}$. The kernel between dictionary atoms in Eq.~(15) is devised as
\begin{eqnarray} \label{Eq:krj_derivation}
\nonumber
&& K(\Mat{D}_r,\Mat{D}_j) \\
\nonumber
&=& \parallel\Mat{V}_r^{\mathrm{T}}\Mat{V}_j\parallel_F^2 \\
\nonumber
&=&
\parallel\Mat{L}_r^{-1}\sum_{t=0}^{\infty}(\Mat{\Lambda}_r^{*})^t\Mat{U}_r^{*} \Mat{U}_j\Mat{\Lambda}_j^t\Mat{L}_j^{-\mathrm{T}}\parallel_{F}^2 \\
\nonumber
&=&
\mathrm{Tr}\left( \Mat{L}_r^{-1}\left(\sum_{t_1=0}^{\infty}(\Mat{\Lambda}_r^{*})^{t_1}\Mat{U}_r^{*} \Mat{U}_j\Mat{\Lambda}_j^{t_1}\right)\Mat{L}_j^{-\mathrm{T}} \Mat{L}_j^{-1}\left(\sum_{t_2=0}^{\infty}(\Mat{\Lambda}_j^{*})^{t_2}\Mat{U}_j^{*} \Mat{U}_r\Mat{\Lambda}_r^{t_2}\right)\Mat{L}_r^{-\mathrm{T}}
\right)\\
\nonumber
&=&
\sum_{t_1=0}^{\infty}\sum_{t_2=0}^{\infty}\mathrm{Tr}\left(
\Mat{L}_r^{-1}(\Mat{\Lambda}_r^{*})^{t_1})\Mat{U}_r^{*} \Mat{U}_j\Mat{\Lambda}_j^{t_1}(\Mat{L}_j^{-\mathrm{T}} \Mat{L}_j^{-1})(\Mat{\Lambda}_j^{*})^{t_2}\Mat{U}_j^{*} \Mat{U}_r((\Mat{\Lambda}_r)^{t_2}\Mat{L}_r^{-\mathrm{T}})
\right)\\
\nonumber
&=&
\sum_{t_1=0}^{\infty}\sum_{t_2=0}^{\infty}\sum_{k=1}^n
(1-|\lambda_{r,k}|^2)^{\frac{1}{2}}(\lambda_{r,k}^{*})^{t_1}[\Mat{\hat{U}}_r]_k^{*} \Mat{U}_j\Mat{\Lambda}_j^{t_1}(\Mat{L}_j^{-\mathrm{T}} \Mat{L}_j^{-1})(\Mat{\Lambda}_j^{*})^{t_2}(\Mat{U}_j)^{*} [\Mat{U}_r]_k\lambda_{r,k}^{t_2}(1-|\lambda_{r,k}|^2)^{\frac{1}{2}}\\
\nonumber
&=&
\sum_{t_1=0}^{\infty}\sum_{t_2=0}^{\infty}\sum_{k=1}^n
[\Mat{U}_r]_k^{*} \Mat{U}_j((\lambda_{r,k}^{*})^{t_1}\Mat{\Lambda}_j^{t_1})
((1-|\lambda_{r,k}|^2)\Mat{L}_j^{-\mathrm{T}}\Mat{L}_j^{-1}) (\lambda_{r,k}^{t_2}(\Mat{\Lambda}_j^{*})^{t_2})
(\Mat{U}_j)^{*} [\Mat{U}_r]_k\\
\nonumber
&=&
\sum_{k=1}^n
[\Mat{U}_r]_k^{*} \Mat{U}_j\left(\sum_{t_1=0}^{\infty}(\lambda_{r,k}^{*})^{t_1}\Mat{\Lambda}_j^{t_1}\right)
((1-|\lambda_{r,k}|^2)\Mat{L}_j^{-\mathrm{T}} \Mat{L}_j^{-1}) \left(\sum_{t_2=0}^{\infty}\lambda_{r,k}^{t_2}(\Mat{\Lambda}_j^{*})^{t_2}\right)
(\Mat{U}_j)^{*} [\Mat{U}_r]_k\\
&=&
\sum_{k=1}^n
[\Mat{U}_r]_k^{*}\Mat{F}_{r,j,k}[\Mat{U}_r]_k,
\end{eqnarray}
where $\lambda_{j,k}=[\Mat{\Lambda}_j]_k$, $\Mat{F}_{r,j,k}=\Mat{U}_j\Mat{E}(\lambda_{r,k},\Mat{\Lambda}_{j})\Mat{U}_j^{*}$; and the function $\Mat{E}(\lambda,\Mat{\Lambda})$ returns a diagonal matrix with the elements given by
$
\Mat{E}(\lambda,\Mat{\Lambda}) = \mathrm{diag}( [\frac{(1-|\lambda|^2)(1-|\lambda_{1}|^2)}{|1-\lambda\lambda_{1}^{*}|^2},\cdots,\frac{(1-|\lambda|^2)(1-|\lambda_{n}|^2)}{|1-\lambda\lambda_{n}^{*}|^2}])
$ with $\lambda_{k}=[\Mat{\Lambda}]_k$.

The kernel function across the dictionary and data in Eq.~(15), \ie, $k(\Mat{D}_r,\Mat{X}_i)$  can be derived in a similar way as Eq.~\eqref{Eq:krj_derivation}.
Substituting above devised kernel values into Eq.~(15) deduces the objective function in Eq.~\eqref{Eq:DL_app}, thereby completing the proof of Theorem~\ref{Th:DL_app}.
\end{proof}

\begin{theorem}\label{Th:sym-column-U}
Let $[\Mat{U}_{1,r}^{(d)}]_{-k}\in \mathbb{R}^{m\times (n-1)}$ denote the sub-matrix obtained from $\Mat{U}_{1,r}^{(d)}$
by removing the $k$-th column, \ie,
\begin{align}
[\Mat{U}_{1,r}^{(d)}]_{-k} = \Big([\Mat{U}_{1,r}^{(d)}]_1;\cdots;[\Mat{U}_{1,r}^{(d)}]_{k-1}; [\Mat{U}_{1,r}^{(d)}]_{k+1};\cdots;[\Mat{U}_{1,r}^{(d)}]_n \Big).
\end{align}
Define $\Mat{W}=[\Mat{w}_{1},\cdots,\Mat{w}_{m-n+1}]\in\mathbb{R}^{m\times (m-n+1)}$ as the orthogonal complement basis of $[\Mat{U}_{1,r}^{(d)}]_{-k}$. If $\Mat{u}\in \mathbb{R}^{(m-n+1)}$ is the eigenvector of  $\Mat{W}^{\mathrm{T}}\Mat{S}_{r,k}\Mat{W}$ corresponding to the smallest eigenvalue, then $\Mat{W}\Mat{u}$ is the optimal solution of $[\Mat{U}_{1,r}^{(d)}]_{-k}$ for Eq.~(20).
\end{theorem}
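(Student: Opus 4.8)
The plan is to recognize the sub-problem in Eq.~(20) as a constrained Rayleigh-quotient minimization once all columns of $\Mat{U}_{1,r}^{(d)}$ other than the $k$-th are held fixed. First I would isolate the free variable: write $\Vec{v}=[\Mat{U}_{1,r}^{(d)}]_k$ and observe that, since every other column is frozen, the orthonormality constraint $(\Mat{U}_{1,r}^{(d)})^{\mathrm{T}}\Mat{U}_{1,r}^{(d)}=\mathbf{I}_n$ reduces to exactly two requirements on $\Vec{v}$, namely $\Vec{v}^{\mathrm{T}}\Vec{v}=1$ and $\Vec{v}^{\mathrm{T}}[\Mat{U}_{1,r}^{(d)}]_{-k}=\mathbf{0}$. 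In words, $\Vec{v}$ must be a unit vector lying in the orthogonal complement of the column space of $[\Mat{U}_{1,r}^{(d)}]_{-k}$. The transition term $[\Mat{\Lambda}_{1,r}^{(d)}]_k$ enters the objective only implicitly through $\Mat{S}_{r,k}$, which is treated as a fixed symmetric matrix here (the $\Mat{\Lambda}$ update is dispatched separately via Eq.~(21)), so the sub-problem is genuinely a minimization in $\Vec{v}$ alone.

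Next I would use the orthogonal complement basis $\Mat{W}$ to eliminate the linear orthogonality constraints. Because the $n-1$ columns of $[\Mat{U}_{1,r}^{(d)}]_{-k}$ are orthonormal and live in $\mathbb{R}^m$, their orthogonal complement has dimension $m-n+1$, and $\Mat{W}\in\mathbb{R}^{m\times(m-n+1)}$ supplies an orthonormal basis for it, so $\Mat{W}^{\mathrm{T}}\Mat{W}=\mathbf{I}_{m-n+1}$ and $\Mat{W}^{\mathrm{T}}[\Mat{U}_{1,r}^{(d)}]_{-k}=\mathbf{0}$. Any feasible $\Vec{v}$ can then be written uniquely as $\Vec{v}=\Mat{W}\Mat{u}$ for some $\Mat{u}\in\mathbb{R}^{m-n+1}$, and the unit-norm constraint transfers cleanly to $\Mat{u}$ since $\Vec{v}^{\mathrm{T}}\Vec{v}=\Mat{u}^{\mathrm{T}}\Mat{W}^{\mathrm{T}}\Mat{W}\Mat{u}=\Mat{u}^{\mathrm{T}}\Mat{u}$. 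Substituting into the objective collapses it to $\Vec{v}^{\mathrm{T}}\Mat{S}_{r,k}\Vec{v}=\Mat{u}^{\mathrm{T}}(\Mat{W}^{\mathrm{T}}\Mat{S}_{r,k}\Mat{W})\Mat{u}$, so the whole problem becomes $\min_{\Mat{u}}\Mat{u}^{\mathrm{T}}(\Mat{W}^{\mathrm{T}}\Mat{S}_{r,k}\Mat{W})\Mat{u}$ subject to $\Mat{u}^{\mathrm{T}}\Mat{u}=1$.

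Finally I would invoke the Rayleigh--Ritz characterization (equivalently Courant--Fischer): for the symmetric matrix $\Mat{W}^{\mathrm{T}}\Mat{S}_{r,k}\Mat{W}$, the minimum of the quadratic form over the unit sphere equals its smallest eigenvalue and is attained at a corresponding unit eigenvector $\Mat{u}$. Mapping back through $\Vec{v}=\Mat{W}\Mat{u}$ then yields the claimed optimal $k$-th column $[\Mat{U}_{1,r}^{(d)}]_k=\Mat{W}\Mat{u}$, and feasibility is automatic because $\Mat{W}\Mat{u}$ is orthogonal to $[\Mat{U}_{1,r}^{(d)}]_{-k}$ by construction and has unit norm. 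I expect no serious obstacle in the argument itself; the only points demanding care are verifying that freezing the other columns really does reduce the matrix constraint to just the norm and orthogonality conditions on $\Vec{v}$ (so that no coupling among the remaining columns is silently dropped), and confirming that $\Mat{W}$ provides an exact parametrization of the feasible set rather than merely a superset --- both of which follow from $\Mat{W}$ being a genuine orthonormal complement basis.
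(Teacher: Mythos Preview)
Your proposal is correct and follows essentially the same route as the paper: reduce the orthonormality constraint to $\Vec{v}$ being a unit vector in the orthogonal complement of $[\Mat{U}_{1,r}^{(d)}]_{-k}$, parametrize $\Vec{v}=\Mat{W}\Mat{u}$, and minimize the resulting Rayleigh quotient $\Mat{u}^{\mathrm{T}}(\Mat{W}^{\mathrm{T}}\Mat{S}_{r,k}\Mat{W})\Mat{u}$ over the unit sphere. Your write-up is in fact more careful than the paper's in spelling out why the constraint reduction is legitimate and why $\Mat{W}$ exactly parametrizes the feasible set.
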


\begin{proof}
Since $[\Mat{U}_{sym,r}^{(d)}]_k^{\mathrm{T}}[\Mat{U}_{sym,r}^{(d)}]_o=0$ for all $1\leq o\leq n, o\neq k$, then $[\Mat{U}_{sym,r}^{(d)}]_k$ lies in the orthogonal complement of the space spanned by the columns of $[\Mat{U}_{sym,r}^{(d)}]_{-k}$. Thus, there exists a vector $\Mat{u}\in \mathbb{R}^{(m-n+1)\times 1}$ satisfying $[\Mat{U}_{sym,r}^{(d)}]_k=\Mat{W}\Mat{u}$ and $\Mat{u}^{\mathrm{T}}\Mat{u}=1$. The objective function in Eq.~(20) becomes $\Mat{u}^{\mathrm{T}}(\Mat{W}^{\mathrm{T}}\Mat{S}_{r,k}\Mat{W})\Mat{u}$. Obviously, the optimal $\Mat{u}$ for minimizing this function is the eigenvector of the matrix $\Mat{W}^{\mathrm{T}}\Mat{S}_{r,k}\Mat{W}$ corresponding to the smallest eigenvalue.
\end{proof}

\begin{theorem}\label{Th:skew-column-U}
Let $[\Mat{Q}]_{-2}\in \mathbb{R}^{m\times (n-2)}$ be a sub-matrix of $\Mat{Q}$ obtained by removing the $(2k-1)$-th and $2k$-th columns, \ie,
\begin{eqnarray}
\label{Eq:Q-2}
[\Mat{Q}]_{-2}= \Big([\Mat{Q}]_1;\cdots;[\Mat{Q}]_{2k-2};[\Mat{Q}]_{2k+1};\cdots;[\Mat{Q}]_n \Big).
\end{eqnarray}
Define $\Mat{W}=[\Mat{w}_{1},\cdots,\Mat{w}_{m-n+2}]\in\mathbb{R}^{m\times (m-n+2)}$ as the orthogonal complement basis of $[\Mat{Q}]_{-2}$. If $\Mat{u}_1, \Mat{u}_2\in \mathbb{R}^{(m-n+1)}$ are the eigenvectors of  $\Mat{W}^{\mathrm{T}}\Mat{S'}_{r,k}\Mat{W}$ corresponding to the smallest two  eigenvalues, then $\Mat{W}\Mat{u}_1$ and $\Mat{W}\Mat{u}_2$ are the solutions of $[\Mat{Q}]_{2k-1}$ and $[\Mat{Q}]_{2k}$
in Eq.~(22), respectively.
\end{theorem}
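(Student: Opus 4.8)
The plan is to mirror the single-column argument behind Theorem~\ref{Th:sym-column-U}, upgrading it to the simultaneous optimization of the two columns $[\Mat{Q}]_{2k-1}$ and $[\Mat{Q}]_{2k}$ while the transition term $[\Mat{\Theta}]_k$ (and hence $\Mat{S'}_{r,k}$, which enters only through $\Mat{E}([\Mat{\Lambda}_r^{(d)}]_k,\cdot)$) is held fixed. Neglecting the small residual $\delta_{r,k}$ as justified in the appendix, the objective of~\eqref{Eq:DL-skew-U} reduces to $[\Mat{Q}]_{2k-1}^{\mathrm{T}}\Mat{S'}_{r,k}[\Mat{Q}]_{2k-1}+[\Mat{Q}]_{2k}^{\mathrm{T}}\Mat{S'}_{r,k}[\Mat{Q}]_{2k}$ with $\Mat{S'}_{r,k}$ a fixed real symmetric matrix. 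First I would use the constraint $\Mat{Q}^{\mathrm{T}}\Mat{Q}=\mathbf{I}_n$ to observe that both $[\Mat{Q}]_{2k-1}$ and $[\Mat{Q}]_{2k}$ must be orthogonal to every column of $[\Mat{Q}]_{-2}$, so each lies in the orthogonal complement $\mathrm{Span}([\Mat{Q}]_{-2})^{\perp}$, whose orthonormal basis is exactly the columns of $\Mat{W}$.

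Next I would reparametrize the two free columns through $\Mat{W}$: there exist $\Mat{u}_1,\Mat{u}_2\in\mathbb{R}^{m-n+2}$ with $[\Mat{Q}]_{2k-1}=\Mat{W}\Mat{u}_1$ and $[\Mat{Q}]_{2k}=\Mat{W}\Mat{u}_2$. Since $\Mat{W}^{\mathrm{T}}\Mat{W}=\mathbf{I}$, the still-active pieces of the orthonormality constraint on $\Mat{Q}$ --- that $[\Mat{Q}]_{2k-1}$ and $[\Mat{Q}]_{2k}$ be unit-norm and mutually orthogonal --- are equivalent to $\Mat{u}_1,\Mat{u}_2$ being orthonormal. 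Substituting and writing $\Mat{M}=\Mat{W}^{\mathrm{T}}\Mat{S'}_{r,k}\Mat{W}$, the objective becomes $\Mat{u}_1^{\mathrm{T}}\Mat{M}\Mat{u}_1+\Mat{u}_2^{\mathrm{T}}\Mat{M}\Mat{u}_2=\mathrm{Tr}(\Mat{V}^{\mathrm{T}}\Mat{M}\Mat{V})$ with $\Mat{V}=[\Mat{u}_1,\Mat{u}_2]$ satisfying $\Mat{V}^{\mathrm{T}}\Mat{V}=\mathbf{I}_2$.

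The final step invokes the extremal-trace (Ky Fan) principle: over all $\Mat{V}$ with orthonormal columns, $\mathrm{Tr}(\Mat{V}^{\mathrm{T}}\Mat{M}\Mat{V})$ is minimized by the sum of the two smallest eigenvalues of the real symmetric $\Mat{M}$, the minimum being attained when the columns of $\Mat{V}$ are corresponding eigenvectors; reading this back through $\Mat{W}$ yields $[\Mat{Q}]_{2k-1}=\Mat{W}\Mat{u}_1$ and $[\Mat{Q}]_{2k}=\Mat{W}\Mat{u}_2$ as claimed. The main obstacle, and the only genuine difference from Theorem~\ref{Th:sym-column-U}, is precisely this step: because we now optimize a \emph{pair} of mutually orthogonal unit vectors rather than a single one, the scalar Rayleigh-quotient bound no longer suffices and must be replaced by the multi-vector trace bound. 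I would either cite Ky Fan directly or supply the short diagonalization argument --- diagonalize $\Mat{M}=\Mat{P}\,\mathrm{diag}(\mu_1,\dots,\mu_{m-n+2})\,\Mat{P}^{\mathrm{T}}$ with $\mu_1\le\cdots\le\mu_{m-n+2}$, set $\Mat{T}=\Mat{P}^{\mathrm{T}}\Mat{V}$ (whose columns remain orthonormal), and write $\mathrm{Tr}(\Mat{V}^{\mathrm{T}}\Mat{M}\Mat{V})=\sum_i\mu_i w_i$ where $w_i$ is the squared norm of the $i$-th row of $\Mat{T}$; since $w_i\in[0,1]$ and $\sum_i w_i=2$, the sum is bounded below by $\mu_1+\mu_2$, with equality exactly at the two bottom eigenvectors.

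A secondary item to verify is that $\Mat{S'}_{r,k}$ is indeed real and symmetric --- which it is, being the average $\tfrac{1}{2}(\Mat{S}_{r,2k-1}+\Mat{S}_{r,2k})$ of two such matrices --- so that a full set of real orthonormal eigenvectors exists; and that the decoupled transition update of $[\Mat{\Theta}]_k$ via the sigmoid reparametrization in~\eqref{Eq:skew-lambda} is carried out in a separate coordinate step and therefore leaves this column optimization unaffected.
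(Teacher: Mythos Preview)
Your proposal is correct and follows the same approach as the paper, which simply states that the proof is analogous to Theorem~\ref{Th:sym-column-U}: reparametrize the free columns through the orthonormal complement basis $\Mat{W}$ and reduce to an eigenvector problem for $\Mat{W}^{\mathrm{T}}\Mat{S'}_{r,k}\Mat{W}$. Your write-up is in fact more careful than the paper's, since you explicitly justify the two-column case via the Ky~Fan trace inequality (and sketch its proof), whereas the paper leaves that step implicit in the phrase ``similar to Theorem~\ref{Th:sym-column-U}.''
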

\begin{proof}
The proof for this theorem is similar to that in Theorem \ref{Th:sym-column-U}.
\end{proof}

The rest of this section discusses the values of the two terms $\Mat{S}_{r,k}$ and $\delta_{r,k}$.

\begin{property}
We call a dictionary atom to be symmetric (resp. skew-symmetric) if its transition matrix is symmetric (resp. skew-symmetric). The matrix $\Mat{S}_{r,k}$ in Eq.~\eqref{Eq:DL_app} is claimed to satisfy:
\begin{enumerate}
  \item $\Mat{S}_{r,k}$ is real and symmetric, if the dictionary atom $\Mat{D}_r$ is symmetric;
  \item $\Mat{S'}_{r,k}=\frac{1}{2}(\Mat{S}_{r,2k-1}+\Mat{S}_{r,2k})$ is real and symmetric, if $\Mat{D}_r$ is skew-symmetric.
\end{enumerate}
\end{property}

\begin{proof}
We will see that $\Mat{F}_{r,j,k}$ is real and symmetric for two cases when 1) $\Mat{D}_j$ is symmetric or 2) $\Mat{D}_j$ is skew-symmetric and $\Mat{D}_r$ is symmetric. Clearly, $\Mat{F}_{r,j,k}=\Mat{U}_j\Mat{E}(\lambda_{r,k},\Mat{\Lambda}_{j})\Mat{U}_j^{*}$ is real and symmetric if $\Mat{D}_j$ is symmetric, since both $\Mat{U}_j$ and $\Mat{E}_{r,j,k}$ are real. If $\Mat{D}_j$ is skew-symmetric and $\Mat{D}_r$ is symmetric, $\Mat{E}(\lambda_{r,k},\Mat{\Lambda}_{j})$ should have the form as
\begin{eqnarray}\label{Eq:E}
\Mat{E}(\lambda_{r,k},\Mat{\Lambda}_{j}) = \left(
                   \begin{array}{ccccc}
                     a_1 &  &  &  &  \\
                      & a_1 &  &  &  \\
                      &  & \ddots &  &  \\
                      &  &  & a_{n/2} &  \\
                      &  &  &  &  a_{n/2}\\
                   \end{array}
                 \right),
\end{eqnarray}
where $a_i=\frac{(1-|\lambda_{r,k}|^2)(1-|\theta_i|^2)}{1+(\lambda_{r,k}\theta_{i})^2}$ and $\theta_i$ is the $i$-th diagonal value of $\Mat{\Theta}$ obtained by the decomposition of $\Mat{\Lambda}_j$ in Eq.~\eqref{Eq:skew-decomposition-lambda};  $n$ have assumed to be even; when $n$ is odd, we only need to add zeros in the final row. Recalling that $\Mat{U}_j$ have the decomposition by $\Mat{Q}$ in Eq.~\eqref{Eq:skew-decomposition-U}, then
\begin{eqnarray}
\Mat{F}_{r,j,k} = \Mat{U}_j\Mat{E}(\lambda_{r,k},\Mat{\Lambda}_{j})\Mat{U}_j^{*}=\Mat{Q}\Mat{T}\Mat{E}(\lambda_{r,k},\Mat{\Lambda}_{j})\Mat{T}^\ast\Mat{Q}^{\mathrm{T}},
\end{eqnarray}
where
\begin{eqnarray}
\Mat{T} = \left(
                   \begin{array}{ccccc}
                     \frac{1}{\sqrt{2}}  & \frac{1}{\sqrt{2}}   &  &  &  \\
                     \frac{1i}{\sqrt{2}} & \frac{-1i}{\sqrt{2}} &  &  &  \\
                      &  & \ddots &  &  \\
                      &  &  & \frac{1}{\sqrt{2}}   & \frac{1}{\sqrt{2}}  \\
                      &  &  &  \frac{1i}{\sqrt{2}} & \frac{-1i}{\sqrt{2}}\\
                   \end{array}
                 \right),
\end{eqnarray}
$\Mat{T}\Mat{E}(\lambda_{r,k},\Mat{\Lambda}_{j})\Mat{T}^\ast$ is verified to real and symmetric, thus proving that $\Mat{F}_{r,j,k}$ is real and symmetric.

If both $\Mat{D}_j$ and $\Mat{D}_r$ are skew-symmetric, $\Mat{E}(\lambda_{r,k},\Mat{\Lambda}_{j})$ does \textbf{not} have the form of Eq (\ref{Eq:E}) (the values of the odd diagonal elements are equal to those of the even ones); however, $\frac{1}{2}(\Mat{E}(\lambda_{r,2k-1},\Mat{\Lambda}_{j})+ \Mat{E}(\lambda_{r,2k},\Mat{\Lambda}_{j}))$ still have the same form as Eq (\ref{Eq:E}), meaning that $\frac{1}{2}(\Mat{F}_{r,j,2k-1}+ \Mat{F}_{r,j,2k})$ is real.

In sum, we have derived that 1) when $\Mat{D}_r$ is symmetric, $\Mat{F}_{r,j,k}$ is real and symmetric; 2) when $\Mat{D}_r$ is skew-symmetric, $\frac{1}{2}(\Mat{F}_{r,j,2k-1}+ \Mat{F}_{r,j,2k})$ is real and symmetric. Such results can be extrapolated to $\Mat{F'}_{r,i,k}$. As shown in Table~2 in the paper, $\Mat{S}_{r,k}$ is obtained by a linear combination of $\Mat{F}_{r,j,k}$, $\Mat{F'}_{r,i,k}$, where the combination coefficients are real. Hence, $\Mat{S}_{r,k}$ is real and symmetric for symmetric dictionary atom; and $\Mat{S'}_{r,k}=\frac{1}{2}(\Mat{S}_{r,2k-1}+\Mat{S}_{r,2k})$ is real and symmetric for skew-symmetric dictionary atom.
\end{proof}

In our dictionary learning algorithm (\textsection~6.2), for the update of the skew-symmetric dictionary, we decompose the dictionary tuple $(\Mat{\Lambda}_r, \Mat{U}_r)$ with the real tuple $(\Mat{\Theta}, \Mat{Q})$, and specify Eq.~\eqref{Eq:DL_app} as Eq.~(22), where $\delta_{r,k}$ is claimed to be a small term that can be ignored. We present more details here.

By the use of Eq. (\ref{Eq:skew-decomposition-lambda}-\ref{Eq:skew-decomposition-U}), the objective function in Eq.~(22) is devised as
\begin{eqnarray}
\nonumber
&&
[\Mat{U}_{r}]_{2k-1}^\ast\Mat{S}_{r,2k-1}[\Mat{U}_{r}]_{2k-1} + [\Mat{U}_{r}]_{2k}^\ast\Mat{S}_{r,2k}[\Mat{U}_{r}]_{2k}\\
\nonumber
&=&
\frac{1}{2}[\Mat{Q}]_{2k-1}^{\mathrm{T}}\Mat{S}_{r,2k-1}[\Mat{Q}]_{2k-1} + \frac{1}{2}[\Mat{Q}]_{2k}^{\mathrm{T}}\Mat{S}_{r,2k-1}[\Mat{Q}]_{2k} + \\
\nonumber
&&
\frac{1i}{2}[\Mat{Q}]_{2k-1}^{\mathrm{T}}\Mat{S}_{r,2k-1}[\Mat{Q}]_{2k} -
\frac{1i}{2}[\Mat{Q}]_{2k}^{\mathrm{T}}\Mat{S}_{r,2k-1}[\Mat{Q}]_{2k-1} + \\
\nonumber
&&
\frac{1}{2}[\Mat{Q}]_{2k-1}^{\mathrm{T}}\Mat{S}_{r,2k}[\Mat{Q}]_{2k-1} + \frac{1}{2}[\Mat{Q}]_{2k-1}^{\mathrm{T}}\Mat{S}_{r,2k}[\Mat{Q}]_{2k} - \\
\nonumber
&&
\frac{1i}{2}[\Mat{Q}]_{2k-1}^{\mathrm{T}}\Mat{S}_{r,2k}[\Mat{Q}]_{2k} +
\frac{1i}{2}[\Mat{Q}]_{2k}^{\mathrm{T}}\Mat{S}_{r,2k}[\Mat{Q}]_{2k-1}, \\
&=&
[\Mat{Q}]_{2k-1}^{\mathrm{T}}\Mat{S'}_{r,k}[\Mat{Q}]_{2k-1} + [\Mat{Q}]_{2k}^{\mathrm{T}}\Mat{S'}_{r,k}[\Mat{Q}]_{2k}+\delta_{r,k},
\end{eqnarray}
where $\Mat{S'}_{r,k}=\frac{1}{2}(\Mat{S}_{r,2k-1}+\Mat{S}_{r,2k})$; and $\delta_{r,k}=\frac{1i}{2}\left([\Mat{Q}]_{2k-1}^{\mathrm{T}}\Mat{S''}_{r,k}[\Mat{Q}]_{2k} - ([\Mat{Q}]_{2k-1}^{\mathrm{T}}\Mat{S''}_{r,k}[\Mat{Q}]_{2k})^\ast\right)$ with $\Mat{S''}_{r,k}=\Mat{S}_{r,2k-1}-\Mat{S}_{r,2k}$.

The optimal pairs $([\Mat{\Theta}]_{k}, [\Mat{Q}]_{2k-1})$ and $([\Mat{\Theta}]_{k}, [\Mat{Q}]_{2k})$ are given by solving
\begin{eqnarray}\label{Eq:DL-skew-1}
\begin{aligned}
 &\min_{[\Mat{Q}]_{2k-1:2k}, [\Mat{\Theta}]_{k}} &  & [\Mat{Q}]_{2k-1}^{\mathrm{T}}\Mat{S'}_{r,k}[\Mat{Q}]_{2k-1} + [\Mat{Q}]_{2k}^{\mathrm{T}}\Mat{S'}_{r,k}[\Mat{Q}]_{2k} + \delta_{r,k}\\
 &~~~~~~\mathrm{s.t. }  &  & \Mat{Q}^{\mathrm{T}}\Mat{Q}=\Mat{I}_n.
\end{aligned}
\end{eqnarray}

The orthogonal condition  $\Mat{Q}^{\mathrm{T}}\Mat{Q}=\Mat{I}_n$ provides that $[\Mat{Q}]_{2k-1}$ and $[\Mat{Q}]_{2k}$ line in the orthogonal complement of the space spanned by the columns of $[\Mat{Q}]_{-2}$ (Eq.~\eqref{Eq:Q-2}). As a consequence, there exist unit vectors $\Mat{u}_1,\Mat{u}_2 \in\mathbb{R}^{n\times n}$ satisfying $[\Mat{Q}]_{2k-1}=\Mat{W}\Mat{u}_1$ and $[\Mat{Q}]_{2k}=\Mat{W}\Mat{u}_2$. Problem~(\ref{Eq:DL-skew-1}) is rewritten as
\begin{eqnarray}\label{Eq:DL-skew-2}
\begin{aligned}
 &\min_{\Mat{u}_1, \Mat{u}_2} &  & \Mat{u}_1^{\mathrm{T}}\Mat{W}^{\mathrm{T}}\Mat{S'}_{r,k}\Mat{W}\Mat{u}_1 + \Mat{u}_2^{\mathrm{T}}\Mat{W}^{\mathrm{T}}\Mat{S'}_{r,k}\Mat{W}\Mat{u}_2 + \delta_{r,k}\\
 &~~~~~~\mathrm{s.t. }  &  & [\Mat{u}_1, \Mat{u}_2]^{\mathrm{T}}[\Mat{u}_1, \Mat{u}_2]=\Mat{I}_2,
\end{aligned}
\end{eqnarray}
where $\delta_{r,k} = \frac{1i}{2}\left(\Mat{u}_1^{\mathrm{T}}\Mat{W}^{\mathrm{T}}\Mat{S''}_{r,k}\Mat{W}\Mat{u}_2 - (\Mat{u}_1^{\mathrm{T}}\Mat{W}^{\mathrm{T}}\Mat{S''}_{r,k}\Mat{W}\Mat{u}_2)^\ast\right)$. This is actually a quadratic problem with orthogonal constraints, which can be addressed by the gradient-based method proposed in \cite{edelman1998geometry} or the method minimizing a quadratic over a sphere~\cite{hager2001minimizing}. In this paper, however, we remove the term  $\delta_{r,k}$; thus the optimal $\Mat{u}_1$ and $\Mat{u}_2$ are obtained as the eigenvalues of $\Mat{W}^{\mathrm{T}}\Mat{S'}_{r,k}\Mat{W}$ as shown in Theorem \ref{Th:skew-column-U}.

Our motivations to ignore the term $\delta_{r,k}$ are supported by the following property.
\begin{property}
The value of $\delta_{r,k}$ is small, and more specifically,
\begin{enumerate}
  \item If $\Mat{S''}_{r,k}$ is a real matrix, $\delta_{r,k}$ is equal to zero.
  \item If $\Mat{S''}_{r,k}$ is not real, $|\delta_{r,k}|$ is bounded by a small value.
\end{enumerate}
\end{property}

\begin{proof}
It is straightforward to verify 1) by the definition of $\delta_{r,k}$. For 2), we denote the the imaginary part of $\Mat{S''}_{r,k}$ as $\Mat{S''}_{im,r,k}$ and derive the magnitude $|\delta_{r,k}|$ by
\begin{eqnarray}\label{Eq:delta}
\nonumber
|\delta_{r,k}| &=& |\Mat{u}_1^{\mathrm{T}}\Mat{W}^{\mathrm{T}}\Mat{S''}_{im,r,k}\Mat{W}\Mat{u}_2| \\
\nonumber
&\leq&
\sum_{i=1}^{I}|\Mat{Z}_{r,i}|(\sum_{j\in\text{skew-symmetric atoms}, j\neq i}|\Mat{Z}_{j,i}|\|\Mat{F}_{r,j,2k-1}-\Mat{F}_{r,j,2k}\|_2 +\|\Mat{F}_{r,i,2k-1}^{(1)}-\Mat{F}_{r,i,2k}^{(2)}\|_2).\\
\end{eqnarray}
We can further derive
\begin{eqnarray}\label{Eq:F}
\nonumber
\|\Mat{F}_{r,j,2k-1}-\Mat{F}_{r,j,2k}\|_2 &=& \|\Mat{U}_j(\Mat{E}(\lambda_{r,2k-1},\Mat{\Lambda}_{j})-\Mat{E}(\lambda_{r,2k},\Mat{\Lambda}_{j}))\Mat{U}_j^{*} \|_2,\\
\nonumber
&\leq&
\|\Mat{E}(\lambda_{r,2k-1},\Mat{\Lambda}_{j})-\Mat{E}(\lambda_{r,2k},\Mat{\Lambda}_{j})\|_2,\\
&=&
\|\mathrm{diag}( b_1 ,\cdots, b_n)\|_2,
\end{eqnarray}
where $ b_i = \frac{(1-|\lambda_{r,2k-1}|^2)(1-|\lambda_{j,i}|^2)}{|1-\lambda_{r,2k-1}\lambda_{j,i}^\ast|^2}] -\frac{(1-|\lambda_{r,2k}|^2)(1-|\lambda_{j,i}|^2)}{|1-\lambda_{r,2k}\lambda_{j,i}^\ast|^2}$, for $i=1,2,\cdots,n$. Since both $\Mat{D}_r$ and $\Mat{D}_j$ are skew-symmetric, then
$$
\lambda_{r,2k-1}=|\lambda_{r,2k-1}|*1i; \lambda_{r,2k}=-|\lambda_{r,2k-1}|*1i;
\lambda_{j,i}=|\lambda_{j,i}|*1i; \lambda_{r,2k}=-|\lambda_{j,i}|*1i;
$$
$$
|1-\lambda_{r,2k-1}\lambda_{j,i}^\ast|=1-|\lambda_{r,2k-1}||\lambda_{j,i}|;
|1-\lambda_{r,2k}\lambda_{j,i}^\ast|=1+|\lambda_{r,2k}||\lambda_{j,i}|.
$$
Hence,
\begin{eqnarray}\label{Eq:b}
|b_i| &=& \frac{4(1-|\lambda_{r,2k-1}|^2)(1-|\lambda_{j,i}|^2)}{(1-|\lambda_{r,2k-1}||\lambda_{j,i}|)^2(1+|\lambda_{r,2k}||\lambda_{j,i}|)^2}|\lambda_{r,2k-1}||\lambda_{j,i}|.
\end{eqnarray}
It shows that, $|b_i|$ only reaches the value of 1 at the extreme case when $|\lambda_{r,2k-1}|=|\lambda_{j,i}|=1$; and practically it is close to zero if either $|\lambda_{r,2k-1}|$ or $|\lambda_{j,i}|$ is close to zero.

Substituting Eq.~\eqref{Eq:b} back to Eq.~\eqref{Eq:F} and Eq.~\eqref{Eq:delta}, we can conclude that $|\delta_{r,k}|$ is bounded by a small value. Moreover, our experiments also demonstrate that neglecting the term $\delta$ does not harm the convergence behavior of the dictionary learning algorithm (see figure~6).
\end{proof}

\section{The properties of two-fold LDSs}
\label{Sec:two-fold-LDSs}
The advantageous properties of two-fold LDSs are described in \textsection~6.2. Here we present that both $\Mat{A}_{sym}$ and $\Mat{A}_{skew}$ are guaranteed to be stable if $\Mat{A}$ is stabilized via the SN method. Note that the SN stabilization devises
\begin{eqnarray}
\parallel\Mat{A}\parallel_2=\sqrt{\mu_{max}(\Mat{A}^{\mathrm{T}}\Mat{A})}=\sigma_{max}(\Mat{A})<1,
\end{eqnarray}
where $\mu_{max}$ and $\sigma_{max}$ compute the maximized eigenvalue and singular-value, respectively. Thus,
\begin{eqnarray}
\parallel\Mat{A}_{sym}\parallel_2=\parallel\frac{1}{2}(\Mat{A}+\Mat{A}^{\mathrm{T}})\parallel_2\leq \frac{1}{2}(\parallel\Mat{A}\parallel_2+\parallel\Mat{A}^{\mathrm{T}}\parallel_2)<1.
\end{eqnarray}
Similarly, $\parallel\Mat{A}_{skew}\parallel_2<1$.

\bibliography{main}

\end{document}